\title[Self-Tuning Bandits over Unknown Covariate-Shifts]{Self-Tuning Bandits over Unknown Covariate-Shifts}
\newcommand{\skn}[1]{{{#1}}}
\newcommand{\sk}[1]{{#1}}
\DeclareMathOperator{\argmax}{argmax}
\newcommand{\remove}[1]{}
\newtheorem*{note}{Notation}
\let\oldnote\note
\renewcommand{\note}{\oldnote\normalfont}
\newtheorem{thm}{Theorem}
\newtheorem{prop}{Proposition}
\newtheorem{cor}{Corollary}
\newtheorem{assumption}{Assumption}
\newtheorem{defn}{Definition}
\newtheorem{rmk}{Remark}
\patchcmd{\algorithmic}{\addtolength{\ALC@tlm}{\leftmargin} }{\addtolength{\ALC@tlm}{\leftmargin}}{}{}
\newcommand\numberthis{\addtocounter{equation}{1}\tag{\theequation}}
\newlength{\strutheight}
\title{Self-Tuning Bandits over Unknown Covariate-Shifts}
\begin{document}
\maketitle

\begin{abstract}
Bandits with covariates, a.k.a. \emph{contextual bandits}, address situations where optimal actions (or arms) at a given time $t$, depend on a \emph{context} $x_t$, e.g., a new patient's medical history, a consumer's past purchases. 
While it is understood that the distribution of contexts might change over time, e.g., due to seasonalities, or deployment to new environments, the bulk of studies concern the most adversarial such changes, resulting in regret bounds that are often worst-case in nature. 

\emph{Covariate-shift} on the other hand has been considered in classification as a middle-ground formalism that can capture mild to relatively severe changes in distributions. We consider nonparametric bandits under such middle-ground scenarios, and derive new regret bounds that tightly capture a continuum of changes in context distribution. Furthermore, we show that these rates can be \emph{adaptively} attained without knowledge of the time of shift (change point) nor the amount of shift. 
\end{abstract}

\section{Introduction}
Bandits with covariates, or \emph{contextual} bandits, concern situations where the reward of an action  depends on a current context $x$, e.g., a patient's medical record (actions are treatments), or a user's profile and past history (actions are new products to propose). The problem is to maximize the total rewards of actions over time as similar contexts appear and rewards are observed over past actions. We consider the \emph{stochastic} setting where covariates and rewards are jointly distributed at any time. 

In the nonparametric version, little is assumed about the distribution of rewards over contexts, beyond \emph{Lipschitz} conditions that capture the idea that rewards should be somewhat close for nearby contexts. Now suppose contexts are drawn from a \emph{fixed} distribution; this ensures typicality, i.e., we can expect similar contexts to have appeared previously, so there is much potential to gradually learn the right actions for each context. Most recent advances have been made in nonparametric settings with a fixed distribution, with early consistency results in \citet{yang2002randomized}, minimax rates in \citet{rigollet-zeevi, perchet-rigollet}, followed by various refinements on the earlier algorithmic approaches \citep{reeve, guan-jiang}. 

However, it is understood that the distribution of contexts can change over time, e.g., seasonal changes in consumers' profile and purchases, or the extension of clinical trials to new populations. This reality has received little attention so far\footnote{We actually are not aware of any such work to date for the nonparametric setting.} in the nonparametric literature on the problem, although it has long been recognized in the more established parametric 
setting on contextual bandits, under various formalisms of often adversarial nature \citep{besbes2014nonstationary,hariri2015adapting, karnin2016multi, luo2017efficient, liu2018change, wu2018learning,chen2019nonstationary,cheung2019drift}. 

As an initial take on a nonparametric setting with changes in distributions, we focus attention to the less adversarial case of \emph{covariate-shift}, a formalism often adopted in works on \emph{domain adaptation} in classification, starting with {\citet{sugiyama2008direct, cortes2008sample, gretton2009covariate, ben2012hardness}}. For example, consider a situation where clinical trials are to be extended to a new population, or where the makeup of the underlying population changes unbeknownst to the experimenter. While the distribution on patients' profiles $X$ changes, the predictors in $X$ (e.g., biometrics, medical history) are expected to remain predictive of treatment outcomes; in other words, the conditional distribution of rewards given $X$ remains unchanged. Formally, in \emph{covariate-shift}, $Q_{Y|X} = P_{Y|X}$ but $Q_X \neq P_X$, where $P$ and $Q$ denote previous and new joint-distributions on context-reward pairs $(X, Y)$. 

\paragraph{Algorithmic and Statistical Goals.} We are interested in achievable regret in the time period corresponding to a fixed distribution $Q_X$ over contexts, right after one or multiple unknown shifts in context distribution. Let $P_X$ be a previous such distribution. Intuitively, performance under $Q_X$ depends on \emph{how far} $P_X$ is from $Q_X$, since typical contexts under $Q_X$ might not have been observed under $P_X$. In particular, prior distributions $P_X$ can bias an algorithm towards choices that are suboptimal to performance under $Q_X$ {(see Remark \ref{rem:biasedchoices} for biased choices under typical nonparametric approaches)}. We therefore aim for a procedure which \emph{not only} can automatically recover from such bias, but can draw as much benefit as possible from past covariate distributions $P_X$, whenever they are not too far from $Q_X$. \emph{As such, our analysis is more optimistic, and not only recovers the vanilla rates achievable without distributional shifts, but also integrates the information gained before the shift, resulting in faster rates.} In particular, our regret bounds tightly characterize the \emph{effective amount of past experience} contributed to $Q$ by previous runs on earlier distributions $P$'s, in terms of both the length of previous runs and the discrepancies $P_X \to Q_X$.  

We remark that the covariate-shift problem in this setting is harder than in classification where established approaches would compare observed $X\sim Q_X$ to prior observations $X\sim P_X$ to evaluate and adjust to the change. Here, however, we might not know the change point, and therefore cannot readily identify which data is which: for example, in an ongoing clinical study, or online recommender system, seasonal shifts in population makeup are unlikely to be known a priori. 
Interestingly, while one might then try and detect such change points, we show that this is not necessary. Our proposed procedure automatically \emph{adapts} to unknown change points, along with unknown levels of change in covariate distributions, while achieving regrets of near optimal order for the setting. 

\paragraph{Capturing distributional changes.} In order to admit a more natural range of covariate distributions across changes, we had to significantly relax the usual distributional conditions in prior work on nonparametric bandits. Namely, a so-called \emph{strong-density} condition was always assumed, which roughly states that the density of covariates is nearly uniform on the support and greatly simplifies the analysis with respect to margin conditions on rewards (see Remark \ref{rem:strongdensityand margin}). We instead only assume that distributions are supported inside $[0, 1]^D$, yet are able to properly capture the benefits of margins in rewards through careful integration arguments. \emph{Such new arguments are likely of independent interest even in the vanilla bandit settings with no distributional shift.} 

Finally the relation between distributions across runs is captured through a so-called \emph{transfer-exponent} adapted from earlier work on covariate-shift in classification \citep{kpotufe-martinet}.

\paragraph{Other Related Work.} There is by now an expansive literature on \emph{parametric}\footnote{The term characterizes settings displaying $O(\sqrt{n})$ regrets, due to either parametric constraints on rewards or on \emph{hindsight} baseline policies.} contextual bandits, ranging from fully adversarial to stochastic settings \citep[see e.g., ][]{hazan2007online,langford2008epoch, bubeck2012regret,  auer2016algorithm, rakhlin2016bistro}. 

In the stochastic parametric setting, the earlier cited results \citep{besbes2014nonstationary,hariri2015adapting, karnin2016multi, luo2017efficient, liu2018change, wu2018learning,chen2019nonstationary,cheung2019drift} are closest in spirit to the present work. However, they consider settings of a more adversarial nature, as their aim is to achieve regrets -- over stationary periods of length $\Delta_t$ -- of similar order $O(\sqrt{\Delta_t})$ as would have been achieved without distribution shifts; in other words, they contend that past experiences could adversarially affect regret over stationary periods, and the aim is to mitigate such adversity. In contrast, as we will see, past experience is actually useful  under covariate-shift (to a variable extent depending on shift characteristics), as long as the bandits procedure is reasonably conservative in least-observed regions of context space. In a similar vein, \citet{azar2013} considers a non-contextual setting where it is possible to benefit from previous experience. Moreover, the work assumes knowledge of the time of shift, whereas we do not.

In recent works on {\em non-stationary} bandits \citep{besbes2014nonstationary, luo2017efficient, chen2019nonstationary, cheung2019drift}, regret is expressed either in terms of the number of shifts or in terms of a budget on the total variation between subsequent distributions. The latter is somewhat more natural, as it captures a total shift in distribution rather than the number of shifts. In  a similar natural way, our regret bounds are expressed over a total shift, as captured by an \emph{aggregate transfer-exponent}, and thus do not worsen with the unknown number of shifts in distribution (see Theorem \ref{thm:multiple-shifts} of Appendix \ref{app:multiple-shifts}, while for ease of presentation, Theorem~\ref{thm:adaptive-algo-one-arm} of the main text covers the case of a single unknown shift). 

\citet{slivkins2014contextual} considers nonparametric settings with Lipschitz rewards, however, with possibly adversarial non-stochastic contexts. Given adversarial assumptions on past contexts, the work requires more conservative procedures than ours (see e.g., experimental comparison in Appendix~\ref{app:experiments}). In particular, their regret bounds do not account for the benefits of \emph{margins in rewards between arms}, which can significantly reduce achievable regrets. Nonetheless, their bounds are expressed in terms of notions of metric dimensions which can be tighter than the 
notion of \emph{box-dimension} employed in the present work; however, their resulting bounds are only clearly tighter than ours in the regime without margin ($\alpha = 0$ in our notation) since otherwise (e.g., for sufficient margin $\alpha \geq d$) our rates' exponent changes to $1/2$, i.e. no longer depend exponentially on dimension, which they cannot avoid without considering noise margin.

Finally, in the setting of active online regression with multiple domains, \citet{chen-luo-ma-zhang} establishes adaptive regret guarantees in terms of the domain dimensions and durations.

We start with a formal setup in Section \ref{sec:setup}, followed by an overview of results in Section \ref{sec:results}. Algorithms and proof ideas are discussed in Section \ref{sec:Algorithms}.

\section{Setup}
\label{sec:setup}

\subsection{Bandits With Covariate-Shift}

We consider a finite set of actions (or arms) $[K] \doteq \{1, 2 \ldots, K\}$, and let $Y \in [0, 1]^K$ denote 
the rewards of each action $i \in [K]$. We assume that the covariate $X$, {lying in} ${\cal X} \doteq [0, 1]^{D}$, is jointly 
distributed with $Y$, and we therefore assume a random independent sequence\footnote{The indexing set $\sk{\mathbb{N}}$ denotes the natural numbers excluding $0$.} of covariate-reward  pairs $\{(X_t, Y_t)\}_{t \in \sk{\mathbb{N}}}$, \emph{identically distributed} over \emph{different} and possibly unknown periods of time\footnote{We use the terms \emph{time} $t$ or \emph{round} $t$ interchangeably, the latter in the context of a procedure.}. 

\paragraph{Single-shift vs Multiple shifts.} To simplify presentation, in the main part of the paper we focus on the case of a single shift in distribution, and analyze the case of multiple shifts in the appendix. The discussion loses little generality as the multiple shift case follows easily as shown in Appendix \ref{app:multiple-shifts}. 

In the simplest case of a single shift, we assume that for some $n_P \geq 0$, possibly a priori unknown, the sequence $\{(X_t, Y_t)\}_{t \in [n_p]}$ is i.i.d. according
to a distribution $P$, while $\{(X_t, Y_t)\}_{t > n_p}$ is i.i.d. according to a \emph{new} distribution $Q$ with different marginals. We will be interested in performance under $Q$, i.e., after such a shift. 


\begin{assumption}[Covariate shift]\label{def:covariateshift}
While the distribution of covariates $X_t$ might change overtime, the conditional distribution of $Y_t\mid X_t$ remains fixed (i.e., in our context $Q_X \neq P_X$, but $P_{Y\mid X} = Q_{Y \mid X}$). In particular, the aim is to \emph{maximize} expected rewards conditioned on $X_t$; this is captured through the \textbf{fixed regression function} $f: {\cal X} \to [0, 1]^K$ as 
	$f^{i}(x)\doteq\mathbb{E}\  (Y^{i}|X=x), \, i \in [K]$.  
\end{assumption}

In the bandits setting, a so-called \emph{policy}\footnote{We remark that the term
\emph{policy} is often used to denote a mapping from state (or covariate) to action; here we simply equate it with any decision procedure taking action based on current and past observations.} (or bandit procedure) chooses actions at each round $t$, based on observed covariates (up to round $t$) and passed rewards, whereby at each round $t$ only the rewards $Y_t^i$ of chosen actions $i$ are revealed. We say an arm $i$ is pulled if action $i$ is chosen by the policy. We adopt the following formalism.

\begin{defn}[Policy]
A policy $\pi \doteq \{ \pi_t\}_{t \in \sk{\mathbb{N}}}$ is a random sequence of functions 
$\pi_t: {\cal X}^t \times [K]^{t-1} \times [0, 1]^{t-1} \to [K] $. In an abuse of notation, in the context of a sequence of observations till round $t$, we will let $\pi_t \in [K]$ also denote the action chosen at round $t$. In the case of a \textbf{randomized} policy, i.e., where $\pi_t$ in fact maps to distributions on $[K]$, 
we will still let $\pi_t \in [K]$ denote the (random) action chosen at round $t$.

\end{defn}

We let $\textbf{X}_t \doteq \{X_s\}_{s\leq t}, \textbf{Y}_t \doteq \{Y_s\}_{s\leq t}$ denote the observed covariates and (observed and unobserved) rewards from rounds $1$ to $t$.
The performance of a policy is evaluated as follows (visualized in Figure~\ref{fig:nq}). 

\begin{defn}[Cumulative regret]
	Define the regret between rounds {$n_P < n$} of a policy $\pi$, as \[
	\textbf{R}_{n_P,n}(\pi)\doteq
	\sum_{t = n_P +1}^{n} \max_{i \in [K]} \left(f^i(X_t) -  f^{\pi_t}(X_t)\right ). 
    \]
In our context of a shift to $Q$, we often will use the short notation $\textbf{R}^Q_n (\pi)$ to denote $\textbf{R}_{n_P,n}(\pi)$. 
\end{defn}

\begin{figure} 

\centering 
\includegraphics[height=2cm]{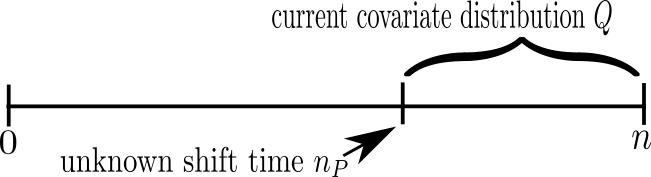}
\caption{\noindent \small We are interested in the regret $\textbf{R}_n^Q(\pi)$, w.r.t. a static distribution phase $Q$, restricted to the corresponding rounds.}
\label{fig:nq}
\end{figure}

The oracle policy $\pi^*$ refers to the strategy that maximizes the expected reward at any round $t$, and is given by 
$	\pi_t^*(X_t) \in {\argmax}_{i \in [K]} \, f^{i}(X_t) $.
The regret of a policy $\pi$ is therefore the \emph{excess} expected reward of $\pi^*$ relative to $\pi$ over ${\bf X}_n$. We seek a policy $\pi$ that minimizes $\mathbb{E}_{\textbf{X}_n,\textbf{Y}_n}\ \textbf{R}^Q_n(\pi)$. 

We emphasize that, while we will be interested in regret over particular periods $n_P+1: n$ (corresponding to a fixed $Q$), it is understood by definition that $\pi$ runs starting at $t=1$, and $\textbf{R}^Q_n(\pi) \doteq \textbf{R}_{n_P, n}(\pi)$ therefore depends on prior decisions up till time $n_P$.  
Finally, usual bounds for stationary distributions are recovered simply by letting $n_P =0$. 

\subsection{Nonparametric Setting}\label{sec:nonparametric}
Our main assumptions below are stated under the $\ell_\infty$ norm on $[0, 1]^{D}$ for convenience, as we build our procedures $\pi$ over regular grids of $[0, 1]^{D}$. It should be clear however that the relevant conditions hold under any norm (e.g., any $\ell_p$, $p\geq 1$) when they hold under $\ell_\infty$, by the equivalence of $\mathbb{R}^d$ norms. 

\paragraph {\bf $\bullet$ Standard Assumptions and Conditions.} 

We assume, \sk{as in prior work} on nonparametric contextual bandits \citep{rigollet-zeevi, perchet-rigollet, slivkins2014contextual, reeve, guan-jiang}, that the regression function is Lipschitz, with \sk{some known} upper-bound $\lambda$ on the Lipschitz constant (often simply assumed to be $1$).  

\begin{assumption}[Lipschitz $f$]\label{assumption:lipschitz}
	There exists $\lambda >0$ such that for all $i\in[K]$ and $x,x'\in\mathcal{X}$,
\begin{equation}\label{eqn:smoothness}
	|f^{i}(x)-f^{i}(x')|\leq \lambda\|x-x'\|_\sk{\infty}.
\end{equation}
\end{assumption}

Furthermore, the difficulty of detecting the optimal arm $\pi^*(x)$ at any $x$ is parametrized through the following \emph{margin} condition of $f$ w.r.t. $Q$, originally due to \sk{\citet{tsybakov2004optimal}} (for nonparametric classification).  

\begin{defn}[Margin Condition]\label{assumption:margin}
Let $f^{(1)}(x), f^{(2)}(x)$ denote the highest and second highest values of $f^i(x), i \in [K]$, if they are not all equal; otherwise let $f^{(1)}(x) = f^{(2)}(x)$ be that value.  There exists $\delta_0>0,C_{\alpha}>0$ so that $\forall\delta\in[0,\delta_0]$,
\begin{equation}\label{eqn:margin}
	Q_X(0<|f^{(1)}(X)-f^{(2)}(X)| \leq \delta)\leq C_{\alpha}\delta^{\alpha}.
\end{equation}
\end{defn}
In particular, the above is always satisfied with at least $\alpha =0$. 
Intuitively, the larger the margin $f^{(1)}(x)-f^{(2)}(x)$ at $x$, the easier it is to detect the best arm, in the sense that a rough approximation to $f$ is sufficient. The above condition, common in prior work on nonparametric bandits, encodes the margin distribution under $Q_X$. Interestingly, we need no assumption on the margin distribution under $P_X$, although our setting assumes that the procedure $\pi$ is first ran on covariates $X_t \sim P_X, t \leq n_P$; in fact, we will see that we only need to ensure that $\pi$ maintains good choices of arms for every potential $x\in \cal X$, along with sufficient arm pulls, up till the distribution shifts at round $n_P +1$.

\paragraph {\bf $\bullet$ A Relaxed Distributional Condition.}
In this work we only assume that marginals $P_X, Q_X$ are supported in $[0, 1]^D$. 
The following 
definition then serves to capture the \emph{complexity} of a support. 
The quantity $d$ therein, called a \emph{box dimension}, can be viewed as capturing the intrinsic dimension of the support ${\cal X}_Q$: low-dimensional supports intersect fewer cells of a partition of $[0, 1]^D$. We need no direct condition on past distributions $P_X$, as we simply need to capture their discrepancy from $Q_X$. 

\begin{defn}[Support complexity]\label{defn:bcn}
For $r \in \left\{2^{-i}: i \in \mathbb{N}\right \}$, 
     let ${\cal P}_r$ denote the regular partition of  $[0,1]^D$ into {hypercubes of side length $r$}. Denote by $\mathcal{G}(\mathcal{X}_Q,r)$ the number of cells of ${\cal P}_r$ which intersect $\mathcal{X}_Q$. We say that 
        $\mathcal{X}_Q$ has $(C_d,d)$ {\bf box dimension}, for $d \geq 1$, $C_d \geq 1$, if $\forall r \in (0,1]$, $\mathcal{G}(\mathcal{X}_Q, r) \leq C_d \cdot r^{-d}$.
\end{defn}%
The condition clearly holds for at least $d = D$, and would imply faster rates when $d \ll D$. It has been employed as a measure of the complexity of a data space, and we refer the reader, e.g., to \cite{scott2006minimax, clarkson2006nearest} for detailed expositions.

\emph{We emphasize that we do not assume knowledge of such support complexity $d$}. 

\begin{rmk}[Strong Density Condition] \label{rem:strongdensityand margin}
Prior cited work on nonparametric contextual bandits invariably assumed that $Q_X$ is \textbf{nearly} uniform on its support, namely that for any ball $B$ of radius $r$, 
$Q_X(B) \gtrsim r^d$ for some $d$ ($d = D$ is usual). This in particular ensures that, \textbf{uniformly over the data space}, any small region receives a sufficient amount of covariates $X$'s for good estimation of the expected reward $f$ at candidate arms; this simplifies both algorithmic design and analysis. This is not the case here, and we therefore require refined algorithmic choices and integration arguments.
\end{rmk}

\noindent {$\bullet$ \bf Quantifying the Shift from $P$ to $Q$.} 
Next we aim to quantify how much the earlier covariate distribution $P_X$ differs from the shift $Q_X$. Intuitively, $P_X$ has information on $Q_X$ if it yields data useful to $Q_X$, in other words, if it has sufficient mass in regions of large $Q_X$ mass. The next condition, adapted from recent work \citep{kpotufe-martinet} on classification, parametrizes such intuition. 

\begin{defn}\label{assumption:transfer-exponent}
 We call $\gamma\geq 0$ a {\bf transfer exponent} between $P_X$ and $Q_X$, if 
$\exists\ C_\gamma \geq  0$ such that, for all $\ell_\infty$ balls $B\subset [0, 1]^{D}$ of diameter $r\in(0, 1]$, we have 
    $P_X(B)\geq C_{\gamma} \cdot r^{\gamma}\cdot Q_X(B).$
\end{defn}

Note that the above condition always holds with at least $\gamma = \infty$ (this occurs, e.g., when $Q_X$ has mass outside $P_X$'s support). The larger the shift, the larger $\gamma$, with $\gamma = 0$ capturing the mildest such shifts in covariate distribution. Some examples, borrowed from \citet{kpotufe-martinet}, are given in Figure \ref{fig:gamma}. 
As we will see, the transfer exponent $\gamma$ manages to tightly capture a continuum of easy to hard shifts in covariate distributions as evident in achievable regret rates $\mathbf{R}^Q_n$ over the period corresponding to the current distribution $Q_X$.

\begin{figure} 

\centering 
\includegraphics[height=3.6cm]{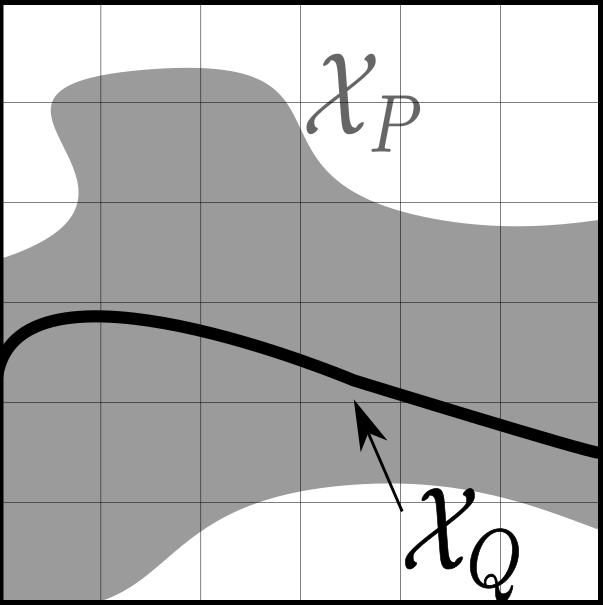}\hspace{0.9cm}
\includegraphics[height=3.7cm]{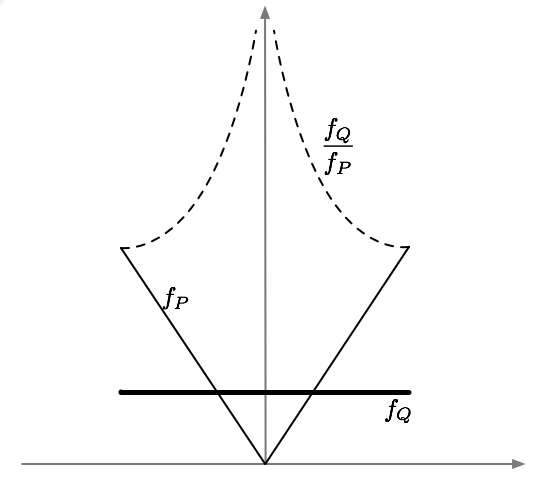}\hspace{0.9cm}
\includegraphics[height=3.7cm]{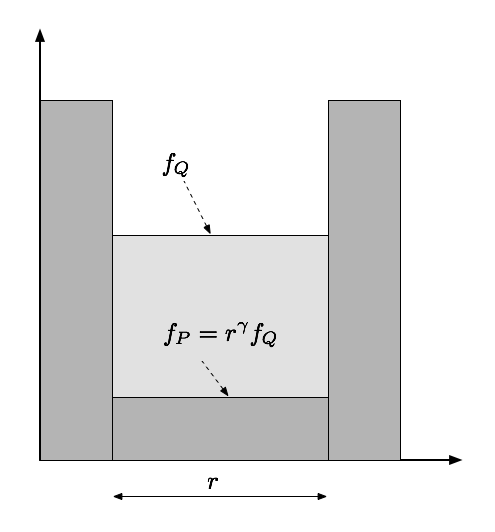}
\caption{\noindent \small Some settings with $0< \gamma < \infty$. Left: $Q_X$ is of smaller dimension $d$, while $P_X$ is of higher dimension $d_P$ (with gray support); $\gamma$ can then be shown to be $(d_P - d)$ under mild regularity. Center: the density $f_P\propto |x|^\gamma$ goes fast to $0$, while $f_Q$ is uniform; $f_Q/f_P$ then diverges (so \emph{density ratios}, and \emph{$f$-divergences} are ill defined). Right: $P_X$ moves mass away from regions of large $Q_X$ mass, with relative densities captured by $\gamma$ and the size of the region ($r$).}
\label{fig:gamma}
\end{figure}

\section{Results Overview}
\label{sec:results}

A common algorithmic approach in nonparametric contextual bandits, starting from earlier work 
\citep{rigollet-zeevi, perchet-rigollet}, is to maintain \emph{tree}-based (regression) estimates $\hat f_t$ of the expected reward function $f$, so that at any time $t$, upon observing $X_t$, only those arms $i$ with $f^i(X_t)$ close to $f^{(1)}(X_t)$ might be played. This assumes a good estimate of $f$ at any time $t$, which in the context of tree-based estimates boils down to choosing an optimal \emph{level} in the tree -- where each level $r$ corresponds to a piecewise-constant regression estimate $\hat f_t$ over bins of side-length $r$ in ${\cal X}$. 

\begin{rmk}[Crucial algorithmic choices can be biased] \label{rem:biasedchoices}
 Assuming a \textbf{strong density} condition on $Q_X$ (where balls of radius $r$ have mass of order $r^d$ for some $d$), in the usual setting with a stationary distribution $Q$, an optimal level $r = r_t$ is chosen as $O(t^{-1/(2+d)})$ 
 yielding optimal regression that would result in the best provable regret rates \citep{rigollet-zeevi}. However, suppose in our context with distribution drift that covariates were at first distributed as $P_X$, forcing a different choice of level 
 $O(t^{-1/(2+d_P)})$, i.e., under a different covariate \emph{dimension} $d_P$. This would be suboptimal under $Q_X$, which we wish to quickly detect. 
\end{rmk} 

Furthermore, under our relaxed conditions on $Q_X$, it turns out that the optimal oracle choice of level $r_t$ (even for a fixed distribution) is of the form 
$t^{-1/(2+\alpha + d)}$ -- i.e., higher levels in terms of \emph{unknown} margin parameter $\alpha$  -- so as to ensure sufficient samples near decision boundaries. 

The right choice of level is further complicated in context with distributional shift: 
if we want to optimally benefit from past observations under $P$, after the shift to $Q$, 
the optimal choice of level has to also account for the discrepancy between $P$ and $Q$ (as captured here by $\gamma$). 
As it turns out, a first analysis (not shown, but implicit in our arguments) reveals that the optimal choice of level is then of the form $\skn{r_t(\gamma, n_P) \doteq O(\min\{n_P^{-1/(2+\alpha+d + \gamma)}, (t-n_P)^{-1/(2+\alpha+d)}\})}$, i.e., now also depends on \emph{unknown} discrepancy level $\gamma$, and potentially \emph{unknown} change point $n_P$. A main aim is therefore to design a procedure which, without such knowledge, still makes near optimal adaptive choices of levels at any time $t$.


Our adaptive strategies, detailed in Section \ref{sec:Algorithms}, rely directly on the relative proportions of samples observed on a path from the root of a tree $T$ down to a leaf containing $X_t$. Roughly, let $n_r(X_t)$ denote the covariate count in the bin containing $X_t$ at level $r$ (by time $t$) locally at $X_t$. We then choose, roughly, the smallest level $r$ such that $n_r^{-1}(X_t) \leq r^2$. For intuition, this choice roughly balances regression variance (controlled by $n_r^{-1}$) and bias (controlled by $r$). Such a choice stems from prior insights on adaptive tree-based regression with fixed data distribution but unknown $d$  \citep[see e.g. ][]{kpotufe2012tree}, which we show here to yield a regression rate similar to that of the oracle choice $r_t(\gamma, n_P)$. As a result, we can automatically adapt to unknown problem parameters $d, \gamma, \alpha, n_P$ while the algorithm remains agnostic to underlying shifts in covariate distribution.

However, such an adaptive choice introduces nontrivial book-keeping issues which complicate the analysis. Namely, the number of observed rewards for a given arm $i$ -- which drive the estimates $\hat f$ -- might significantly differ from the number of covariates $n_r(X_t)$ in a bin, as we eliminate suboptimal arms over time. 
Further care is thus required for such book-keeping on observed rewards (or \emph{arm pulls}). 

\remove{
\skn{We will first consider a simplified bandit setting which alleviates book-keeping, namely a \emph{multiple-play} variant where multiple arms might be pulled at once for every $X_t$; here we still have to eliminate suboptimal arms so the above problem remains, but can be shown to be milder. This will serve as a \emph{warmup} procedure that helps lay down much of the key intuition towards adaptation to unknown distribution shift parameters. Much of our analysis overview in the main text centers on the more intuitive multiple-play setting for brevity. 
We then show how to extend such a multiple-play procedure to a \emph{single-play} variant where only a single arm is pulled in each round. This is done by properly randomizing arms to be pulled to ensure a fair relative distribution of arm pulls.} 
}

\paragraph{Adaptive Bandits.}
Our main theorem considers an adaptive policy $\pi$ given by the randomized procedure of Algorithm~\ref{algo:one-arm} from Section~\ref{sec:Algorithms}, which operates with a \emph{confidence parameter} $\delta \in (0, 1)$. While $\pi$ is randomized to simplify choices of arms to pull, we note that a  deterministic variant with similar guarantees is feasible (see Remark~\ref{rem:scheduling}).

As previously mentioned, the first theorem below is stated under a single change in distribution $P_X \to Q_X$, for simplicity of presentation, while a similar result for multiple (unknown) changes in distribution is given in Appendix \ref{app:multiple-shifts}. 

Just as in previous work for the stationary case, the margin parameter $\alpha$ needs not be known, while in addition here, we do not need to know the drift parameters $n_P, \gamma$, nor the dimension $d$ of $Q_X$ either. The expectation in the statement below is over the entire sequence ${\bf X}_n, {\bf Y}_n\sim P^{n_P}\times Q^{n-n_P}$, plus the randomness in $\pi$.

\begin{thm}\label{thm:adaptive-algo-one-arm}
Let $\pi$ denote the procedure of Algorithm~\ref{algo:one-arm}, ran, with parameter $\delta \in (0, 1)$, up till time $n>n_P \geq 0$, with the change point $n_P$ possibly unknown. Suppose $P_X$ has unknown transfer exponent $\gamma$ w.r.t. $Q_X$, that $Q_X$ has $(C_d,d)$ box dimension, and that the average reward function $f$ satisfies a margin condition with unknown $\alpha$ under $Q_X$.
Let $n_Q \doteq n-n_P$ denote the (possibly unknown) number of rounds after the drift, i.e., over the phase $X_t\sim Q_X$. We have for some $C>0$:
\[
		\mathbb{E}\  \textbf{R}_n^Q(\pi)\leq Cn_Q\left[\min\left(\left(\frac{K\log\left(K/\delta\right)}{n_P}\right)^{\frac{\alpha+1}{2+\alpha+d+\gamma}},\left(\frac{K\log\left(K/\delta\right)}{n_Q}\right)^{\frac{\alpha+1}{2+\alpha+d}}\right) + \frac{K\log\left(K/\delta\right)}{n_Q} + n\delta\right]
	\]
\end{thm}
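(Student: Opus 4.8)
The plan is to bound the per-round regret at each time $t > n_P$ by the quantity appearing inside the brackets, then sum over the $n_Q$ rounds after the drift. First I would fix a round $t$ and the observed covariate $X_t$, and consider the adaptive level $r = r_t(X_t)$ selected by Algorithm~\ref{algo:one-arm} — the smallest level on the root-to-leaf path with local covariate count $n_r(X_t)$ satisfying $n_r^{-1}(X_t) \lesssim r^2$. The first key lemma should be a \emph{bias-variance} statement: conditioned on the history and on the event that suboptimal arms have not been wrongly eliminated, the regression estimate $\hat f_t^i(X_t)$ deviates from $f^i(X_t)$ by at most $O(r_t(X_t))$ up to logarithmic confidence terms, so that any arm $i$ the algorithm still considers playable has $f^{(1)}(X_t) - f^i(X_t) \lesssim r_t(X_t)$. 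This gives a per-round regret bound of the form $\mathbb{E}\left[r_t(X_t) \cdot \mathbbm{1}\{\text{margin} \lesssim r_t(X_t)\}\right]$ plus a failure term $n\delta$ from the confidence bounds failing over the whole run.

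The second — and I expect hardest — step is the \emph{integration argument} that converts the random adaptive level $r_t(X_t)$ into the stated rate. This is where the relaxed distributional condition (no strong density, only box dimension $d$) bites: I cannot assume every small cell gets mass $\gtrsim r^d$, so I must argue carefully about how covariate counts accumulate across cells of the partition $\mathcal{P}_r$. The idea is to show that, up to the adaptive threshold, the level $r_t(X_t)$ is with high probability no larger than the oracle level $r_t(\gamma, n_P) = O\!\left(\min\{n_P^{-1/(2+\alpha+d+\gamma)}, (t-n_P)^{-1/(2+\alpha+d)}\}\right)$ — here the transfer exponent enters because $P_X$-mass in a radius-$r$ ball is at least $C_\gamma r^\gamma Q_X$-mass, so the $n_P$ samples from $P$ contribute an effective count that degrades by a factor $r^\gamma$. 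Then I bound $\mathbb{E}_{X_t \sim Q_X}\!\left[r_t(X_t)\,\mathbbm{1}\{0 < f^{(1)}(X_t) - f^{(2)}(X_t) \lesssim r_t(X_t)\}\right]$ by partitioning $\mathcal{X}_Q$ into cells of side $r$, using the box-dimension bound $\mathcal{G}(\mathcal{X}_Q, r) \leq C_d r^{-d}$ to count cells and the margin condition $Q_X(0 < |f^{(1)} - f^{(2)}| \leq \delta) \leq C_\alpha \delta^\alpha$ to control the measure of the boundary region, via a peeling/dyadic decomposition over the possible values of $r_t(X_t)$. The combination of these two bounds yields the $\left(\frac{K\log(K/\delta)}{m}\right)^{(\alpha+1)/(2+\alpha+d+\cdot)}$ form, with the $K$ and $\log(K/\delta)$ factors tracking the number of arms and the union bound over arm-elimination confidence events.

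The third step handles the \emph{arm-pull book-keeping}: because suboptimal arms are eliminated adaptively, the number of observed rewards for a playable arm $i$ in a cell may lag the covariate count $n_r(X_t)$. I would argue that the randomized pulling schedule of Algorithm~\ref{algo:one-arm} ensures a fair relative allocation — roughly, each still-active arm is pulled a constant fraction of the time covariates land in the cell — so that the effective sample size driving $\hat f_t^i$ is within a factor depending only on $K$ of $n_r(X_t)$; this is the source of the leading $K$ inside the rate and of the additive $K\log(K/\delta)/n_Q$ term, which accounts for rounds early in the $Q$-phase (or in under-sampled cells) where no arm has yet been eliminated and the algorithm simply cycles through all $K$ arms. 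For the single-shift statement I only need that, at time $n_P$, the procedure has maintained good arm choices and adequate pull counts uniformly over $\mathcal{X}$ — which, as the text notes, requires \emph{no} margin assumption under $P_X$, only the conservative-in-least-observed-regions property — so the $P$-phase data can be reused verbatim under $Q$. Finally I collect terms: summing the per-round bound over $t = n_P+1, \dots, n$ pulls out the factor $n_Q$, the $\min$ survives because at each $t$ the adaptive level is at most the better of the two oracle scalings, and the $n\delta$ failure probability contributes $n_Q \cdot n\delta$ after the trivial bound of $1$ on per-round regret on the failure event. The main obstacle throughout is executing the integration in Step 2 cleanly without a strong-density crutch — in particular making the dyadic peeling over random adaptive levels interact correctly with both the box-dimension cell count and the margin exponent, and tracking the transfer exponent $\gamma$ through the effective-sample-size bound for the pre-shift data.
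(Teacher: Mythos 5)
Your overall architecture matches the paper's: a bias-variance bound on the regression error at the adaptive level, a comparison of $r_t$ to an oracle level, an integration against the margin distribution, a randomization argument for the arm-pull counts, and a final sum over $t$. However, there is a genuine gap at the heart of your Step 2. You propose to show that $r_t(X_t)$ is ``with high probability no larger than the oracle level $r_t(\gamma,n_P)$.'' Under the box-dimension assumption alone this is \emph{false}: Definition~\ref{defn:bcn} only upper-bounds the \emph{number} of cells meeting $\mathcal{X}_Q$ and gives no lower bound on the mass of the particular cell containing $X_t$, so $T_{r_t^*}(X_t)$ can have arbitrarily small $Q_X$- and $P_X$-mass, the covariate count $n_{r_t^*}(X_t)$ can then fall far below the threshold $8K\log(K/\delta)/(r_t^*)^2$, and the algorithm is forced to pick $r_t \gg r_t^*$. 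The statement you want is exactly what the \emph{strong density} assumption buys (it is Lemma~\ref{lem:acb-bound} in Appendix~\ref{app:bm}), and the whole point of the relaxed setting is that it fails. The paper instead only establishes the weaker, location-dependent bound $\lambda r_t \lesssim \sigma_t^* + \lambda r_t^*$ with $\sigma_t^* \propto \sqrt{K\log(K/\delta)/\max(\tau Q_X(T_{r_t^*}(X_t)), n_P P_X(T_{r_t^*}(X_t)))}$ (Propositions~\ref{prop:rt-minimizer} and \ref{prop:optimal-bias-variance}), and then must integrate the unstable random variable $\sigma_t^*$ against the margin distribution. Your subsequent ``peeling over the possible values of $r_t(X_t)$'' gestures at the right idea but does not supply the quantitative mechanism that makes it work: the paper splits at a threshold $a(X_t) \doteq Q_X(T_{r_t^*}(X_t))^{-1/2} \lessgtr \sqrt{\epsilon}$, uses the margin condition below the threshold, controls the tail above it by Chebyshev via the second-moment bounds $\mathbb{E}_{Q_X}[Q_X(T_r(X))^{-1}] \leq C_d r^{-d}$ and $\mathbb{E}_{Q_X}[P_X(T_r(X))^{-1}] \lesssim r^{-d-\gamma}$ (Proposition~\ref{prop:tail-bound} --- this is where the box dimension and the transfer exponent actually enter the rate), and then optimizes over $\epsilon$. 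Without identifying these moment bounds and the threshold optimization, the contribution of low-mass cells where $r_t \gg r_t^*$ is not controlled and the claimed exponents do not follow.

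A secondary, smaller omission: in Step 3 you assert that each active arm is pulled a constant fraction of the covariate arrivals in a cell, but justifying $m_t(B,i) \gtrsim n_{r_t}(X_t)/K$ requires first ruling out the scenario in which some arm in $\mathcal{I}_B$ was eliminated earlier in a \emph{descendant} of $B$ (and hence under-pulled relative to the others); the paper handles this with the no-level-skipping property (Lemma~\ref{lem:no-skip}), which guarantees a bin is always selected before any of its descendants, together with a coupling to an independently randomized pull count so that the Chernoff bound is legitimate despite $\mathcal{I}_B$ being a random, history-dependent set. You would need both ingredients for the book-keeping to close.
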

The following corollary is immediate. 
\begin{cor}\label{cor:one-arm-regret}
	Under the setup of Theorem~\ref{thm:adaptive-algo-one-arm}, letting $\delta=O(1/n^2)$ yields: \[
		\mathbb{E}\  \textbf{R}_{n}^Q(\pi)\leq Cn_Q  \left[\min\left(\left(\frac{K\log(Kn)}{n_P}\right)^{\frac{\alpha+1}{2+\alpha+d+\gamma}},\left(\frac{K\log(Kn)}{n_Q}\right)^{\frac{\alpha+1}{2+\alpha+d}}\right) + \frac{K \log(Kn)}{n_Q}\right].
	\]
\end{cor}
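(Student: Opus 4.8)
The plan is to reduce the analysis to a per-round (per-$X_t$) regret bound and then integrate over the covariate space. First I would set up the tree-based machinery: fix the procedure's binary partition tree $T$ over $[0,1]^D$, and for a query point $x$ at round $t$ let $n_r(x)$ be the (cumulative) count of covariates fallen into the cell of side-length $r$ containing $x$, and let $\hat f_t^i$ be the empirical average of observed rewards for arm $i$ in that cell. The first step is a standard-but-careful deviation bound: on a high-probability event (of probability $\ge 1-n\delta$, obtained via a union bound over the at most $n$ rounds and $K$ arms and using the $\delta$ confidence radii built into Algorithm~\ref{algo:one-arm}), every maintained estimate $\hat f_t^i(x)$ satisfies $|\hat f_t^i(x)-f^i(x)| \lesssim \lambda r + \sqrt{\log(K/\delta)/n_r(x)}$ for the level $r$ the algorithm selects, \emph{provided} we can lower bound the number of \emph{arm pulls} for any not-yet-eliminated arm $i$ in terms of $n_r(x)$. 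This last point is exactly the book-keeping issue flagged in the text — it is handled by the randomization in the single-play procedure ensuring that every surviving arm is pulled a fair fraction of the time a cell is visited; I would cite/prove a lemma to the effect that a surviving arm's pull count in a cell is at least a constant fraction (depending on $K$) of $n_r(x)$, up to lower-order logarithmic slack.

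Second, I would show that the \emph{adaptive} level selection — pick the smallest $r$ with $n_r^{-1}(x) \le r^2$ — achieves a regression error of the same order as the oracle level $r_t(\gamma, n_P) \doteq \min\{n_P^{-1/(2+\alpha+d+\gamma)}, n_Q^{-1/(2+\alpha+d)}\}$ would. The key input here is a counting estimate: the cell of side-length $r$ around a typical $X_t\sim Q_X$ receives, in expectation, at least $n_P\cdot C_\gamma r^\gamma Q_X(B_r(x)) + n_Q\cdot Q_X(B_r(x))$ covariates — the first term by the transfer-exponent condition (Definition~\ref{assumption:transfer-exponent}) applied to pre-shift data, the second from post-shift data — and concentration (Bernstein / multiplicative Chernoff) makes this hold with high probability uniformly over the relevant cells, whose number is controlled by the $(C_d,d)$ box dimension. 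Balancing $n_r^{-1}\asymp r^2$ against this count, and matching against the contribution $Q_X(B_r(x)) \gtrsim$ (local mass), yields that the selected $r$ is, locally, no worse than $r_t(\gamma,n_P)$ up to constants and the $K\log(K/\delta)$ factor. I expect \textbf{this step — uniform control of local covariate counts under the relaxed distributional condition, without a strong-density assumption} — to be the main obstacle, since without near-uniformity the local mass $Q_X(B_r(x))$ varies wildly across $x$ and one must argue that the adaptive rule self-corrects pointwise rather than relying on a single global level.

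Third is the integration argument that converts the pointwise regret into the stated cumulative bound. On the good event, at round $t$ the instantaneous regret $\max_i f^i(X_t) - f^{\pi_t}(X_t)$ is nonzero only if the margin $f^{(1)}(X_t)-f^{(2)}(X_t)$ is smaller than a constant multiple of the estimation error $\varepsilon_t(X_t) \lesssim r_t + \sqrt{\log(K/\delta)/n_{r_t}(X_t)} \asymp r_t(\gamma,n_P)$ (the two balanced terms), in which case the regret is itself at most $O(\varepsilon_t)$. Writing $\mathbb{E}[\text{regret}_t] \le \mathbb{E}\big[\varepsilon_t \cdot \mathbbm{1}\{0<|f^{(1)}(X_t)-f^{(2)}(X_t)|\le c\,\varepsilon_t\}\big]$ and using the margin condition $Q_X(0<|f^{(1)}-f^{(2)}|\le\delta)\le C_\alpha\delta^\alpha$, this expectation is $O(\varepsilon_t^{1+\alpha}) = O\big(r_t(\gamma,n_P)^{1+\alpha}\big)$ — this is where the exponent $\alpha+1$ in the theorem comes from. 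Summing over the $n_Q$ post-shift rounds gives $n_Q\cdot r_t(\gamma,n_P)^{1+\alpha}$, i.e. the min of the two power terms; the additive $K\log(K/\delta)/n_Q$ term absorbs the rounds where the cell around $X_t$ is essentially empty (so no estimate is trusted and worst-case unit regret is paid, but only $O(K\log(K/\delta))$ such rounds occur in total, distributed as a $1/n_Q$ per-round average), and the $n\delta$ term accounts for the failure event where regret is bounded trivially by $n_Q \le n$. I would carry out the integration at the level of the root cell and then refine down the tree, or equivalently integrate the bound $\int \varepsilon(x)^{1+\alpha}\,dQ_X(x)$ using a peeling/layer-cake decomposition over dyadic scales of the local count $n_r(x)$, invoking box dimension to bound the number of cells at each scale — the routine but slightly delicate calculation I will not expand here. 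Corollary~\ref{cor:one-arm-regret} then follows by plugging $\delta = O(1/n^2)$, which makes $n\delta = O(1/n)$ negligible against the other terms.
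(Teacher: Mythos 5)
Your proposal is correct and follows the paper's route: the corollary is obtained exactly as in your final sentence, by substituting $\delta = O(1/n^2)$ into Theorem~\ref{thm:adaptive-algo-one-arm} so that $\log(K/\delta)=O(\log(Kn))$ and the $n_Q\cdot n\delta$ term becomes $O(1)$, which is absorbed into the $K\log(Kn)$ term. The bulk of your write-up is really an outline of the proof of Theorem~\ref{thm:adaptive-algo-one-arm} itself (matching the paper's Appendix~\ref{app:upper-bound-single}: bias--variance bound, arm-pull book-keeping via the randomization, comparison of the adaptive $r_t$ to the oracle level, and integration against the margin distribution with a threshold/layer-cake split on the local mass), which the corollary is entitled to cite as given.
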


The above rates interpolate between two terms: one involving $n_P$ past observations and the drift parameter $\gamma$, the other involving $n_Q$. This second term is of the form $n_Q^{1-\frac{\alpha + 1}{2+ \alpha + d}}$ and is attained by the adaptive $\pi$ when there is no drift, i.e., for $n_P = 0$. Note that, under the \emph{strong density} assumptions of \cite{rigollet-zeevi, perchet-rigollet}, the regret would have been $n_Q^{1-\frac{\alpha + 1}{2+ d}}$, i.e., smaller due to the easier setting \skn{(our rates for the same algorithm would in fact be of the same order under the strong density assumption; see Theorem \ref{thm:adaptive-algo-one-arm-dm} of Appendix \ref{app:bm})}. 

The regret of $n_Q^{1-\frac{\alpha + 1}{2+ \alpha + d}}$, for $n_P =0$, is however tight under our relaxed assumptions on $Q_X$: this becomes evident by noticing that the corresponding average regret of $n_Q^{-\frac{\alpha + 1}{2+ \alpha + d}}$ matches minimax lower-bounds of  \cite{audibert-tsybakov} for classification under equivalent nonparametric conditions (see Remark \ref{rem:lowerbounds} below). 

For $n_P>0$, the regret, interpolating both terms, can be rewritten as 
$n_Q\cdot {\left (n_P^{d_\gamma} + n_Q\right)}^{-1 \land \frac{\alpha + 1}{2+\alpha+d}}$ for $d_\gamma = (2+\alpha+d)/(2+\alpha+ d+ \gamma)$; in other words $n_P^{d_\gamma}$ can be viewed as the effective amount of \emph{past experience} 
contributed despite the drift; the quantity $n_P^{d_\gamma}$ is largest when $\gamma = 0$, lowering regret, and vanishes as $\gamma \to \infty$, i.e., with larger discrepancy between $P_X$ and $Q_X$.
At $\gamma = \infty$, e.g., when $Q_X$ has sizable mass outside the support of $P_X$, past experience under $P_X$ can only improve constants in the regret under $Q$, as the rate defaults to what it would have been under $Q$ without distributional shift. 
Such intuition on adapting to increasing $\gamma$ is confirmed in simulations (Figure~\ref{fig:simulations}). 

\begin{cor}[total regret vs. ${\bf R}_n^Q$]\label{cor:total-regret}
	Under the setup of Theorem~\ref{thm:adaptive-algo-one-arm}, suppose also that the support of $P_X$, ${\cal X}_P$, has $(C_{d_P},d_P)$ box dimension and that the reward function $f$ additionally satisfies a margin condition with unknown $\alpha$ under $P_X$. Then, letting $\delta=O(1/n^2)$ in Algorithm~\ref{algo:one-arm}, the {\bf total regret} $\textbf{R}_{1,n}(\pi)$ is bounded as follows:
	\begin{align*}
		\mathbb{E}\,\textbf{R}_{1,n}(\pi) \leq C \left[ K\log(Kn) + n_P \left( \frac{K\log(Kn)}{n_P}\right)^\frac{\alpha+1}{2+\alpha+d_P} \right.\\
		\left. + n_Q\min\left(\left(\frac{K\log\left(Kn\right)}{n_P}\right)^{\frac{\alpha+1}{2+\alpha+d+\gamma}},\left(\frac{K\log\left(Kn\right)}{n_Q}\right)^{\frac{\alpha+1}{2+\alpha+d}}\right)\right]
	\end{align*}
	In contrast to the bound on the regret ${\bf R}_n^Q(\pi)$ considered in Theorem~\ref{thm:adaptive-algo-one-arm}, the above result is directly comparable to the total regret bounds typically considered in the literature. We note that a total regret bound can always be recovered from a bound on $\textbf{R}_n^Q(\pi)$ by setting $n_P=0$. However, it is not true that a bound on the total regret necessarily implies a bound on the regret after the time of shift.
\end{cor}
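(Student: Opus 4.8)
The plan is to split the total regret at the change point and handle each phase with Theorem~\ref{thm:adaptive-algo-one-arm}. By the definition of cumulative regret as a sum over rounds and by linearity of expectation,
\[
\mathbb{E}\,\textbf{R}_{1,n}(\pi) = \mathbb{E}\,\textbf{R}_{1,n_P}(\pi) + \mathbb{E}\,\textbf{R}_{n_P,n}(\pi),
\]
and the second summand is exactly $\mathbb{E}\,\textbf{R}_n^Q(\pi)$. For it I would simply invoke Theorem~\ref{thm:adaptive-algo-one-arm} with $\delta=O(1/n^2)$: the $n_Q\cdot\frac{K\log(K/\delta)}{n_Q}$ contribution collapses to $O(K\log(Kn))$ and the $n_Q\cdot n\delta$ contribution is $O(1)$, so this summand is at most the stated $\min$-term times $n_Q$ plus $O(K\log(Kn))$, which is the last line of the corollary.

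The substance lies in bounding $\mathbb{E}\,\textbf{R}_{1,n_P}(\pi)$, the regret incurred while $X_t\sim P_X$. Here I would use that $\pi$ (Algorithm~\ref{algo:one-arm}) is an online procedure agnostic to distributional shifts: each $\pi_t$ for $t\le n_P$ is a measurable function of $\textbf{X}_{n_P}$ and of rewards observed only on rounds $1,\dots,n_P$, so the law of $(\pi_t)_{t\le n_P}$ --- hence the expected regret over the first $n_P$ rounds --- coincides with that of running the same algorithm in a stationary world where every covariate is i.i.d.\ $P_X$ and no shift ever occurs. Under the corollary's extra hypotheses --- ${\cal X}_P$ has $(C_{d_P},d_P)$ box dimension and $f$ obeys a margin condition with exponent $\alpha$ under $P_X$ --- that stationary world meets the assumptions of Theorem~\ref{thm:adaptive-algo-one-arm} with ``$n_P$'' set to $0$, horizon equal to (our) $n_P$, and current distribution $P$; the $\min$ in the theorem's bound then reduces to its second argument with $d$ replaced by $d_P$. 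Applying the theorem once more with $\delta=O(1/n^2)$, so that $\log(K/\delta)=O(\log(Kn))$ and $n_P^2\delta=O(1)$, yields
\[
\mathbb{E}\,\textbf{R}_{1,n_P}(\pi) \le C\!\left[\, n_P\!\left(\tfrac{K\log(Kn)}{n_P}\right)^{\frac{\alpha+1}{2+\alpha+d_P}} + K\log(Kn)\,\right].
\]
Adding the two displays and merging the two $O(K\log(Kn))$ terms produces the claimed bound.

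I expect the only delicate point to be this ``relabelling'' of the $P$-phase: one must argue that the stationary guarantee encoded in the $n_P=0$ case of Theorem~\ref{thm:adaptive-algo-one-arm} applies verbatim to rounds $1,\dots,n_P$ of the shifted run, which follows from the causality of $\pi$ together with the fact that $\pi$'s rules never reference a change point. The remainder --- propagating the choice $\delta=O(1/n^2)$ through the lower-order terms and collapsing logarithmic factors --- is routine. I would also remark, as the corollary's discussion already notes, that this decomposition is one-directional: the $P$-phase bound is standalone and does not feed back into the $Q$-phase bound, whose gain from the $P$-phase is already captured inside the $\min$ against the $n_P$-dependent term in Theorem~\ref{thm:adaptive-algo-one-arm}.
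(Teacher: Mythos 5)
Your proposal is correct and is essentially the argument the paper intends (the paper states the corollary without an explicit proof): split the total regret at the change point, bound the $Q$-phase by Theorem~\ref{thm:adaptive-algo-one-arm} with $\delta = O(1/n^2)$, and bound the $P$-phase by the same theorem applied to a stationary run under $P$ (with $n_P = 0$, horizon $n_P$, dimension $d_P$, and the margin condition under $P_X$), justified by the causality of $\pi$ and the fact that its rules never reference the change point. The bookkeeping of the lower-order $K\log(Kn)$ and $n^2\delta$ terms is handled as you describe.
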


\begin{rmk}[Multiple shifts]
{As previously mentioned, the results readily extend to the case of multiple changes in distribution before time $n_P$, with $\gamma$ above replaced by a weighted average $\bar \gamma$ of transfer exponents between past $P_X$'s and the current $Q_X$ (see Theorem \ref{thm:multiple-shifts} of Appendix \ref{app:multiple-shifts}). }
\end{rmk}

\begin{rmk}[Lower Bounds]\label{rem:lowerbounds}
Finally, we note that the above rates are tight (up to $\log$ terms) in the sense that the average regret ${\left (n_P^{d_\gamma} + n_Q\right)}^{-\frac{\alpha + 1}{2+\alpha+d}}$ matches minimax lower bounds for classification under covariate-shift 
of \citep{kpotufe-martinet}. Formally, by a simple reduction via \textbf{online-to-batch} conversion, 
the contextual bandit problem is at least as hard as its classification counterpart (see discussion and Corollary \ref{cor:lower-bound} in Appendix~\ref{app:lower-bound}). 
\end{rmk}

\ifx
\begin{rmk}[Deterministic Alternative]\label{rem:scheduling}
	A non-randomized policy similar to Algorithm~\ref{algo:one-arm} can also attain the rates of Theorem~\ref{thm:adaptive-algo-one-arm}. Instead of randomizing the arm-pull on Line 12 of Algorithm~\ref{algo:one-arm}, we can instead {\em schedule} the pulls of all arms in ${\cal I}_B$ for future covariates falling in bin $B$, 
\end{rmk}

\fi

\begin{figure}
    \centering
    \includegraphics[height=4cm]{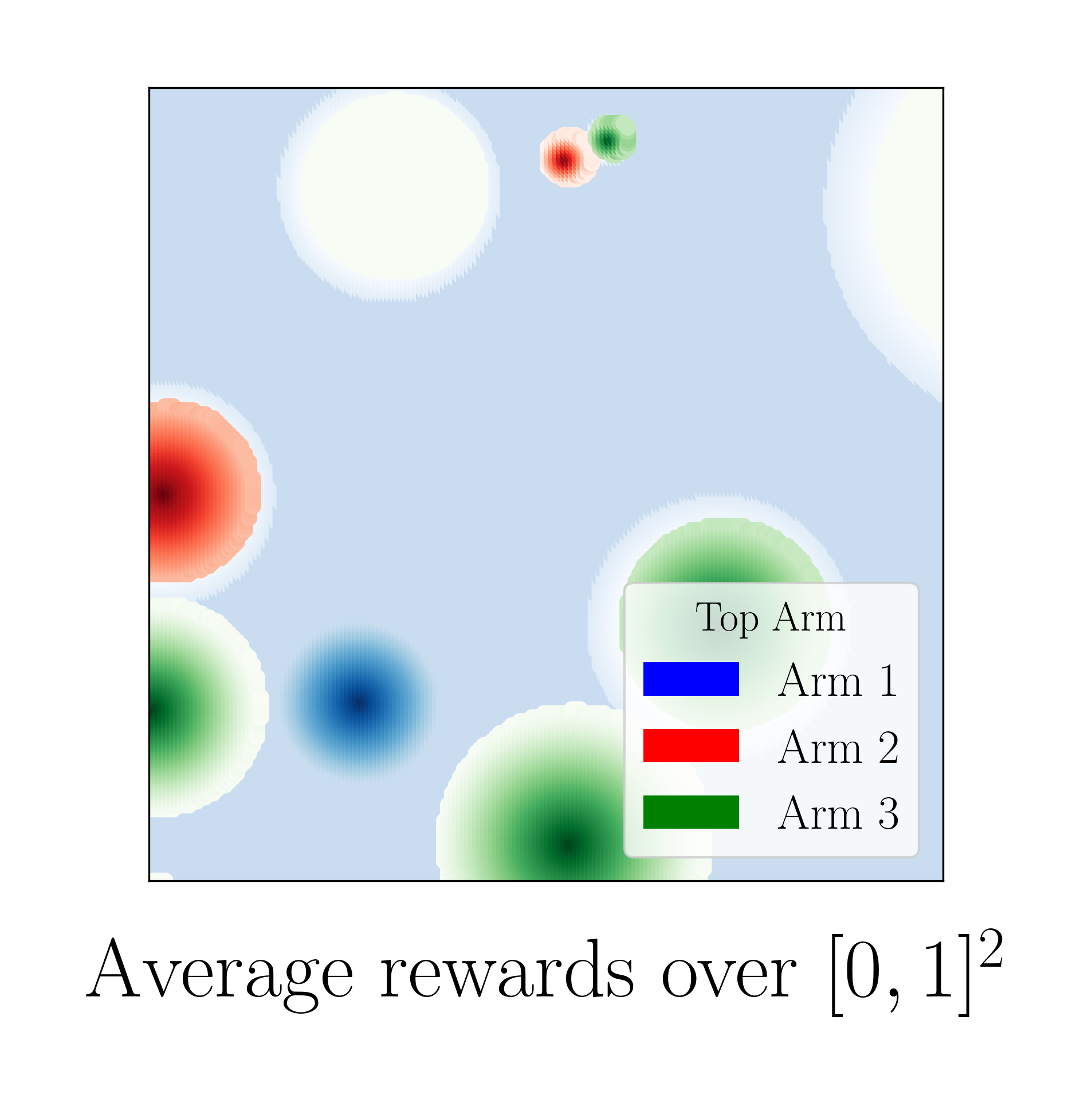} \hspace{0.3cm}
    \includegraphics[height=4cm]{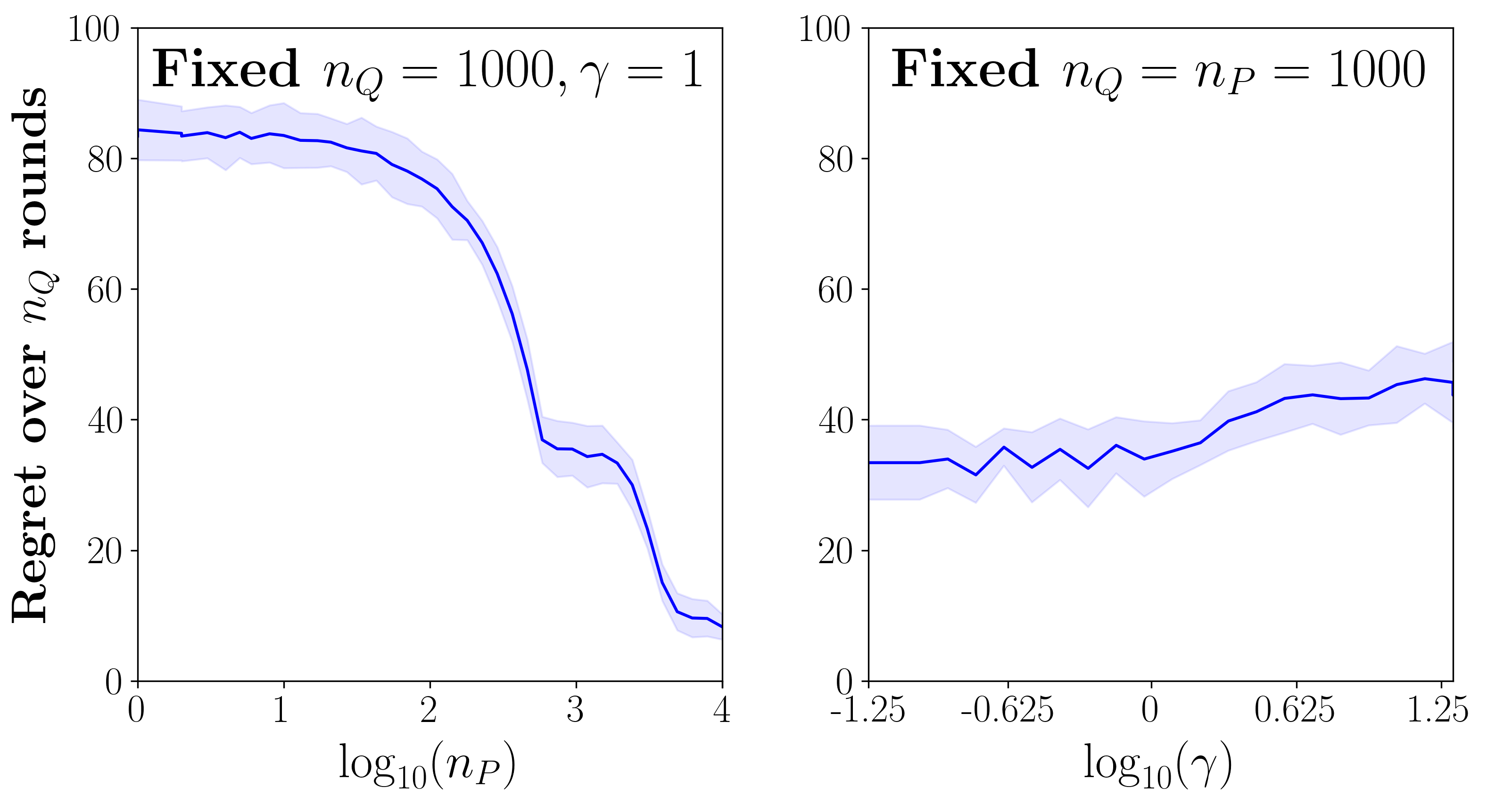} 
    \caption{\small Simulation Results. $Q_X\sim {\cal U}([0,1]^2)$, $P_X$ has density {$\propto\|x\|_2^{\gamma}$}, $K=3$ arms, with rewards $Y^i= f^i(X) + {\cal N}(0, .05), i\in [K]$, where  $f^i(x) \propto \sum_{k} \omega_{i,k} (1-\|x-z_k\|_2/r_k)_+$ for 25 randomly placed \emph{bumps} with centers $z_k$, Rademacher signs $\omega_{i,k}\in\{\pm 1\}$, and radius $r_k$. A profile of $f$ is shown on the left, with lower gradient colors corresponding to least margins (white meaning \emph{no} margin). The right plots average {20} runs of Algorithm \ref{algo:one-arm}, and verify the guarantees of Theorem \ref{thm:adaptive-algo-one-arm}, namely that the procedure \emph{adapts} to unknown shift parameters $n_p$ and $\gamma$. In particular, the amount of \emph{past experience} $n_P^{d_\gamma}$ clearly helps, and how much it helps depends on the level of shift $P\to Q$ as captured by $\gamma$. We also compare Algorithm~\ref{algo:one-arm} to an  \emph{adversarial} nonparametric contextual bandits algorithm from \citet{slivkins2014contextual} in Appendix~\ref{app:experiments}.
    } 
    \label{fig:simulations}
\end{figure}

\section{Algorithms and Analysis Overview}
\label{sec:Algorithms}

\subsection{Algorithm Overview}\label{subsec:algorithm-overview}

\begin{algorithm}[t]
\small
\setstretch{1}
\caption{Adaptive Bandits}
\label{algo:one-arm}
\begin{algorithmic}[1]
\STATE \textbf{Requires}: upper bound on Lipschitz constant $\lambda \geq 1$, set of arms $[K]$, tree $T$ with levels $r\in\mathcal{R}$
\STATE \textbf{Input Parameters}: $\delta\in(0,1)$, covariates $X_1,X_2,\ldots$
\STATE \textbf{Initialization}: For any bin $B$ at any level in $T$, set $\mathcal{I}_B \leftarrow [K]$ \hskip1em \# Set of candidate arms in $B$.
\vspace{0.1cm}
\FOR{$t=1,2,\ldots$}
\STATE If $t \leq \lceil 8K\log(K/\delta)\rceil$, play a random arm $i \in [K]$ selected with probability $1/K$. 
\vspace{0.2cm}
        \STATE Otherwise, for $t > \lceil 8K \log(K/\delta)\rceil$:
        \begin{ALC@g}
           		\STATE {\bf Choose a level $r_t \in \mathcal{R}$ for $X_t$:} 
           $r_t \leftarrow  \min \left\{ r\in\mathcal{R}: r \geq \sqrt{\frac{8 K \log(K/\delta)}{n_r(X_t)}} \right\}$
	        \vspace{0.2cm}
		    		\STATE {\bf Update candidate arms for the bin $B$ containing $X_t$ at level $r_t$:}
		    		\STATE  Set $\mathcal{I}_B\leftarrow\bigcap_{B'\in T,B\subseteq B'} \mathcal{I}_{B'}$ \COMMENT{Discard arms previously discarded by ancestor bins}
		    \vspace{0.2cm}
		    		\STATE  Compute $\hat{f}^i(B)$ for any $i\in\mathcal{I}_B$ over $B$ \COMMENT{See Definition~\ref{defn:arm-pull-counts} for estimate $\hat f^i$}
		\vspace{0.2cm}
		    		\STATE  Refine candidate arms: $\mathcal{I}_B \leftarrow \mathcal{I}_B \setminus \{i:  \hat{f}^{i}(B) < \hat{f}^{(1)}(B) - 8\lambda r_t\}$.
		\vspace{0.2cm}
	    			\STATE {\bf Play a random arm $i\in\mathcal{I}_B$ selected with probability $1/|\mathcal{I}_B|$.}
	    	\end{ALC@g}
\ENDFOR 
\end{algorithmic}
\end{algorithm}

All algorithms build on a dyadic partitioning tree $T$ defined as follows. 

\begin{defn}[Partition Tree]
Let ${\cal R} \doteq \{2^{-i}: i \in \mathbb{N}\cup\{0\}\}$, and let $T_r, r\in \cal R$ denote a 
regular partition of $[0, 1]^{D}$ into hypercubes (which we refer to as {\bf bins}) of \sk{side length} (a.k.a. bin size) $r$. We then define the dyadic {\bf tree} 
$T \doteq \{ T_r\}_{r \in {\cal R}}$, i.e., a hierarchy of nested partitions of $[0, 1]^{ D}$. 
We will refer to the {\bf level $r$ of $T$} as the collection of bins in partition $T_r$. The {\bf parent} of a bin $B \in T_r, r < 1$ is the bin $B' \in T_{2r}$ containing $B$; {\bf child}, {\bf ancestor} and {\bf descendant} relations follow naturally. The notation $T_r(x)$ will then refer to the bin at level $r$ containing $x$.
\end{defn}

Note that, while in the above definition, $T$ has infinite levels $r \in \cal R$, at any round $t$ in a procedure, we implicitly only operate on the subset of $T$ containing data. Our procedures, as in prior work on nonparametric bandits, maintain estimates $\hat f$ of the average reward function $f$ over levels of $T$. 

\begin{defn}[Regression estimates and arm pull counts]\label{defn:arm-pull-counts}
	At any round $t>\lceil 8K\log(K/\delta)\rceil$, for any bin $B$ at any level in the tree, we define the following regression estimate for arm $i$: \begin{align*} { \hat{f}_t^i(B)\doteq\frac{1}{m_t(B,i)}\sum_{X_s\in B,s\leq t-1,\pi_s=i} Y_s^i,} \end{align*}
	where {$m_t(B, i)$} denotes the number of times arm $i$ was pulled in $B$ before time $t$. If $m_t(B,i)=0$, we take $\hat{f}_t^i(B)=0$. For any $B$ at level $r$ in the tree, $\hat f_t^i(B)$ serves as a regression estimate for any covariate $x \in B$.   
	We often drop $B$ or $t$ in the above definitions, 
	when understood from context. 
\end{defn}



\begin{defn}[Covariate counts]\label{defn:bin-covariate-count}
	Let $B \doteq T_r(X_t)$. We write: 
	$ n_{r}(X_t) \doteq \sum_{s \in [t-1]} \mathbbm{1}\{X_s \in B\}.$
\end{defn}

At any round $t$, upon observing $X_t$, a level $r_t$ is chosen according to the \emph{covariate counts} $n_r(X_t)$ along the path $\{T_r(X_t)\}_{r\in \cal R}$. Roughly, $r_t$ is picked as the smallest $r\in \cal R$ such that 
$1/\sqrt{n_r(X_t)} \leq r$. The level $r_t$, more precisely the bin $B= T_{r_t}(X_t)$ containing $X_t$ at that level, then determines the estimate $\hat f(B)$ to be used at time $t$. 

To understand this choice, recall that a main aim is to quickly identify which arms are suboptimal -- and shouldn't be pulled for $X_t$, and we ought to therefore use a good estimate of $f(X_t)$. We will show that $r_t$ indeed provides such an estimate at a near optimal regression rate in terms of unknown $n_P$ and $\gamma$ (see Lemma \ref{lem:acb-bound}). In particular, the covariate counts $n_r(X_t)$, at any level $r$, account for covariates from both $P_X$ and $Q_X$, whenever $t> n_P$. Intuitively, we will then expect 
$n_r(X_t)\approx n_P\cdot P_X(T_r(X_t)) + (t-1-n_P)\cdot Q_X(T_r(X_t)) \gtrsim n_P\cdot r^{d+\gamma} + (t-1-n_P)\cdot r^d$. The choice of $r_t$ can then be shown to properly balance regression variance and bias in terms of unknown $n_P$ and $\gamma$. 


Once this choice is made, only those arms deemed safe are pulled for $X_t$ at time $t$. These so-called \emph{candidate arms} are maintained as ${\cal I}_B\subseteq [K]$ for each bin $B$ over time, and exclude identified suboptimal arms whose 
average rewards are clearly below that of the unknown best arm $\pi^*(x)$ for any $x\in B$. In particular, 
suppose at any time $t$, we can ensure that $|\hat f_t^i(B) - f^i(x)| \lesssim {r_t}$ for all remaining arms $i$ over $x\in B$. Then we can safely discard $i$ if $\hat f_t^{(1)}(B) - \hat f_t^{i}(B) \gtrsim {r_t}$. It then makes sense to also discard such an arm in all descendants of $B$. 

\begin{rmk}[Margin Adaptation]\label{rmk:margin-adaptation}
Adaptation to the unknown margin parameter $\alpha$ comes through such decisions over ${\cal I}_B $. 
Namely, if the margin $f^{(1)}(x) - f^{(2)}(x) \gg {r_t}$ for all $x \in B$, then \emph{all} suboptimal arms are discarded by time $t$ so we suffer no regret for $X_t \in B$ at time $t$. Otherwise, 
all arms $i$ left in ${\cal I}_B$ satisfy $f^{(1)}(x) - f^i(x) \lesssim {r_t}$, for $x\in B$, i.e., a bound on regret; on the other hand, the margin distribution ensures that the $Q_X$-probability of $X_t$ landing in such a bin with low margins is small (if $r_t$ were not a function of $X_t$). The main difficulty is to ensure that ${r_t}$ is of the right order in terms of $t$ and the unknown $n_P, \gamma$, even though it can significantly differ over the spatial location of $X_t$ (as we have minimal assumptions on covariate distributions -- see Section \ref{sec:regret-analysis} below).  
\end{rmk}

\begin{rmk}[Book-keeping]\label{rmk:book-keeping}
As discussed earlier, our adaptive choice of level $r_t$ brings in additional difficulty in the book-keeping of arm pulls. In fact, the above discussion assumes that covariate counts $n_r(X_t)$ (used in choosing $r_t$, towards adapting to unknown $n_P, \gamma$) and arm-pull counts $m_t(B, i)$ (used in estimating $\hat f_t(B)$) are of similar order. However, this needs not be the case for a couple reasons. First, a single arm $i$ is pulled whenever a covariate $X_t$ lands in a bin $B$ so that $m_t(B, j)$ is not updated for $j \neq i$ even though covariate counts for $B$ are. Second, more nuanced, the following situation can happen since we can have $r_t>r_{t-1}$ as $X_t, X_{t-1}$ fall in different regions of space: 
in a given bin $B \doteq T_{r_t}(X_t)$ chosen at time $t$, some arms in ${\cal I}_B$ might have been eliminated in a descendant of $B$ at an earlier time, and therefore not pulled as much as other arms in ${\cal I}_B$. Such situations do not arise under more common global choice of $r_t \leq r_{t-1}$, i.e., where choices of levels are monotonic in time. 

It turns out that, in fact, enough regularity is built into our adaptive choice to mitigate this issue, namely our choices of levels are approximately monotonic in that a descendant of $B$ cannot be chosen before the first time $B$ is ever chosen. This fact, and its implications, are outlined below in Section \ref{sec:regret-analysis}. 

\end{rmk}

\begin{rmk}[Deterministic Alternative]\label{rem:scheduling}
We note that a deterministic alternative to Algorithm~\ref{algo:one-arm} should be possible, by properly \emph{scheduling} arms to be played in a round-robin fashion over ${\cal I}_B$. We instead focus in the present work on the simpler random choice as the additional technicality appears to add no significant further insight.

\end{rmk}


\subsection{Outline for the Proof of Theorem~\ref{algo:one-arm}}
\label{sec:regret-analysis}
 Continuing on the discussion of Section~\ref{subsec:algorithm-overview}, here we give an outline of the proof of Theorem~\ref{thm:adaptive-algo-one-arm}. Supporting lemmas, propositions, and their proofs are found in Appendix~\ref{app:upper-bound-single}.

\paragraph{\textbullet\  Bounding the regression error.} First, it is easy to verify that the criterion for choosing $r_t$ on Line 7 of Algorithm~\ref{algo:one-arm} is well-defined for $t> \lceil 8K\log(K/\delta)\rceil$, that is, the set being minimized over is non-empty as it contains at least $r=1$. At any such round $t$ with selected bin $B \doteq T_{r_t}(X_t)$, we have by standard arguments (Lemma~\ref{lem:bias-variance} Appendix~\ref{app:upper-bound-single}) that, with probability at least $1-\delta$ (over random rewards, conditioned on all past covariates): 
\begin{equation}\label{eqn3}
	\forall x \in B,i\in\mathcal{I}_{B}:|\hat{f}_t^i(B) - f^i(x)| \leq \sqrt{\frac{\log(2K/\delta)}{m_t(B,i)}}+2\lambda r_t.
\end{equation}

    Note that the first term on the RHS above contains $m_t(B,i)$ while the second term $r_t$ is chosen according to $n_{r_t}(X_t)$. Following the discussion of Remark~\ref{rmk:book-keeping}, we will relate these two terms by arguing that $m_t(B,i) \gtrsim n_{r_t}(X_t)$. To this end, as we will see, it will suffice to show that the first time $B$ is picked, say at time $s \leq t$, we indeed have $m_s(B,i) \gtrsim n_{r_s}(X_s)$ (where, by assumption, $r_s = r_t$ since $B$ is picked).

    Therefore, suppose $B$ is first visited at round $s \leq t$. We establish in  Lemma~\ref{lem:no-skip} that no descendant of $B$ had been selected by time $s$. 
    Thus, at time $s$, the second situation described in Remark~\ref{rmk:book-keeping} has not happened, i.e.,  
    no arm in ${\cal I}_B$ has been eliminated in a descendant bin of $B$. In other words, all arms in $\cal{I}_B$ are \emph{expected} to have so far been pulled the same amount of time, roughly $n_{r_s}(X_s)/|\cal{I}_B|$. More precisely, as established w.h.p. in  Lemma~\ref{lem:arm-count-randomization} via concentration, we have that $m_s(B,i) \geq n_{r_s}(X_s)/ (4K)$ for all $i\in {\cal I}_B$.
    
    Pulling it all together, observe that $m_t(B,i)\geq m_s(B,i)$, while by the definition of $r_s $ (Line 7 of Algorithm~\ref{algo:one-arm}), we have 
    $\sqrt{1/n_{r_s}(X_s)} \lesssim r_s = r_t$, thus relating $m_t(B,i)$ to $r_t$. 
  
    Formally, plugging back into \eqref{eqn3}, we have:
		\begin{equation}\label{eqn3-1}
			\forall x\in B, i \in {\cal I}_B: |\hat{f}_t^i(B)-f^i(x)| \leq \sqrt{\frac{\log(2K/\delta)}{m_s(B,i)}} + 2\lambda r_t
			\leq \sqrt{\frac{4K\log(2K/\delta)}{n_{r_s}(X_s)}} + 2\lambda r_t 
			\leq 4\lambda r_t, 
		\end{equation}
		establishing that the regression error at round $t$ is at most $4\lambda r_t$ with high probability. 
		
		\paragraph{\textbullet\  Relating regression error to the regret at time $t$.} So far we have established that \eqref{eqn3-1} holds with high probability at all rounds $t$.
		It follows that, given our elimination criteria whereby we only discard an arm $i$ if $\hat{f}_t^{(1)}(B) - \hat{f}_t^i(B) \geq 8\lambda r_t$, the best arm for any $x\in B$ is never discarded (Corollary~\ref{cor:best-arm-candidate}), before or after the unknown shift time $n_P$. In particular for $t > n_P$, it follows by simple triangle inequalities that the regret $|f^{(1)}(X_t) - f^{\pi_t}(X_t)| \lesssim \lambda r_t$ (Corollary~\ref{cor:classification-hard}). 
		
		Furthermore, we can similarly argue that whenever \eqref{eqn3-1} holds at some $x\in B$ with sufficient margin $f^{(1)}(x) - f^{(2)}(x)\gtrsim \lambda r_t$, the regret must be $0$ at time $t$, since then all suboptimal arms would have been eliminated (Corollary~\ref{cor:classification-hard}). This was discussed in Remark~\ref{rmk:margin-adaptation}, and comes in handy below as we integrate the margin parameter $\alpha$ into the regret bound (last bullet point). 
		
		
	
	
	\paragraph{\textbullet\  Relating adaptive $r_t$ to an oracle choice $r_t^{*}$.} Next, we show that at each round $t > n_P$, $r_t$ is of small order in terms of $n_P$ and $t$. To this end, we will show that it cannot lead to worse regression estimates than a suitable global choice $r_t^{*}$ (that in turn can be shown to yield optimal regret). 
   First, Proposition~\ref{prop:rt-minimizer} establishes that $r_t$ satisfies: 
	\begin{equation}\label{eqn:rt-optimal}
	    \lambda r_t\lesssim \underset{r \in \mathcal{R}}\min\,  \{ n_r(X_t)^{-1/2}+\lambda r \}, 
	\end{equation}
	in other words, by \eqref{eqn3} (and the ensuing discussion leading to \eqref{eqn3-1}), $\lambda r_t$ is at most the best variance plus bias terms achievable by \emph{any other choice of level $r$}, in particular, any suitable $r_t^{*}$. 
	
	Therefore, consider $r_t^{*} \doteq r_t(\gamma, n_P)$, the oracle choice of level introduced in Section~\ref{sec:results}. As discussed, it will later become clear that it leads to our main (tight) upper-bound on regret. 
	Formally, we define $r_t^{*}$ as the smallest level $r \in \mathcal{R}$ greater than or equal to
    \[
	    \min\left(\left(\frac{K\log(K/\delta)}{n_P}\right)^{\frac{1}{2+\alpha+d+\gamma}},\left(\frac{K\log(K/\delta)}{\tau}\right)^{\frac{1}{2+\alpha+d}}\right), 
    \]
    where, to further simplify notation, we let $\tau \doteq t-n_P-1$, the time elapsed after round $n_P$. Clearly, as discussed before, the choice $r_t^{*}$ requires knowledge of all parameters $n_P, \gamma, \alpha, d$, and is independent of $X_t$ (it is a global choice at time $t$). 
    Following \eqref{eqn:rt-optimal}, we can relate $r_t$ to $n_{r_t^{*}}(X_t)$ (and $r_t^{*}$). We therefore proceed to bounding $n_{r_t^{*}}(X_t)$ as follows.
    
	
	First, we restrict attention to times $t$ where $X_t$ falls in a bin of sufficient mass (for concentration) at level $r_t^{*}$ (since the probability of $X_t$ not falling in such a bin is negligible). To this end, define the event 
    \[
        A_t=\left\{\max(\tau Q_X(T_{r_t^{*}}(X_t)),n_P P_X(T_{r_t^*}(X_t))) \geq 8\log(1/\delta) \right\}.
    \]
    Under event $A_t$, by a Chernoff bound (Proposition~\ref{prop:optimal-bias-variance}), with probability at least $1-\delta$ (over random rewards, conditioned on $X_t$ and $A_t$):
    \begin{equation}\label{eqn:mass-lower-bound}
        n_{r_t^{*}}(X_t) \gtrsim \max(\tau Q_X(T_{r_t^{*}}(X_t)),n_P P_X(T_{r_t^{*}}(X_t))).  
    \end{equation}
    Then, combining \eqref{eqn:rt-optimal} with \eqref{eqn:mass-lower-bound}, and recalling \eqref{eqn3-1}, gives the following bound on $\lambda r_t$:
	\begin{align}\label{eqn:oracle-bv}
		\lambda r_t \lesssim \sqrt{\frac{K\log(K/\delta)}{n_{r_t^{*}}(X_t)}} + \lambda r_t^{*} &\lesssim \sqrt{\frac{K\log(K/\delta)}{\max(\tau Q_X(T_{r_t^{*}}(X_t)),n_P P_X(T_{r_t^{*}}(X_t)))}} + \lambda r_t^{*} \\
		& \doteq \sigma_t^* + \lambda r_t^*. \nonumber
	\end{align}

Equipped with this upper-bound in terms of $r_t^{*}$, we are now ready to take expectation (over all randomness till time $t$) and properly account for the margin parameter $\alpha$. 
Now, as $\lambda r_t$ controls the regret (either upper-bounds it, or drives it to $0$ under margin as discussed above), so does $\sigma^*_t + \lambda r^*_t$ (where $\sigma^*_t$ is the first term on the R.H.S. of \eqref{eqn:oracle-bv}, and can be viewed as the regression \emph{variance} induced by choosing level $r^*_t$). 

	
	

\paragraph{\textbullet\  Expected regret at time $t$: integrating over margin distribution.}  The arguments below constitute a second departure (on top of the above adaptive choice of $r_t$) from traditional analyses of Lipschitz bandits, due to the fact that we don't constrain the covariate distribution to be near-uniform, nor even stationary (see Remark~\ref{rmk:strong-density-margin} on how the usual uniform conditions remove much of the technicality herein). 
As a result, although $r^*_t$ is independent of location $X_t$, the variance term $\sigma^*_t$ can vary significantly with the random choice of $X_t$ (as different bins at level $r^*_t$ can have significantly different mass). This has to therefore be carefully integrated into the margin distribution across space. Since we consider a fixed round $t$ for now, let $X \doteq X_t$ to simplify notation.

Therefore, let $\delta_f(X) \doteq f^{(1)}(X) - f^{(2)}(X)$ be the margin at $X$.
Next, we recall and condition on all the favorable events discussed thus far.  
First, let $G_t$ be the event that \eqref{eqn3-1} holds and, for rounds $t>n_P$, additionally that \eqref{eqn:mass-lower-bound} holds. Let $F_t \doteq \cap_{s=1}^t G_s$. As a reminder, $F_t$ occurs with high probability so we can safely 
condition on this event in taking expectation. As discussed so far, under event $F_t$, a non-zero regret (at most $r_t$) is incurred at time $t$ \emph{only if} $\delta_f(X) \lesssim r_t$. In other words, we have: 
\begin{align*}
		\mathbb{E}[(f^{(1)}(X) - f^{\pi_t}(X)) \mathbbm{1}\{F_t\}] &\leq  \mathbb{E}[(f^{(1)}(X) - f^{\pi_t}(X)) \mathbbm{1}\{ (f^{(1)}(X) - f^{\pi_t}(X)) \vee \delta_f(X) \lesssim \sigma_t^{*} + r_t^*\}]\\
		&\lesssim \mathbb{E}[ r_t^* \cdot \mathbbm{1}\{0 < \delta_f(X) \lesssim r_t^*\}] + \mathbb{E} [\sigma_t^{*}\cdot\mathbbm{1}\{0 < \delta_f(X) \lesssim \sigma_t^{*}\}]. \numberthis \label{eqn:v+b}
\end{align*}
where, in the second inequality above, we use $\mathbbm{1}\{x\leq a+b\} \leq \mathbbm{1}\{x\leq 2a\} + \mathbbm{1}\{x\leq 2b\}$.
By the margin condition (Definition~\ref{assumption:margin}), the first term on the R.H.S. of \eqref{eqn:v+b} is of order at most $(r_t^*)^{1+\alpha}$. The main technicality left is to show that the second term is of the same order, despite the instability in $\sigma^*_t$ over the spatial choice of $x$. For this purpose, first consider the following two random quantities in the definition of $\sigma^*_t$:
\[
    a(X) \doteq Q_X(T_{r_t^{*}}(X))^{-1/2}, \text{ and } 
    b(X) \doteq P_X(T_{r_t^{*}}(X))^{-1/2}.
\]
Notice that $\sigma^*_t$ is of order less than either of $a(X)\cdot \tau^{-1/2}$ and $b(X)\cdot n_P^{-1/2}$. As these are unstable quantities (unlike in the uniform distribution case), the main idea is to break integration into two terms, below and above a threshold $\sqrt{\epsilon}$, for some arbitrary $\epsilon >0$, which will then be optimized over. Next we present the main argument in the case of $a(X)$, which yields a first bound on regret in terms of elapsed time $\tau$, while adapting the same line of arguments to $b(X)$ yields a second bound in terms of unknown $n_P$ and $\gamma$; both of these bounds are then combined into Theorem \ref{thm:adaptive-algo-one-arm}. 

Thus, for some $\epsilon>0$, decompose the second term on the R.H.S. of \eqref{eqn:v+b} as:
\begin{equation}\label{eqn:variance-decomposition-body}
	\mathbb{E}\left[ \sigma_t^{*} \cdot \mathbbm{1}\left\{0< \delta_f(X) \lesssim \sigma_t^{*}\right\} \cdot \left( \mathbbm{1}\{a(X) < \sqrt{\epsilon}\} + \mathbbm{1}\{a(X) \geq \sqrt{\epsilon}\}\right)\right]. \nonumber
\end{equation}
We start by handling the case where $a(X) < \sqrt{\epsilon}$. By definition, as mentioned above, we have: 
\begin{equation}\label{eqn:sigma-a}
    \sigma_t^{*} \leq a(X) \cdot \sqrt{\frac{K\log(K/\delta)}{\tau}}. 
\end{equation}
Combining the above with the margin condition (Definition~\ref{assumption:margin}) gives
\begin{align}\label{eqn:vbound1}
	\mathbb{E}\  [\sigma_t^{*} \cdot \mathbbm{1}\left\{0< \delta_f(X) \lesssim \sigma_t^{*}, a(X) < \sqrt{\epsilon}\right\}]
	&\lesssim \sqrt{\frac{\epsilon K \log(K/\delta)}{\tau}} \cdot 
	\mathbb{E}\ \mathbbm{1}\left\{0< \delta_f(X) \leq \sqrt{\frac{\epsilon K \log(K/\delta)}{\tau}}\right\}\nonumber \\
	&\lesssim \left (\sqrt{\frac{\epsilon K \log(K/\delta)}{\tau}}\right)^{\alpha+1}.
\end{align}
Next, in the case where $a(X) \geq \sqrt{\epsilon}$, we have again by \eqref{eqn:sigma-a} that: 
\begin{align} 
\mathbb{E}\, [\sigma_t^{*} \cdot \mathbbm{1}\left\{ 0 < \delta_f(X) \lesssim \sigma_t^{*} \right\} \mathbbm{1}\{a(X) \geq \sqrt{\epsilon}\}] &\leq 
\sqrt{\frac{K \log(K/\delta)}{\tau}} \cdot \mathbb{E}\, [a(X)  \mathbbm{1}\{a(X) \geq \sqrt{\epsilon}\}] \nonumber \\
&\lesssim  \sqrt{\frac{K \log(K/\delta)}{\tau}} \cdot\frac{(r_t^{*})^{-d}}{\sqrt{\epsilon}}, \label{eqn:a>eps}
\end{align} 
where the last inequality is a technical result derived in Proposition~\ref{prop:tail-bound}, relying on the fact that $\mathbb{E}\ [a^2(X)] \lesssim (r^*_t)^{-d}$ (this latter fact is relatively standard under \emph{box cover dimension} $d$ -- Definition \ref{defn:bcn}).

	Finally, \eqref{eqn:vbound1} + \eqref{eqn:a>eps} is optimized by setting $\epsilon \propto \tau^{\frac{\alpha+d}{2+\alpha+d}}$, showing that the second term of \eqref{eqn:v+b} is $\tilde{O}(\tau^{-\frac{\alpha+1}{2+\alpha+d}})$.	We can similarly show an upper-bound of  $\tilde{O}(n_P^{-\frac{\alpha+1}{2+\alpha+d+\gamma}})$ by decomposing expectation in terms of $b(X)$, where analogously, a term of the form 
	$\mathbb{E}\, [b(X)  \mathbbm{1}\{b(X) \geq \sqrt{\epsilon}\}]$ is bounded via the technical result of Proposition~\ref{prop:tail-bound}, and using the (less standard but similarly obtained) fact that 
	$\mathbb{E}\ [b^2(X)] \lesssim (r^*_t)^{-(\gamma + d)}$.

    Thus, we have that the regret at round $t$, under event $F_t$, is $\tilde{O}(\min(\tau^{-\frac{\alpha+1}{2+\alpha+d}},n_P^{-\frac{\alpha+1}{2+\alpha+d+\gamma}})) \propto (r_t^{*})^{1+\alpha}$. Summing over time $t$ then yields the bound of Theorem \ref{thm:adaptive-algo-one-arm}.
    

The case of multiple shifts is handled similarly by properly bounding $\mathbb{E}[n_{r_t^{*}}(x)]$ in the derivation of \eqref{eqn:mass-lower-bound} to incorporate multiple previous distributions $P_X^j$ for $j\in[N]$ (see Appendix~\ref{app:multiple-shifts}).

\begin{rmk}[Contrast with the strong density case]\label{rmk:strong-density-margin}
    We note that under the usual {\em strong density} condition on $Q_X$ (see Remark~\ref{rem:strongdensityand margin} for a definition), bringing $\alpha$ into the regret bound is more direct. Namely, due to the near-uniformity of $Q_X$ under strong density, the optimal regression rate is of the same order $r^{**}_t \propto \tau^{-1/(2+d)}$ spatially for all possible $X_t$. As a consequence, since nonzero regret only happens when the margin $\delta_f(X_t) \lesssim r^{**}_t$ (a non-random quantity), we almost immediately have that the regret at time $t$ -- upon optimal regression -- is upper-bounded by 
    $$r^{**}_t \cdot \mathbb{E}\ \mathbbm{1}\{\delta_f(X_t) \lesssim r^{**}_t \}\lesssim (r^{**}_t)^{\alpha +1}.$$
     On the other hand, we note the following interesting but nuanced point: even in our current setting without the strong density assumption, it can be shown that the optimal choice of level w.r.t. $L_2$ \emph{regression} (i.e. in bounding $\mathbb{E}\ (\hat f^i(X_t) - f^i(X_t))^2$) remains $r^{**}_t$, rather than than $r^*_t$ as defined in our case. In particular, our larger (oracle) choice of level $r^*_t > r^{**}_t$ -- is required to mitigate variance in lower-density regions over space -- and implies that the bandit problem differs in this case more significantly from the underlying regression problem (enough that the optimal regression choice of $r^{**}_t$ is now suboptimal for bandits). 
    
    Finally, the case of strong density with unknown $d, \gamma$ and $\alpha$ is handled in Theorems \ref{thm:adaptive-algo-one-arm-dm} and \ref{thm:multiple-shifts-dm}.

\end{rmk}

\begin{rmk}[Beyond Covariate Shift]
    Under more severe shifts in the rewards, for instance ones where previously discarded arms now become optimal, our procedure will be sub-optimal as it has no mechanism to detect such changes. However, the regret rates of Theorem~\ref{thm:adaptive-algo-one-arm} can still be attained by Algorithm~\ref{algo:one-arm} under mild changes in the reward function $f_P\to f_Q$. Let $\epsilon(x) \doteq \max_{i\in[K]} |f_P^i(x) - f_Q^i(x)|$, which roughly quantifies the amount of shift in the rewards. Intuitively, if $\epsilon(x) \ll \sqrt{1/n_P(r_{n_P})}$ for all $x \in \mathcal{X}_Q$ where $n_P(r)$ is the covariate count in bin $T_r(x)$ from $P_X$, then the amount of change is so small that the new best arm under $Q$, $i_Q^*(x)$, must have been retained as a candidate arm at time $n_P$. Moreover, we can still accurately estimate $f^i$ using $\hat{f}_t^i$ for any candidate arm $i$ \emph{as if there was no shift}. To see this, consider decomposing $\hat{f}_t^i = \alpha\cdot \hat{f}_P^i + (1-\alpha)\cdot \hat{f}_Q^i$ as a weighted sum of oracle estimates $\hat{f}_P^i$ (resp. $\hat{f}_Q^i$) using only the observed data from $P$ (resp. $Q$) with weight $\alpha \doteq n_P(r_t)/(n_P(r_t)+n_Q(r_t))$ ($n_Q(r)$ defined analogously):
	\begin{align*}
		\max_{i\in {\cal I}}|\hat{f}_t^i(x) - f_Q^i(x)| &\lesssim \alpha\cdot |\hat{f}_P^i(x) - f_P^i(x)| + \alpha\cdot \epsilon(x) + (1-\alpha)\cdot |\hat{f}_Q^i(x) - f_Q^i(x)|\\
		&\lesssim r_t + \sqrt{\frac{1}{n_P(r_t)+n_Q(r_t)}}\lesssim r_t.
	\end{align*}
	On the other hand, a different approach is required to handle the more challenging setting of severe shifts in the rewards.
\end{rmk}

\section*{Acknowledgements}
Samory Kpotufe thanks Google AI Princeton, and the Institute for Advanced Study at Princeton for hosting him during part of this project. He also acknowledges support from NSF:CPS:Medium:1953740.

\bibliography{main.bib}

\newpage
\appendix

	\section{Additional Experiments}
	\label{app:experiments}

\begin{figure}[H]
    \centering
    \includegraphics[height=4cm]{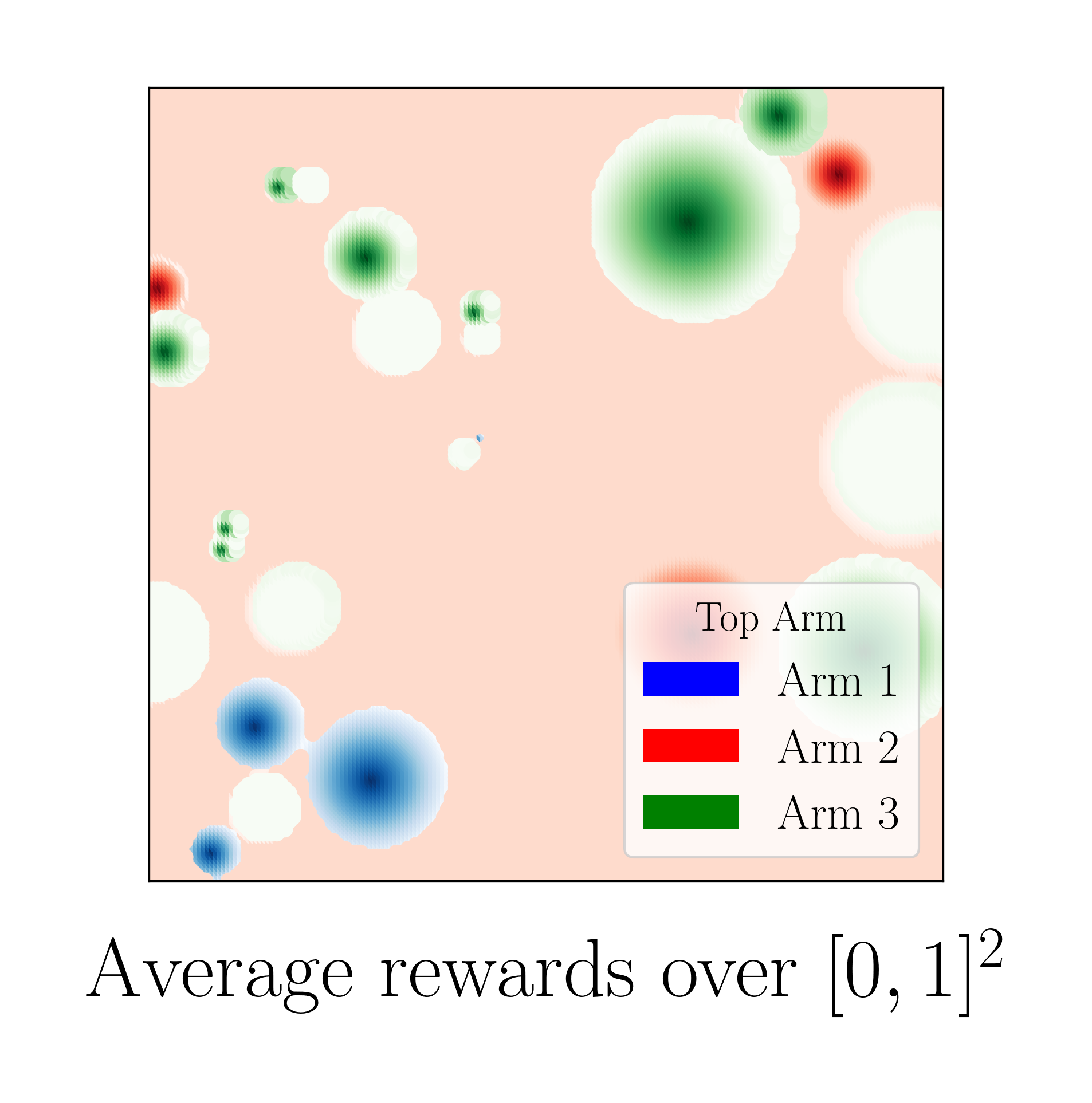} \hspace{0.3cm}
    \includegraphics[height=4cm]{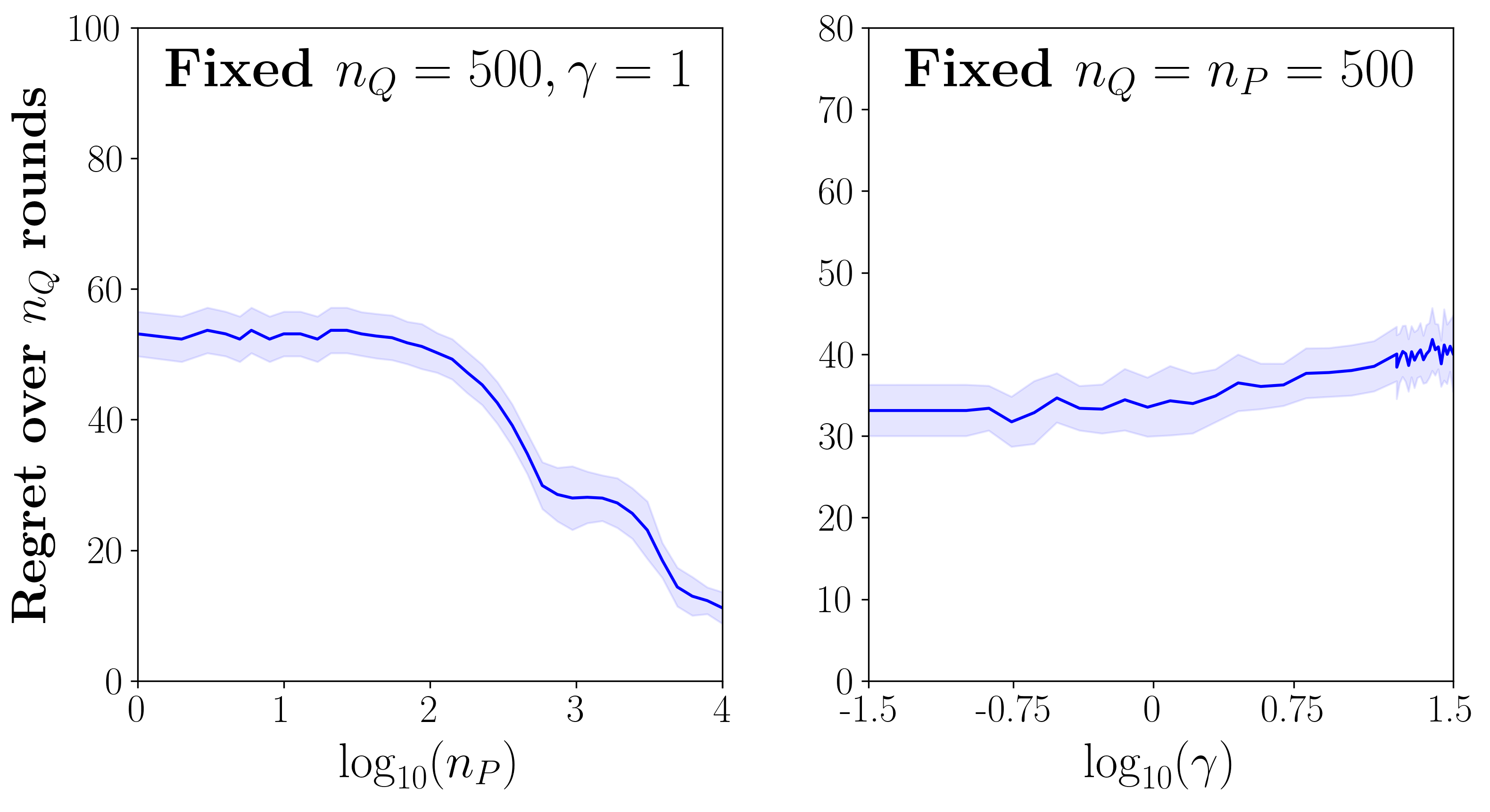} 

    \caption{\small 
   Additional simulations with a different choice of bump centers than in Figure~\ref{fig:simulations}: here, the bump centers are chosen uniformly in $[0,1]^2$. Under this setup, we see that the behavior of the regret in the center and right plots are similar to that of Figure~\ref{fig:simulations}.
    }
    \label{fig:sim-uniform}
\end{figure}

For all our experiments, we fix a covariate space ${\cal X}=[0,1]^2$. Our terminal covariate distribution is $Q_X\sim {\cal U}([0,1]^2)$, the uniform on $[0,1]^2$, and our initial covariate distribution $P_X$ has density $\propto \|x\|_2^{\gamma}$ so that $P_X,Q_X$ satisfy Definition~\ref{assumption:transfer-exponent} with transfer exponent $\gamma$. 

The reward function common to both $P$ and $Q$ is constructed as the sum of $25$ bump functions, each with a circular support disjoint from the other bumps, in the following manner:
\begin{enumerate}
	\item The bump centers $\{z_k\}_{k=1}^{25}$ are first randomly sampled from the Gaussian ${\cal N}\left( (0.5,0.5), 0.5 \cdot \text{Id}_2\right)$ in Figure~\ref{fig:simulations} and from the uniform ${\cal U}([0,1]^2)$ in Figure~\ref{fig:sim-uniform}.
	\item The bumps' radii $\{r_k\}_{k=1}^{25}$ are then chosen in a random order to maximize the bump areas.
	\item Then, for each of the $K=3$ arms, we determined the sign of each of the $25$ bumps randomly and independently. 
	\item To introduce additional heterogeneity in the top arm identity $\pi^*(x)$, each reward function $f^i$ was further raised or lowered by a randomly selected height in the range $[-0.3,0.3]$, in the area outside of the bumps.
\end{enumerate}
The fourth step determines a unique top arm (Arm 1 in Figure~\ref{fig:simulations} and Arm 2 in Figure~\ref{fig:sim-uniform}) in the region outside of the bumps. To summarize the above, the reward functions $f^i$ can be written as
\[
	f^i(x) \propto \sum_{k} \omega_{i,k} (1-\|x-z_k\|_2/r_k)_+,
\]
where $\omega_{i,k}$ are independent random Rademacher variables. Furthermore, Gaussian noise was added to each $f^i$ to produce the observed rewards $Y^i$ according to $Y^i=f^i(X)+{\cal N}(0,0.05)$ for each $i\in[K]$.

Now, having determined $f$, we considered a range of different values for the parameters $n_Q,n_P,\gamma$, shown in the horizontal axes of Figures~\ref{fig:simulations} and \ref{fig:sim-uniform}. Then,  Algorithm~\ref{algo:one-arm} was run on $20$ simulations of data $({\bf X}_n, {\bf Y}_n)$ for each choice of parameters, so that the plots show the mean and standard deviation of the regret ${\bf R}_n^Q$ across $20$ trials.

 The first plot in each of Figures~\ref{fig:simulations} and \ref{fig:sim-uniform} exhibits the guarantee of increasing past experience $n_P$ improving the regret, for fixed $n_Q,\gamma$. The second plot in each figure shows the effect of increasing $\gamma$ worsening the regret, for fixed $n_P,n_Q$. Together, the two plots in each of Figure~\ref{fig:simulations} and \ref{fig:sim-uniform} demonstrate the guarantees of Theorem~\ref{thm:adaptive-algo-one-arm}.

\begin{wrapfigure}[6]{r}{0.21\textwidth}
\label{fig:sim-adversarial}
    \centering
    \includegraphics[width=0.21\textwidth]{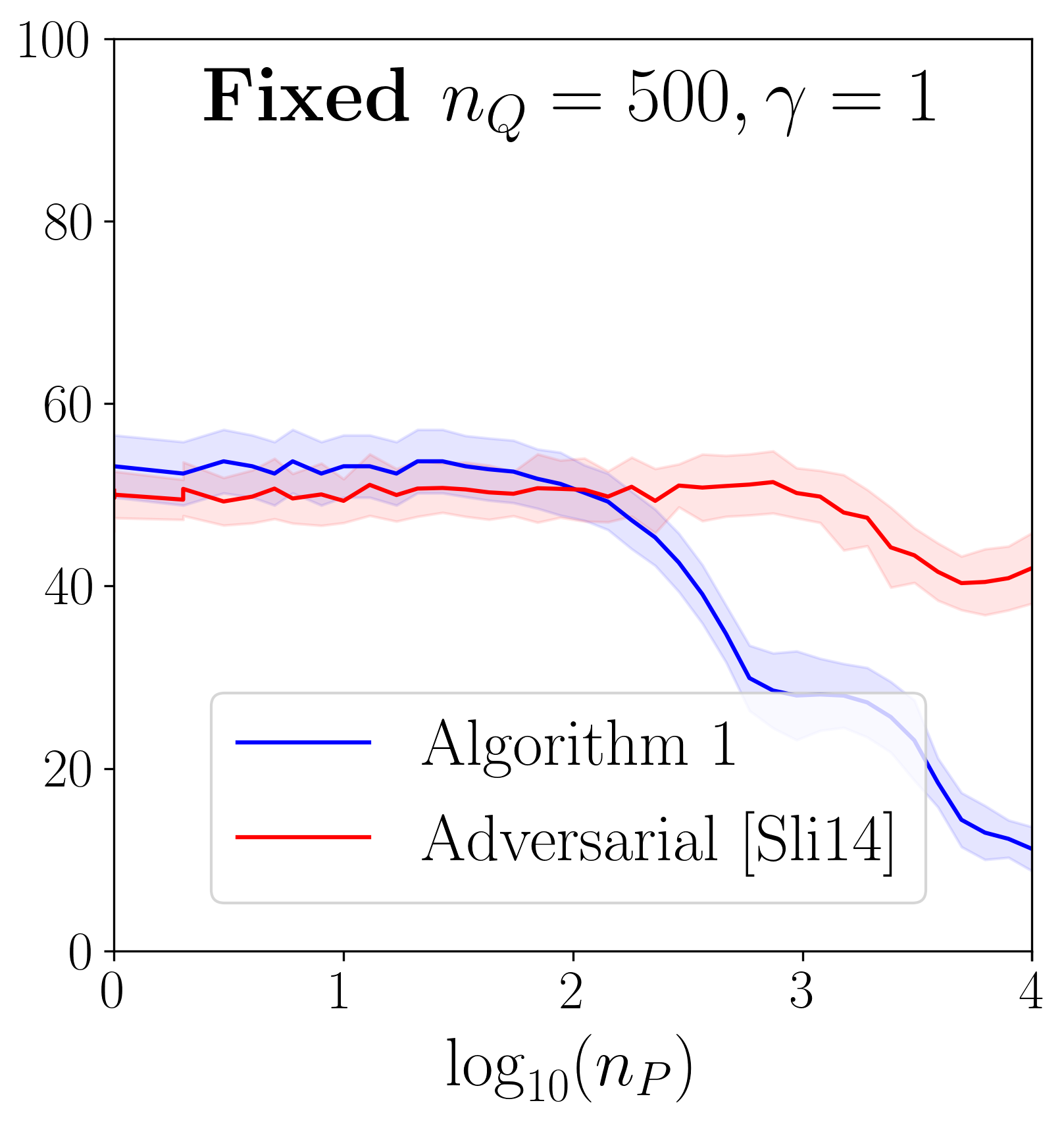} 
\end{wrapfigure}

We also compare Algorithm 1 to the  \emph{adversarial} nonparametric contextual bandits algorithm ``ContextualBandit'' of \citet{slivkins2014contextual} under the same setting as Figure~\ref{fig:sim-uniform}. We instantiate their algorithm using the static bandits algorithm Exp3 with a common learning rate $\eta = \sqrt{\log K/ TK}$. These simulations show that adversarial design can be too conservative in leveraging past experience: for small $n_P$ we see no benefit for either procedure, while our procedure better leverages the past for sufficiently large $n_P$.
\vspace{2em}

	\section{Proof of Theorem~\ref{thm:adaptive-algo-one-arm}}
	\label{app:upper-bound-single}
	
	Throughout the proof, $c_0,c_1,c_2,\ldots$ will denote positive constants not depending on $t,n_P,n_Q,K$.
	
	First, it is straightforward to verify that the criterion for choosing $r_t$ on Line 7 of Algorithm~\ref{algo:one-arm} is well-defined for $t>\lceil 8K\log(K/\delta)\rceil$, i.e. $r=1$ satisfies the minimization criterion.

	\paragraph{Bias-Variance Bound.}	
	
    This first proposition establishes a standard bias-variance bound on the error of a regression function estimate $|\hat{f}_t^i(B) - f^i(x)|$ for a bin $B$, a round $t$, an arm $i\in\mathcal{I}_B$, and a covariate $x\in B$. 
    
    We use $Z_t$ to denote the randomness of Algorithm~\ref{algo:one-arm} at round $t$ in choosing the particular arm $\pi_t$ to play. 
	
	\begin{lemma}\label{lem:bias-variance}
		Consider any round $t>8K\log(K/\delta)$ with observed covariate $X_t$, and fix any bin $B$ containing $X_t$. Consider the estimate 
		$\hat{f}_t^i(B)$ as in Definition~\ref{defn:arm-pull-counts}, and let $m_t(B,i)$ be defined therein (i.e., the number of times arm $i$ is pulled in $B$ by time $t$). We then have at round $t$, that 
		with probability at least $1-\delta$ with respect to the conditional distribution $\textbf{Y}_{t-1}|\textbf{X}_{t},\{Z_s\}_{s < t}$: 
		\begin{equation}\label{eqn:bias-variance-app}
			\forall x \in B,i\in[K]:|\hat{f}_t^i(B) - f^i(x)|\leq  \sqrt{\frac{\log(2K/\delta)}{m_t(B,i)}}+2\lambda r.
		\end{equation}
	\end{lemma}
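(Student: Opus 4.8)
The plan is to establish \eqref{eqn:bias-variance-app} by a standard two-part decomposition of the estimation error into a \emph{bias} term and a \emph{variance} (stochastic) term, and then apply a Hoeffding-type concentration bound together with a union bound over arms.

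First I would fix the bin $B$, a round $t$, and condition on the covariates $\mathbf{X}_t$ and on the algorithm's internal randomness $\{Z_s\}_{s<t}$; this fixes which covariates $X_s$ landed in $B$ and, crucially, which arm $\pi_s$ was played at each such round, hence it fixes the index set $S_i \doteq \{s \le t-1 : X_s \in B,\ \pi_s = i\}$ and the count $m_t(B,i) = |S_i|$ for each arm $i$. The only remaining randomness is then in the rewards $Y_s^i$, which are conditionally independent across $s$ (by the i.i.d.\ assumption on $\{(X_t,Y_t)\}$ and independence of $Z$), each with conditional mean $f^i(X_s)$ and bounded in $[0,1]$. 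Thus $\hat f_t^i(B)$ is an average of $m_t(B,i)$ independent bounded terms; writing
\[
\hat f_t^i(B) - f^i(x) = \underbrace{\frac{1}{m_t(B,i)}\sum_{s \in S_i}\bigl(Y_s^i - f^i(X_s)\bigr)}_{\text{variance}} \; + \; \underbrace{\frac{1}{m_t(B,i)}\sum_{s \in S_i}\bigl(f^i(X_s) - f^i(x)\bigr)}_{\text{bias}}.
\]
For the bias term, since $X_s, x \in B$ and $B$ has side length $r$, the $\ell_\infty$ diameter of $B$ is $r$, so the Lipschitz Assumption~\ref{assumption:lipschitz} gives $|f^i(X_s) - f^i(x)| \le \lambda r$ for each $s$; this is slightly weaker than the claimed $2\lambda r$, so there is slack to spare (the factor $2$ simply accommodates bins of diameter up to $2r$ if one is not careful about the exact side-length convention). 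For the variance term, Hoeffding's inequality on the mean of $m_t(B,i)$ independent $[0,1]$-valued variables gives, for each fixed arm $i$, that with probability at least $1-\delta/K$ the variance term is bounded by $\sqrt{\log(2K/\delta)/(2 m_t(B,i))} \le \sqrt{\log(2K/\delta)/m_t(B,i)}$.

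Finally I would take a union bound over the $K$ arms (and handle the degenerate case $m_t(B,i)=0$ separately, where by convention $\hat f_t^i(B)=0$ and the claimed bound is vacuous or trivially arranged), yielding that with probability at least $1-\delta$ the inequality \eqref{eqn:bias-variance-app} holds simultaneously for all $i\in[K]$ and all $x\in B$ (the quantifier over $x$ is free once the bias bound $\lambda r$ is uniform over $x\in B$). I do not anticipate a genuine obstacle here — this is a routine concentration argument — but the one subtlety worth stating carefully is the conditioning: one must argue that conditioning on $\mathbf{X}_t$ and on $\{Z_s\}_{s<t}$ leaves the rewards $\{Y_s^i\}_{s\in S_i}$ conditionally independent with the correct means, i.e.\ that the event $\{s \in S_i\}$ is measurable with respect to $\mathbf{X}_t$ and $\{Z_s\}_{s<t}$ (which it is, since $\pi_s$ is a function of past covariates, past played arms, past observed rewards, and $Z_s$ — and an induction on $s$ shows the whole filtration up to round $t-1$ is determined once we also observe the relevant rewards, so one applies Hoeffding to the martingale-difference sequence $Y_s^i - f^i(X_s)$ rather than to literally independent variables if one prefers to avoid the measurability bookkeeping). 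Using the martingale (Azuma–Hoeffding) version sidesteps the issue cleanly and gives exactly the same bound.
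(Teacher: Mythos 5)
Your proposal is correct and follows essentially the same route as the paper's proof: the same bias--variance decomposition (your ``bias'' term is exactly the paper's intermediate quantity $\tilde f_t^i(B)$), the same Lipschitz bound on the bias (the paper loosely uses $2\lambda r$ where $\lambda r$ suffices, as you note), and the same Hoeffding-plus-union-bound treatment of the stochastic term, with the paper invoking the tower property over $\{m_t(B,i)\}_{i\in[K]}$ where you invoke it over the conditioning variables. Your closing remark that the index sets $S_i$ are not fully determined by $\mathbf{X}_t$ and $\{Z_s\}_{s<t}$ alone (since $\pi_s$ also depends on past rewards), and that an Azuma--Hoeffding/martingale formulation cleanly handles this, is if anything slightly more careful than the paper's own bookkeeping on this point.
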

	
	\begin{proof}
		Fix bin $B$ and let $r\in\mathcal{R}$ be its side length. If $m_t(B,i)=0$, then the desired bound is vacuously true. So, suppose $m_t(B,i) >0$. Now, recall from Definition~\ref{defn:arm-pull-counts},
	\[
		\hat{f}_t^i(B) \doteq \frac{1}{m_t(B,i)}\sum_{X_s\in B,s\leq t-1,\pi_s = i} Y_s^i.
	\]	
	For the sake of introducing a bias term of $\hat{f}_t^i(B)$, define
	\begin{align*}
		\tilde{f}_t^i(B) &\doteq\mathbb{E}_{\textbf{Y}_{t-1}|\textbf{X}_{t-1}} [\hat{f}_t^i(B)]=\frac{1}{m_t(B,i)}\sum_{X_s\in B,s\leq t-1, \pi_s = i}\mathbb{E}\, [Y_s^{i}|X_s].
	\end{align*}
	Triangle inequality then yields
\[
	|\hat{f}_t^i(B) - f^{i}(x)|\leq |\hat{f}_t^i(B) - \tilde{f}_t^i(B)| + |\tilde{f}_t^i(B) - f^{i}(x)|.
\]
The second term on the R.H.S. above is at most $2\lambda r$ by the Lipschitz assumption (Assumption~\ref{assumption:lipschitz}). Now, fix the values of $\{m_t(B,i)\}_{i\in[K]}$. By Hoeffding inequality and union bound, the first term on the R.H.S. above satisfies with probability at least $1-\delta$ w.r.t. the distribution of $\textbf{Y}_{t-1}|\textbf{X}_{t},\{m_t(B,i)\}_{i\in[K]}$,
\begin{align*}
	\forall i\in [K]:|\hat{f}_t^i(B) - \tilde{f}_t^i(B)| \leq  \sqrt{\frac{\log(2K/\delta)}{m_t(B,i)}}. 
\end{align*}
In fact, by the tower property, the above holds with probability at least $1-\delta$ w.r.t. the distribution of ${\bf Y}_{t-1}|{\bf X}_t.\{Z_s\}_{s<t}$.
\end{proof}

    In particular, the bound of \eqref{eqn:bias-variance-app} holds for $r = r_t$ and $B = T_{r_t}(X_t)$. At $r = r_t$, the first term on the R.H.S. of \eqref{eqn:bias-variance-app} depends on the arm-pull count $m_t(B,i)$, while the second term is chosen according to $n_{r_t}(X_t)$ (Line 7 of Algorithm~\ref{algo:one-arm}). We next relate the counts $m_t(B,i)$ and $n_{r_t}(X_t)$ to show that the R.H.S. of \eqref{eqn:bias-variance-app} is $O(r_t)$.

	\paragraph{Relating Arm-Pull Counts $m_t(B,i)$ to Covariate Counts $n_r(X_t)$.}

	
	
	We start by showing that the chosen level $r_t$ and bin $B \doteq T_{r_t}(X_t)$ cannot ``skip'' levels in the sense that $B$ is selected before any of its descendants. Thus, each arm in ${\cal I}_B$ had a chance of being pulled once (by the randomization in Line 12 of Algorithm~\ref{algo:one-arm}) for each of the $n_{r_t}(X_t)$ covariates.
	
	This will ensure that each arm in ${\cal I}_B$ has been played roughly the same number of times, so that $m_t(B,i) \gtrsim n_{r_t}(X_t)/K$ (Lemma~\ref{lem:arm-count-randomization} further below), thus relating the two counts.

	\begin{lemma}\label{lem:no-skip}
		Fix a bin $B$, and suppose $t$ is the first round that $B$ is selected. Then, no descendant bin of $B$ was selected in a round previous to $t$.
	\end{lemma}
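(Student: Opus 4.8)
The plan is to deduce the statement by an induction along the tree from a single-level claim, and then to prove that claim directly from the level-selection rule. Set $c \doteq 8K\log(K/\delta)$, and recall that at any round $t > \lceil c\rceil$ Line~7 of Algorithm~\ref{algo:one-arm} sets $r_t$ to the smallest $r\in\mathcal{R}$ with $r \geq \sqrt{c/n_r(X_t)}$, equivalently with $r^2 n_r(X_t) \geq c$; the bin ``selected'' at round $t$ is then $T_{r_t}(X_t)$. Since $n_r(X_t)$ is non-decreasing in $r$ (nested bins) and $r^2$ increases, the inequality $r^2 n_r(X_t)\geq c$ defines an upward-closed subset of $\mathcal{R}$; as consecutive levels in $\mathcal{R}$ differ by a factor of two, this means $r_t = 2\rho$ exactly when level $2\rho$ satisfies the inequality while level $\rho$ does not.

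The single-level claim I would prove is: \emph{if a bin $B''$ of side length $\rho$ is selected at round $s$, then its parent bin $B'$ (side length $2\rho$) is selected at some round $<s$.} Granting this, the lemma follows: given any descendant $B''$ of $B$ selected at some round $s$, applying the claim repeatedly along the finite chain of ancestors from $B''$ up to $B$ yields a strictly decreasing sequence of rounds, all $<s$, at which each ancestor — in particular $B$ — is selected. Hence, if $t$ is the first round $B$ is selected, no descendant of $B$ can be selected at a round $<t$.

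To prove the claim, suppose $B''=T_\rho(X_s)$ is selected at round $s$, so level $\rho$ satisfies Line~7's criterion at round $s$: $\rho^2 n_\rho(X_s)\geq c$, i.e.\ $B''$ contains at least $c/\rho^2$ covariates from rounds $1,\dots,s-1$, and therefore so does its parent $B'\supseteq B''$. Let $v$ be the round at which the $\bigl(\lceil c/(4\rho^2)\rceil+1\bigr)$-th covariate falls into $B'$; since $B'$ holds at least $c/\rho^2 = 4\cdot c/(4\rho^2) \geq \lceil c/(4\rho^2)\rceil + 1$ covariates before round $s$, this round $v$ exists and satisfies $v<s$, with $X_v\in B'$ (and $v>\lceil c\rceil$, so Line~7 applies at round $v$). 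At round $v$ the bin $B'$ contains exactly $\lceil c/(4\rho^2)\rceil$ covariates from earlier rounds, so $n_{2\rho}(X_v)=\lceil c/(4\rho^2)\rceil$ and $(2\rho)^2 n_{2\rho}(X_v)\geq c$: level $2\rho$ satisfies the criterion at round $v$. On the other hand, every child of $B'$ — in particular $T_\rho(X_v)$ — can contain no more covariates than $B'$ itself, so $n_\rho(X_v)\leq \lceil c/(4\rho^2)\rceil < c/\rho^2$, whence $\rho^2 n_\rho(X_v) < c$ and level $\rho$ fails the criterion at round $v$. By the monotonicity above, $r_v = 2\rho$, so the bin selected at round $v$ is $T_{2\rho}(X_v)=B'$, with $v<s$, as claimed.

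I expect the heart of the argument — and the step most likely to need care — to be this one-level claim, specifically the observation that pins $r_v$ to $2\rho$: at the first round $v$ at which $B'$ has just enough covariates for level $2\rho$ to become admissible, it necessarily has far too few (by the factor-$4$ gap between the thresholds $c/(2\rho)^2$ and $c/\rho^2$) for \emph{any} single child of $B'$ to be admissible, so the selection rule lands precisely on $B'$. The remaining work is elementary bookkeeping — that a child's covariate count never exceeds its parent's, plus handling the ceilings and the harmless proviso that $c=8K\log(K/\delta)$ is bounded below by a small constant.
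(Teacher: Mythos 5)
Your proof is correct, and it reaches the lemma by a genuinely different (and arguably cleaner) route than the paper's. The paper argues by contradiction with a minimal counterexample: it lets $s$ be the first round at which \emph{any} descendant of $B$ is selected, uses the factor-$4$ gap between the admissibility thresholds $8K\log(K/\delta)/r^2$ at levels $r_s$ and $r_t \geq 2r_s$ to exhibit an earlier round $s'<s$ with $X_{s'}\in B'$ at which level $r_t$ is already admissible, concludes $r_{s'}\le r_t$, and then derives a contradiction from the resulting disjunction (the bin selected at $s'$ is either $B$ itself, contradicting minimality of $t$, or a strict descendant of $B$, contradicting minimality of $s$). You instead prove the stronger one-level statement that the parent of any selected bin is itself selected strictly earlier --- pinned to the exact round $v$ at which the parent's covariate count first reaches $\lceil c/(4\rho^2)\rceil+1$, where level $2\rho$ has just become admissible while no child of the parent can yet be admissible --- and then chain this up the ancestor path. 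The quantitative engine is identical (the factor-$4$ gap between thresholds at consecutive dyadic levels), but your decomposition buys a constructive identification of \emph{when} each ancestor is first selected and avoids the paper's slightly delicate two-pronged contradiction. Both arguments share the same two minor provisos, which you correctly flag: the strict inequality $\lceil c/(4\rho^2)\rceil < c/\rho^2$ (mirrored in the paper's step $\lceil n_{r_s}(X_s)/4\rceil < n_{r_s}(X_s)$) needs $c=8K\log(K/\delta)$ bounded away from zero, which is automatic for $K\ge 2$ and $\delta$ bounded away from $1$; and the identified round must lie past the initialization phase $t\le\lceil 8K\log(K/\delta)\rceil$, which follows since $v-1\ge \lceil c/(4\rho^2)\rceil \ge c$ for $\rho\le 1/2$.
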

	
	\begin{proof}
		For contradiction, suppose a descendant $B'\subset B$ was selected at round $s<t$. W.L.O.G., let $s$ be the first round that any descendant of $B$ was selected. Then, by the criterion for choosing $r_s$ on Line 7 of Algorithm~\ref{algo:one-arm}, we have:
		\[
			r_s \geq \sqrt{\frac{8 K\log(K/\delta)}{n_{r_s}(X_s)}} \implies n_{r_s}(X_s) \geq \frac{8K\log(K/\delta)}{ r_s^2}.
		\]
		However, since $r_t \geq 2r_s$, we have
		\[
		    \frac{8 K\log(K/\delta)}{ r_t^2} \leq \frac{8 K\log(K/\delta)}{ (2r_s)^2} \leq \left\lceil \frac{n_{r_s}(X_s)}{4}\right\rceil < n_{r_s}(X_s).  
		\]
		The above implies that there was an earlier round $s' < s$ and a covariate $X_{s'} \in B'$ such that
		\[
			n_{r_{s}}(X_{s'}) \geq \frac{8 K\log(K/\delta)}{ r_t^2} \implies r_t \geq \sqrt{\frac{8 K\log(K/\delta)}{n_{r_s}(X_{s'})}} \geq \sqrt{\frac{8 K\log(K/\delta)}{n_{r_t}(X_{s'})}}.
		\]	
		According to the minimization criterion for choosing $r_{s'}$ on Line 7 of Algorithm~\ref{algo:one-arm}, we must have $r_{s'} \leq r_t$, a contradiction to either $t$ being the first round that $B$ is selected or $s$ being the first round that a descendant of $B$ was selected.
		
	\end{proof}

	We next use a concentration argument to show that $m_t(B,i) \gtrsim n_{r_t}(X_t)/K$. 
	
	\begin{lemma}\label{lem:arm-count-randomization}
		Fix a round $t > \lceil 8K \log(K/\delta)\rceil$ with observed covariate $X_t$ and selected bin $B$. Suppose that $t$ is the first round that $B$ is selected. Then, with probability at least $1-\delta$ with respect to the distribution of $\textbf{Y}_{t-1},\{Z_s\}_{s<t}|\textbf{X}_t$, we have
		\begin{align*}
			\forall i \in \mathcal{I}_B: m_t(B,i) \geq \frac{n_{r_t}(X_t)}{4K}.
		\end{align*}
	\end{lemma}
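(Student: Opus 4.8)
The plan is to show that each of the $n \doteq n_{r_t}(X_t)$ covariates that fell into $B$ before round $t$ gave arm $i$ a chance at least $1/K$ of being pulled, and then to apply a multiplicative Chernoff bound. So the first (structural) step: for every $s < t$ with $X_s \in B$, I claim the bin $B_s \doteq T_{r_s}(X_s)$ selected at round $s$ is a strict ancestor of $B$. Indeed $X_s$ lies in both $B$ and $B_s$, so the two bins are nested; if $B_s \subsetneq B$ then $B_s$ is a descendant of $B$ selected before $t$, contradicting Lemma~\ref{lem:no-skip}; and $B_s = B$ contradicts $t$ being the first round $B$ is selected. (For $s$ in the initial phase $s \le \lceil 8K\log(K/\delta)\rceil$ there is no selected bin, but then $\pi_s$ is uniform over all of $[K]$, which only helps.)

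The second step is the bookkeeping on candidate sets. Since $B_s \supseteq B$, when Line~9 of Algorithm~\ref{algo:one-arm} is executed at round $t$ it intersects $\mathcal{I}_B$ with $\mathcal{I}_{B_s}$, so any $i \in \mathcal{I}_B$ (as maintained at round $t$) must also lie in $\mathcal{I}_{B_s}$ at round $t$; and candidate sets only shrink over time, so $i \in \mathcal{I}_{B_s}$ at every round up to $t$, in particular right after the refinement of Line~11 at round $s$ (which precedes the draw in Line~12). Hence, conditioning on $\mathbf{X}_t$ and on everything revealed strictly before the internal randomness $Z_s$, the arm pulled at round $s$ is uniform over the current $\mathcal{I}_{B_s}$, so $\Pr[\pi_s = i \mid \cdot] = 1/|\mathcal{I}_{B_s}| \ge 1/K$.

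For the concentration step, note that once we condition on $\mathbf{X}_t$ the index set $S \doteq \{s < t : X_s \in B\}$ is deterministic with $|S| = n_{r_t}(X_t)$, and $m_t(B,i) = \sum_{s \in S}\mathbbm{1}\{\pi_s = i\}$. Writing $S = \{s_1 < \cdots < s_n\}$ and letting $\mathcal{H}_j$ be the $\sigma$-algebra generated by $\mathbf{X}_t$ together with all rewards and all internal draws strictly before round $s_j$, the previous step gives $\mathbb{E}[\mathbbm{1}\{\pi_{s_j}=i\}\mid \mathcal{H}_j] \ge 1/K$ while $\mathbbm{1}\{\pi_{s_j}=i\}$ is $\mathcal{H}_{j+1}$-measurable. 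The standard Chernoff bound for sums of $\{0,1\}$-variables with conditional means bounded below then yields $\Pr[m_t(B,i) < n_{r_t}(X_t)/(4K)] \le \exp(-c\, n_{r_t}(X_t)/K)$ for an absolute constant $c$ (e.g.\ $c = 9/32$ from the $(1-\tfrac34)$-multiplicative form). Finally I would observe that $n_{r_t}(X_t) \ge 8K\log(K/\delta)$ always: the choice of $r_t$ on Line~7 forces $n_{r_t}(X_t) \ge 8K\log(K/\delta)/r_t^2 \ge 8K\log(K/\delta)$ since $r_t \le 1$ (and at the root level $n_{r_t}(X_t) = t-1 > \lceil 8K\log(K/\delta)\rceil$). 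Plugging this in bounds the per-arm failure probability by $\delta/K$, and a union bound over the at most $K$ arms of $\mathcal{I}_B$ gives the claimed $1-\delta$.

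The main obstacle is not difficulty but care in the second step: one must correctly propagate ``$i \in \mathcal{I}_B$ at round $t$'' backward to ``$i \in \mathcal{I}_{B_s}$ at round $s$'' for every relevant $s$, which requires Lemma~\ref{lem:no-skip} (so that $B_s$ is an ancestor of $B$ and the Line~9 intersection is the pertinent one), monotonicity of the candidate sets, and the ordering of Lines~11 and~12. The nesting argument of the first step and the Chernoff argument of the third step are routine.
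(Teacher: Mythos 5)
Your proof is correct, and while the structural half (using Lemma~\ref{lem:no-skip} to show every prior covariate in $B$ was handled at an ancestor bin whose candidate set still contained $i$, so each such round pulled $i$ with conditional probability at least $1/K$) matches the paper's reasoning, your concentration step takes a genuinely different and simpler route. The paper, worried that the draws $\{Z_s\}$ are not independent because $|\mathcal{I}_{B_s}|$ is itself random and time-varying, localizes to the window $[t_0,t]$ after the parent $B'=T_{2r_t}(X_t)$ is first selected, fixes the candidate set $\mathcal{A}_{t_0}$ there, couples the actual pull count with a genuine $\text{Binomial}(n_{[t_0,t]}(B),1/|\mathcal{A}_{t_0}|)$, and must additionally prove the counting fact $n_{[t_0,t]}(B)\geq n_{r_t}(X_t)/2$ (via the Line~7 criteria at $t$ and $t_0$) to recover the $1/(4K)$ fraction; it also treats the root level $r_t=1$ as a separate case. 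You instead apply a martingale (conditional-mean) multiplicative Chernoff bound directly over all of $S=\{s<t: X_s\in B\}$ with the uniform lower bound $1/|\mathcal{I}_{B_s}|\geq 1/K$, which dispenses with the parent-bin window, the factor-of-two counting argument, and the root-level case split, at the cost of invoking a slightly less elementary concentration inequality --- though that inequality is itself standard and is proved by exactly the stochastic-domination coupling the paper uses implicitly when it asserts $m_{[t_0,t]}(B,i)\geq\tilde m_{[t_0,t]}(B,i)$. Both arguments land on the same per-arm failure probability $\delta/K$ (yours via $n_{r_t}(X_t)\geq 8K\log(K/\delta)$ and exponent $9/4>1$), so nothing is lost; your version is arguably the cleaner writeup of the same underlying mechanism.
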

	
	\begin{proof}
	Fix the values of ${\bf X}_{t},{\bf Y}_{t-1},{\cal I}_B$ and fix some $i\in{\cal I}_B$. Recall: 
		\[
			m_t(B,i) = \sum_{X_s\in B, s\leq t-1} \mathbbm{1}\{\pi_{s}=i\}.
		\]
		We first handle the case where $r_t=1$ and thus $B=[0,1]^D$. Then, we have 
		\[
		    n_{r_t}(X_t) = n_1(X_t) = \lceil 8K\log(K/\delta)\rceil + 1.
		\]
		By Line 5 of Algorithm~\ref{algo:one-arm}, for each of the $t$ rounds elapsed thus far, we pulled arm $i$ with probability $1/K$. Thus, we have
		\[
			\mathbb{E}[m_t(B,i)] = n_1(X_t)\cdot \frac{1}{K} \geq 8\log(K/\delta).
		\]
		Then, by a Chernoff bound, since $\{Z_s\}_{s<t}$ are independent: 
		\[
			\mathbb{P}\left( m_t(B,i) \leq \frac{n_{1}(X_t)}{2K}\right) \leq \mathbb{P}\left(m_t(B,i) \leq \frac{\mathbb{E}\, [m_t(B,i)]}{2}\right) \leq \delta/K.
		\]
		This gives us the desired result for $r_t=1$.
		
		The general case of $r_t<1$ will follow similarly from a Chernoff bound. However, more care is required in that the sequence $\{Z_s\}_{s<t}$ is no longer independent since each $Z_s$ depends on ${\cal I}_B$ (Line 12 of Algorithm~\ref{algo:one-arm}), a random object possibly varying with time and depending on $\{Z_s\}_{s<t},{\bf Y}_{t-1},{\bf X}_{t-1}$. To overcome this, we relate $m_t(B,i)$ to a smaller count of {\em independently randomized} arm-pulls so that we can use concentration.
		
		First, let $t_0$ be the first round that the parent $B' \doteq T_{2r_t}(X_t)$ of bin $B$ was used. Let $m_{[t_0,t]}(B,i)$ be the number of pulls of arm $i$ in bin $B$ between rounds $t_0$ and $t$. Then, it suffices to show $m_{[t_0,t]}(B,i) \geq n_{r_t}(X_t)/(4K)$.
				
		Let ${\cal A}_{t_0}$ be the set of candidate arms determined in $B'$ at time $t_0$ so that ${\cal A}_{t_0} \supseteq {\cal I}_B$. Next, let $\tilde{m}_{[t_0,t]}(B,i)$ be a draw from a $\text{Binomial}(n_{[t_0,t]}(B),1/|{\cal A}_{t_0}|)$ distribution where $n_{[t_0,t]}(B)$ is the number of covariates observed in $B$ between times $t_0$ and $t$. More plainly, $\tilde{m}_{[t_0,t]}(B,i)$ is the number of pulls of arm $i$ in $B$ between times $t_0$ and $t$ if Algorithm~\ref{algo:one-arm} was {\em prohibited} from eliminating any arms in $\mathcal{A}_{t_0}$ between times $t_0$ and $t$ in bin $B'$ -- in other words, $\tilde{m}_{[t_0,t]}(B,i)$ counts the number of times arm $i$ is selected with probability $1/|\mathcal{A}_{t_0}|$ instead of with probability $1/|{\cal I}_B|$, as in Line 12 of Algorithm~\ref{algo:one-arm}.
		
		Then, since $|{\cal I}_B|\leq |{\cal A}_{t_0}|$, we have that $m_{[t_0,t]}(B,i) \geq \tilde{m}_{[t_0,t]}(B,i)$ so that it remains to show $\tilde{m}_{[t_0,t]}(B,i) \geq n_{r_t}(X_t)/(4K)$. 
		
		We first show that the number $n_{[t_0,t]}(B)$ of covariates observed in $B$ between $t_0$ and $t$ is at least $n_{r_t}(X_t)/2$. Since $t$ is the first time that $B$ is chosen, we must have by Line 7 of Algorithm~\ref{algo:one-arm}
		\begin{equation}\label{eqn:b-count}
			r_t \geq \sqrt{\frac{8K\log(K/\delta)}{n_{r_t}(X_t)}} \implies n_{r_t}(X_t) \geq \frac{8K\log(K/\delta)}{r_t^2}.
		\end{equation}
		By similar reasoning, since $t_0$ is the first time that $B$'s parent at level $2r_t$ is chosen, we must have
		\begin{equation}\label{eqn:b-count-parent}
			\sqrt{\frac{8K\log(K/\delta)}{n_{2r_t}(X_{t_0})-1}} > 2r_t \implies n_{2r_t}(X_{t_0}) \leq \frac{8K\log(K/\delta)}{4r_t^2}+1.
		\end{equation}
		Then, putting \eqref{eqn:b-count} and \eqref{eqn:b-count-parent} together, we have $n_{r_t}(X_t) - n_{[t_0,t]}(B) \leq n_{2r_t}(X_{t_0}) \leq \frac{n_{r_t}(X_t)}{2}$. This implies that $n_{[t_0,t]}(B) \geq n_{r_t}(X_t)/2$.		
		
		Then, by Lemma~\ref{lem:no-skip}, at every round $s\in [t_0,t) \cap \mathbb{N}$ during which a covariate was observed in $B$, we pulled arm $i$ with probability at least $1/K$. Thus, we have \[
			\mathbb{E}\,[\tilde{m}_{[t_0,t]}(B,i)\mid \textbf{X}_{t_0},\textbf{Y}_{t_0},{\cal A}_{t_0}] \geq n_{[t_0,t]}(B) \cdot \frac{1}{K} \geq  \frac{ n_{r_t}(X_t)}{2K}.
		\]
		Furthermore, we have from \eqref{eqn:b-count} and the fact that $r_t \leq 1/2$ that the above R.H.S. is further lower bounded by:
		\[
			\frac{8\log(K/\delta)}{2r_t^2} \geq 8\log(K/\delta).
		\]		
		Then, since $\tilde{m}_{[t_0,t]}(B,i)$ is a sum of {\em independent} Bernoulli's conditioned on ${\bf X}_{t_0},{\bf Y}_{t_0},{\cal A}_{t_0}$, by a Chernoff bound, we have: \[
		\mathbb{P}\left( \tilde{m}_{[t_0,t]}(B,i) \leq \frac{n_{r_t}(X_t)}{4K}\right) \leq \mathbb{P}\left( \tilde{m}_{[t_0,t]}(B,i) \leq \frac{\mathbb{E}\, [\tilde{m}_{[t_0,t]}(B,i)\mid {\bf X}_{t_0},{\bf Y}_{t_0},{\cal A}_{t_0}]}{2}\right) \leq \delta/K.
		\]
		Re-tracing our previous steps, we have with probability at least $1-\delta/K$:
		\[
			m_t(B,i) \geq m_{[t_0,t]}(B,i) \geq \tilde{m}_{[t_0,t]}(B,i) > \frac{n_{r_t}(X_t)}{4K}.
		\]
		
		By a union bound and the tower property, we have that the event $\{\forall i\in\mathcal{I}_B: m_t(B,i) \geq \frac{n_{r_t}(X_t)}{4K}\}$ holds with probability at least $1-\delta$ w.r.t. the distribution of $\textbf{Y}_{t-1},\{Z_s\}_{s<t}|\textbf{X}_t$.
	\end{proof}
		
	Now, consider a round $t> \lceil 8K\log(K/\delta)\rceil$ with observed covariate $X_t$ and selected bin $B$. Suppose $s\leq t$ is the first round when bin $B$ is selected. Then, the set of candidate arms determined at round $s$ in $B$ must contain any arm currently retained in $B$ at round $t$.
	
	 Furthermore, by Lemma~\ref{lem:arm-count-randomization}, we have that with probability at least $1-\delta$, $m_t(B,i)\geq m_s(B,i)> n_{r_s}(X_s)/(4K)$. Thus, combining this with our earlier bias-variance bound \eqref{eqn:bias-variance-app}:
		\begin{align*}
			|\hat{f}_t^i(B)-f^i(x)| &\leq \sqrt{\frac{4K\log(2K/\delta)}{n_{r_s}(X_s)}} + 2\lambda r_t \leq 2\lambda r_s + 2\lambda r_t = 4\lambda  r_t.
		\end{align*}
		This shows that the regression error at round $t$ is at most $4\lambda r_t$. 
	
	\paragraph{Justifying Arm Eliminations.}
	\label{subsec:check-arm-eliminations}
	
	
	For each round $t > 8K\log(K/\delta)$, define the event $G_t$ on which Lemma~\ref{lem:bias-variance} and Lemma~\ref{lem:arm-count-randomization} hold or \[
		G_t \doteq \left\{\forall i \in \mathcal{I}_B, x \in B:|\hat{f}_t^i(B) - f^i(x)|\leq 4\lambda r_t, B=T_{r_t}(X_t)\right\}.
\]
	This is the ``good'' event on which our regression function estimates $\hat{f}_t^i(B)$ are accurate enough to be able to discern which arms have low and high rewards. For the sake of brevity, from here on, let $B$ be the selected bin at round $t$. This first proposition asserts that an eliminated arm cannot have a better reward than the best candidate arm. 
	
\begin{prop}\label{prop:better-arm}
Suppose at round $t$, under event $G_t$, we select bin $B$. Then for any two arms $i,j\in\mathcal{I}_{B}$ and any $x\in B$, 
\[
	\hat{f}_t^i(B)-\hat{f}_t^j(B)> 8\lambda r_t\implies f^i(x)>f^j(x)
\]
\end{prop}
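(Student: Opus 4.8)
The plan is to prove Proposition~\ref{prop:better-arm} directly from the uniform regression-error control guaranteed on the event $G_t$, via two applications of the triangle inequality. Recall that, by definition of $G_t$ (which aggregates Lemma~\ref{lem:bias-variance} and Lemma~\ref{lem:arm-count-randomization} as already established in the preceding paragraphs), whenever we select bin $B = T_{r_t}(X_t)$ at round $t$, we have for every candidate arm $k \in \mathcal{I}_B$ and every $x \in B$ that $|\hat f_t^k(B) - f^k(x)| \leq 4\lambda r_t$.

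First I would rewrite this two-sided bound as the two one-sided estimates needed for the comparison: for the arm $i$ we use $f^i(x) \geq \hat f_t^i(B) - 4\lambda r_t$, and for the arm $j$ we use $f^j(x) \leq \hat f_t^j(B) + 4\lambda r_t$. Both hold because $i, j \in \mathcal{I}_B$ and $x \in B$, so the hypotheses of $G_t$ apply to each. Then I would subtract the second inequality from the first to obtain
\[
	f^i(x) - f^j(x) \;\geq\; \bigl(\hat f_t^i(B) - \hat f_t^j(B)\bigr) - 8\lambda r_t.
\]
Finally, invoking the hypothesis $\hat f_t^i(B) - \hat f_t^j(B) > 8\lambda r_t$ gives $f^i(x) - f^j(x) > 0$, which is exactly the claim.

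There is no real obstacle here: the statement is a bookkeeping consequence of the error bound baked into $G_t$, and the only thing to be careful about is that the constant $8\lambda r_t$ in the elimination threshold is exactly twice the $4\lambda r_t$ per-arm error radius, so that the two error terms combine to cancel the threshold and leave a strict inequality. The content-bearing work — relating $m_t(B,i)$ to $n_{r_t}(X_t)$ so that the error radius is indeed $O(\lambda r_t)$ — has already been done prior to this proposition, so the proof is just the clean triangle-inequality step above.
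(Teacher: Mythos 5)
Your proof is correct and is exactly the paper's argument: the one-line proof in the appendix is precisely the triangle-inequality step $f^i(x)-f^j(x)\geq \hat{f}_t^i(B)-\hat{f}_t^j(B)-8\lambda r_t>0$ that you spell out, using the $4\lambda r_t$ per-arm error radius from the definition of $G_t$. No differences beyond level of detail.
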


\begin{proof}
Using the definition of $G_t$, we have
	\[
		f^i(x)-f^j(x)\geq \hat{f}_t^i(B)-\hat{f}_t^j(B)-8\lambda r_t>0.
	\]
\end{proof}

The first implication of Proposition~\ref{prop:better-arm} is that under event $\cap_{s=1}^t G_s$, the best arm $i^*(x)$ at any covariate $x \in B$ is always retained in $\mathcal{I}_B$. This is immediate since $i^*(x)$ can never be discarded for any $x\in B$ as long as the regression bounds of $G_s$ hold.

\begin{cor}\label{cor:best-arm-candidate}
	Suppose at round $t$, under event $(\cap_{s=1}^t G_s)$, we select bin $B$. Then $\mathcal{I}_{B}$ contains the best arm $i^*(x)=\argmax_{j\in[K]} f^j(x)$ for all $x\in B$.
\end{cor}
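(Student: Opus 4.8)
The plan is to prove the statement by induction on the level $r$ of bins in the tree $T$, ordered from the root level $r=1$ down to finer levels: I will show that, on the event $\bigcap_{s=1}^{t}G_s$, for every round $t$ and every bin $B$ at level $r$, the candidate set $\mathcal{I}_{B}$ maintained by Algorithm~\ref{algo:one-arm} contains $i^*(x)$ for all $x\in B$. The structural observation driving this is that $\mathcal{I}_B$ is modified only on Lines 9 and 11 of Algorithm~\ref{algo:one-arm}: Line 9 replaces $\mathcal{I}_B$ by its intersection with the candidate sets of $B$'s ancestors, while Line 11 removes precisely those arms $i$ with $\hat f_t^i(B) < \hat f_t^{(1)}(B) - 8\lambda r_t$. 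It therefore suffices to argue that neither operation can ever expel $i^*(x)$.

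For Line 11, suppose toward a contradiction that at some round $s$ the arm $i^*(x)$ is removed from $\mathcal{I}_{B}$ via this rule. Since no arm is eliminated during the initial exploration phase (Line 5), we have $s>\lceil 8K\log(K/\delta)\rceil$, so the event $G_s$ is defined and, by assumption, holds. Let $j$ attain $\hat f_s^{(1)}(B)$; then both $i^*(x)$ and $j$ lie in $\mathcal{I}_B$ at that moment (the former because it has not yet been removed, the latter by the definition of $\hat f^{(1)}$), and the elimination rule gives $\hat f_s^{j}(B)-\hat f_s^{i^*(x)}(B) > 8\lambda r_s$. Applying Proposition~\ref{prop:better-arm} to the pair consisting of $j$ and $i^*(x)$, and using $x\in B$, we get $f^{j}(x) > f^{i^*(x)}(x)$, which contradicts $i^*(x) = \argmax_{k\in[K]} f^{k}(x)$.

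For Line 9, every ancestor of $B$ sits at a strictly coarser level, and any $x\in B$ lies in each of those ancestor bins; by the induction hypothesis their candidate sets all contain $i^*(x)$, hence so does the intersection taken on Line 9. The base case is $r=1$, the root $B=[0,1]^D$, which has no ancestors, so only the Line 11 argument is needed there. Combining the two cases completes the inductive step, and applying the claim at level $r=r_t$ to the selected bin $B=T_{r_t}(X_t)$ yields the corollary.

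The only delicate point is the bookkeeping of the recursive definition of $\mathcal{I}_B$ through the ancestor intersection on Line 9; organizing the induction by tree level makes this transparent, since an ancestor's candidate set is itself covered by the induction hypothesis at the time it is read on Line 9. Everything else reduces immediately to Proposition~\ref{prop:better-arm}, which is why the authors describe the conclusion as immediate.
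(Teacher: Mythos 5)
Your proof is correct and follows essentially the same route as the paper: the core step is the application of Proposition~\ref{prop:better-arm} to show that the elimination rule on Line 11 can never expel $i^*(x)$ under $G_s$, which is exactly what the paper invokes when it calls the corollary ``immediate.'' Your explicit induction over tree levels to handle the ancestor-intersection on Line 9 is sound bookkeeping that the paper leaves implicit, but it does not constitute a different argument.
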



The next corollary gives us that the margin and regret of playing any candidate arm at any point in $B$ is bounded by $\lambda r_t$. Following the discussion of Section~\ref{sec:regret-analysis}, it will then suffice to bound $r_t$ in terms of $n_P$ and $t$.

\begin{cor}\label{cor:classification-hard}
	Suppose at round $t$, under event $(\cap_{s=1}^t G_s)$, we select bin $B$. Then, both of the following hold for all $x\in B$:
	\begin{enumerate}[label=(\arabic*)]
		\item $|f^{(1)}(x)-f^{j}(x)|\leq 16\lambda r_t$ for all $j\in\mathcal{I}_{B}$.
		\item Either $0<|f^{(1)}(x)-f^{(2)}(x)|\leq 16\lambda r_t$ or $f^j(x)=f^{(1)}(x)$ for all $j\in\mathcal{I}_B$
	\end{enumerate}
\end{cor}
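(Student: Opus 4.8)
The plan is to read off both parts of Corollary~\ref{cor:classification-hard} from the good event $G_s$ (which, under $\cap_{s=1}^t G_s$, gives the uniform regression bound $|\hat f_t^i(B)-f^i(x)|\le 4\lambda r_t$ for $i\in\mathcal{I}_B$, $x\in B$), the arm-elimination threshold on Line~11 of Algorithm~\ref{algo:one-arm}, and the already-established Corollary~\ref{cor:best-arm-candidate} that the optimal arm $i^*(x)$ at every $x\in B$ survives in $\mathcal{I}_B$.

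For part (1), fix $x\in B$, set $i^\star\doteq i^*(x)$, and write $\hat f_t^{(1)}(B)\doteq\max_{i\in\mathcal{I}_B}\hat f_t^i(B)$. Since Line~11 discards only arms with $\hat f_t^i(B)<\hat f_t^{(1)}(B)-8\lambda r_t$, the maximizer is retained and every surviving $j\in\mathcal{I}_B$ satisfies $\hat f_t^j(B)\ge\hat f_t^{(1)}(B)-8\lambda r_t$. Applying the $G_t$ bound twice and using $i^\star\in\mathcal{I}_B$ (Corollary~\ref{cor:best-arm-candidate}): $f^{(1)}(x)=f^{i^\star}(x)\le\hat f_t^{i^\star}(B)+4\lambda r_t\le\hat f_t^{(1)}(B)+4\lambda r_t$, while $f^j(x)\ge\hat f_t^j(B)-4\lambda r_t\ge\hat f_t^{(1)}(B)-12\lambda r_t$. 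Subtracting yields $0\le f^{(1)}(x)-f^j(x)\le 16\lambda r_t$, which is part (1).

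For part (2), I would split on the definition of $f^{(2)}$. If $f^j(x)=f^{(1)}(x)$ for all $j\in\mathcal{I}_B$, the second alternative holds and nothing more is needed. Otherwise pick $j_0\in\mathcal{I}_B$ with $f^{j_0}(x)<f^{(1)}(x)$; then the values $\{f^i(x)\}_{i\in[K]}$ are not all equal, so $f^{(2)}(x)$ is the largest value strictly below $f^{(1)}(x)$, giving $f^{(2)}(x)\ge f^{j_0}(x)$ and $f^{(1)}(x)-f^{(2)}(x)>0$. Combining with part (1) applied to $j_0$ gives $0<f^{(1)}(x)-f^{(2)}(x)\le f^{(1)}(x)-f^{j_0}(x)\le 16\lambda r_t$, the first alternative.

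This is mostly a triangle-inequality exercise, so there is no serious obstacle; the one point needing care is that the elimination rule compares everything to the \emph{empirical} best $\hat f_t^{(1)}(B)$ over $\mathcal{I}_B$, and converting this into a statement about the \emph{true} optimum requires that an optimal arm be a candidate — which is exactly what Corollary~\ref{cor:best-arm-candidate} guarantees, and why the threshold $8\lambda r_t$ is chosen comfortably above the regression error $4\lambda r_t$. I would also state explicitly that $\hat f_t^{(1)}(B)$ is a maximum over $\mathcal{I}_B$ while $f^{(2)}(x)$ is defined over all of $[K]$, since the two index sets differ (harmlessly) in the argument.
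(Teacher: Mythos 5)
Your proof is correct and follows essentially the same route as the paper's: part (1) combines the $4\lambda r_t$ regression accuracy from $G_t$ (applied to both $j$ and the surviving optimal arm $i^*(x)$ guaranteed by Corollary~\ref{cor:best-arm-candidate}) with the $8\lambda r_t$ elimination threshold to get $16\lambda r_t$, and part (2) splits on whether $\mathcal{I}_B$ contains a suboptimal arm and invokes part (1) on it. The only cosmetic difference is that you track the two $4\lambda r_t$ errors separately ($4+12$) while the paper groups them as a single $8\lambda r_t$ before invoking the non-elimination of $j$; the substance is identical.
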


\begin{proof}
	Fix $x \in B$, and let $i^*(x)=\underset{j\in[K]}{\argmax\  } f^j(x)$. Using the definition of $G_t$ and the fact that $i^*(x)\in\mathcal{I}_{B}$ (Corollary~\ref{cor:best-arm-candidate}), we first establish (1):
	\begin{align*}
		f^{i^*(x)}(x)-f^{j}(x)	&\leq \hat{f}^{i^*(x)}(x) - \hat{f}^j(x) + 8\lambda r_t\\
		&\leq \hat{f}^{(1)}(x) - \hat{f}^j(x) + 8\lambda r_t\\
		&\leq 16\lambda r_t \text{ (because $j$ was not eliminated)}
	\end{align*}
	To show (2), we have if $\mathcal{I}_{B}$ contains a sub-optimal arm $j\in\mathcal{I}_{B}$ at $x$, then by (1),
	\begin{align*}
		|f^{(1)}(x)-f^{(2)}(x)|=f^{(1)}(x)-f^{(2)}(x)\leq f^{(1)}(x)-f^{j}(x)\leq 16\lambda r_t.
	\end{align*}
    On the other hand, if ${\cal I}_B$ does not contain a sub-optimal arm at $x$, then $f^j(x)=f^{(1)}(x)$ for all $j\in{\cal I}_B$.
\end{proof}

	\paragraph{Relating adaptive $r_t$ to an oracle choice $r_t^{*}$}
	
	Following the outline of Section~\ref{sec:regret-analysis}, we relate $r_t$ to the oracle choice $r_t^* \doteq r_t(\gamma,n_P)$ for rounds $t > n_P$. We do this by first establishing that $r_t$ leads to near optimal regression estimates. Let
	\[
		\phi_t(r) \doteq \sqrt{\frac{8K\log(K/\delta)}{n_r(X_t)}} + \lambda r.
	\]
	This is essentially the bias-variance decomposition from earlier bounding the regression error $|f^i(x) - \hat{f}_t^i(B)|$. Next, we show that $r_t$, the high probability regret bound achieved thus far (Corollary~\ref{cor:classification-hard}), is less than $\phi_t(r)$ for any $r\in {\cal R}$.
	
\begin{prop}[$r_t$ minimizes $\phi_t(\cdot)$]\label{prop:rt-minimizer}
	We have $r_t \leq 2\cdot\underset{r \in \mathcal{R}}\min\, \phi_t(r)$.
\end{prop}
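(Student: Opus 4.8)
The plan is to prove the stronger claim that $r_t \leq 2\phi_t(r)$ for \emph{every} $r \in \mathcal{R}$, from which the proposition follows by minimizing over $r$. First I would fix notation: write $c \doteq \sqrt{8K\log(K/\delta)}$ and $g(r) \doteq c/\sqrt{n_r(X_t)}$, with the convention $g(r) = +\infty$ when $n_r(X_t) = 0$, so that $\phi_t(r) = g(r) + \lambda r$ and, by Line~7 of Algorithm~\ref{algo:one-arm}, $r_t = \min\{r \in \mathcal{R}: r \geq g(r)\}$. The one structural fact to record up front is that $n_r(X_t)$ is non-decreasing in $r$: since $T$ is a nested hierarchy of partitions, $r' \leq r$ implies $T_{r'}(X_t) \subseteq T_r(X_t)$, hence $n_{r'}(X_t) \leq n_r(X_t)$, and therefore $g$ is non-increasing on $\mathcal{R}$.

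Then I would fix $r \in \mathcal{R}$ and split into two cases. If $r \geq r_t$, then using $\lambda \geq 1$ (an input requirement of Algorithm~\ref{algo:one-arm}) we get $\phi_t(r) \geq \lambda r \geq \lambda r_t \geq r_t$, so trivially $r_t \leq 2\phi_t(r)$. If instead $r < r_t$, then since both lie in the dyadic grid $\mathcal{R} = \{2^{-i} : i \in \mathbb{N}\cup\{0\}\}$ we have $r \leq r_t/2$, and $r_t/2$ is itself an element of $\mathcal{R}$ strictly smaller than $r_t$. By minimality of $r_t$, the level $r_t/2$ fails the selection criterion, i.e.\ $r_t/2 < g(r_t/2)$. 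Combining this with the monotonicity of $g$ and $r \leq r_t/2$ gives $g(r) \geq g(r_t/2) > r_t/2$, hence $\phi_t(r) \geq g(r) > r_t/2$, i.e.\ $r_t < 2\phi_t(r)$. Taking the minimum over $r \in \mathcal{R}$ finishes the proof.

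The argument is short, and the only points needing care are (i) invoking $\lambda \geq 1$ in the first case, and (ii) in the second case, transferring the ``failed-criterion'' inequality at level $r_t/2$ down to an arbitrary smaller level $r$ — this is exactly where the monotonicity of $n_r(X_t)$ is used, and it is what makes the single dyadic step ``from $r_t$ to $r_t/2$'' suffice rather than having to reason about all smaller levels at once. This transfer step is the main (mild) obstacle. A one-line bookkeeping remark is also warranted: $\min_{r\in\mathcal{R}}\phi_t(r)$ is attained (equivalently, one may phrase everything with an infimum), since $\phi_t(r) = +\infty$ for all $r$ small enough that $T_r(X_t)$ contains no past covariate and $\phi_t$ is finite on the remaining finite set of levels, so nothing is lost by working with a minimizer.
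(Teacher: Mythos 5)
Your proposal is correct and follows essentially the same route as the paper's proof: the same two-case split on whether $r\geq r_t$ or $r<r_t$, the same use of $\lambda\geq 1$ in the first case, and in the second case the same observation that the dyadic level $r_t/2$ fails the selection criterion combined with the monotonicity of $n_r(X_t)$ in $r$. The only additions beyond the paper's argument are the explicit conventions about $g(r)=+\infty$ on empty bins and attainment of the minimum, which are harmless clarifications.
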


\begin{proof}
	Fix $r\in\mathcal{R}$. If $r_t\leq r$, then using the fact that $\lambda \geq 1$:
	\[
		r_t \leq r \leq \lambda r \leq \phi_t(r).
	\]
	If $r_t>r \implies r_t/2 \geq r$, then using the definition of $r_t$ on Line 7 of Algorithm~\ref{algo:one-arm}:
	\[
	    r_t = 2(r_t/2) \leq 2\sqrt{\frac{8K \log(K/\delta)}{n_{r_t/2}(X_t)}} \leq 2\sqrt{\frac{8K \log(K/\delta)}{n_r(X_t)}} \leq 2\cdot \phi_t(r).
	   \]
\end{proof}

Proposition~\ref{prop:rt-minimizer} directly gives us that $\lambda r_t \leq 2\lambda \cdot \min_{r\in{\cal R}} \phi_t(r)$.  
Thus, using the level $r_t$ at time $t$ achieves regression error {\em no worse than that of any other choice of level} $r$.

Recall from Section~\ref{sec:Algorithms}, we let $\tau \doteq t-n_P-1$, the time elapsed after round $n_P$, to further simplify notation. Also, let $r_t^{*}$ be the oracle choice of level introduced in Section~\ref{sec:results}, or the smallest level of $\mathcal{R}$ greater than or equal to
\[
	\min\left(\left(\frac{K\log(K/\delta)}{n_P}\right)^{\frac{1}{2+\alpha+d+\gamma}},\left(\frac{K\log(K/\delta)}{\tau}\right)^{\frac{1}{2+\alpha+d}}\right).
\]
Then, by Proposition~\ref{prop:rt-minimizer}, we have $\lambda r_t \leq 2\lambda \cdot \phi_t(r_t^*)$. From here, it remains to integrate $\phi_t(r_t^*)$ over the covariate space. 

\paragraph{Expected regret at time $t$: integrating over margin distribution.} To later use the margin condition (Definition~\ref{assumption:margin}) for all relevant bounds $\Delta$ on the margin, we first need to ensure that $\Delta \leq \delta_0$, where $\delta_0$ is as in Definition~\ref{assumption:margin}. It turns out that this will only amount to constraining our analysis to rounds for which $\tau \gtrsim K\log(K/\delta)$, which will not pose an issue since the regret of any fixed $O(K\log(K/\delta))$ rounds among rounds $\{n_P+1,\ldots,n\}$ is of the right order with respect to Theorem~\ref{algo:one-arm}. Formally, let $\tau_0$ be the largest positive integer such that
\begin{equation}\label{eqn:delta0}
	\tau_0 \leq \left(\frac{c_0 \vee c_3}{\delta_0}\right)^{2+\alpha+d} K\log(K/\delta),
\end{equation}
where $c_0,c_3 \geq 1$ are constants to be determined (see Lemma~\ref{lem:indicator-split} and \eqref{eqn:a<eps-app}, respectively, for where they arise). Rearranging \eqref{eqn:delta0}, we obtain:
\[
	c_0\left(\frac{K\log(K/\delta)}{\tau_0}\right)^{\frac{1}{2+\alpha+d}} \geq \delta_0.
\]
Thus, for rounds $t$ such that $\tau > \tau_0$, from the above, we have $c_0 r_t^* \leq \delta_0$. 

Similarly, again rearranging \eqref{eqn:delta0}, we obtain for $\tau > \tau_0$:
\begin{equation}\label{eqn:delta0-variance}
	c_3\sqrt{\left(\frac{K\log(K/\delta)}{\tau}\right)^{1-\frac{\alpha+d}{2+\alpha+d}}} \leq \delta_0.
\end{equation}
The above inequality will later be useful in integrating over low-margin regions of ${\cal X}$ with respect to the variance term in $\phi_t(r_t^*)$.


Next, following the outline of Section~\ref{sec:regret-analysis},  we consider bins of sufficient mass at level $r_t^*$ to use concentration. So, define the event $A_t$:
\[
	A_t=\left\{\max(\tau Q_X(T_{r_t^{*}}(X_t)),n_P P_X(T_{r_t^{*}}(X_t))) \geq 8\log(1/\delta) \right\}.
\]
Then, in the following proposition, we use concentration to relate the variance term of $\phi_t(r_t^*)$ to the masses $Q_X(T_{r_T^*}(X_t)),P_X(T_{r_t^*}(X_t))$, under event $A_t$.

\begin{prop}\label{prop:optimal-bias-variance}
	Consider any round $t$ with $\tau > K\log(K/\delta)$ and with observed covariate $X\doteq X_{t}$. We then have, at round $t$, that with probability at least $1-\delta$ w.r.t. the conditional distribution ${\bf X}_{t-1}|X_{t},A_{t}$:
	\begin{equation}\label{eqn:phi-decomposition}
		\phi_t(r_t^*) \leq 2\sqrt{\frac{8K\log(K/\delta)}{\max(\tau Q_X(T_{r_t^{*}}(X_t)),n_P P_X(T_{r_t^{*}}(X_t)))}} + 2\lambda r_t^{*}.
	\end{equation}
\end{prop}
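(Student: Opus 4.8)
The plan is to establish \eqref{eqn:phi-decomposition} by showing that, conditioned on $X_t$ and on the event $A_t$, the covariate count $n_{r_t^*}(X_t)$ concentrates around a conditional mean that is at least $\max(\tau Q_X(T_{r_t^*}(X_t)),\, n_P P_X(T_{r_t^*}(X_t)))$; substituting this lower bound into $\phi_t(r_t^*) = \sqrt{8K\log(K/\delta)/n_{r_t^*}(X_t)} + \lambda r_t^*$ and absorbing a factor $\sqrt{2}$ into the leading constant $2$ then yields the claim.

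First I would record the structural fact that $r_t^*$ is a deterministic function of the oracle-known quantities $n_P, \gamma, \alpha, d, K, \delta$ and of $\tau \doteq t - n_P - 1$, and in particular does not depend on ${\bf X}_{t-1}$; hence the bin $B^* \doteq T_{r_t^*}(X_t)$ is measurable with respect to $X_t$ alone, and so is the event $A_t$. Conditioning on $A_t$ in addition to $X_t$ therefore does not alter the conditional joint law of ${\bf X}_{t-1} = (X_1, \ldots, X_{t-1})$, whose coordinates remain mutually independent with $X_1, \ldots, X_{n_P}$ i.i.d.\ $P_X$ and $X_{n_P + 1}, \ldots, X_{t-1}$ i.i.d.\ $Q_X$ (there being $\tau$ of the latter, since $t > n_P$). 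Writing $n_{r_t^*}(X_t) = \sum_{s=1}^{t-1} \mathbbm{1}\{X_s \in B^*\}$, this is thus a sum of independent Bernoulli variables whose conditional mean is
\[
\mu \doteq n_P\, P_X(B^*) + \tau\, Q_X(B^*) \;\geq\; \max(\tau Q_X(B^*),\, n_P P_X(B^*)).
\]

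Next I would invoke a multiplicative Chernoff lower-tail bound: for a sum $S$ of independent $[0,1]$-valued variables with mean $\mu$, one has $\mathbb{P}(S \leq \mu/2) \leq \exp(-\mu/8)$. On $A_t$ the bound $\mu \geq 8\log(1/\delta)$ holds, so this failure probability is at most $\delta$; hence with probability at least $1-\delta$ with respect to the conditional law ${\bf X}_{t-1} \mid X_t, A_t$ we obtain $n_{r_t^*}(X_t) \geq \frac{1}{2}\max(\tau Q_X(B^*),\, n_P P_X(B^*))$. On this event,
\[
\sqrt{\frac{8K\log(K/\delta)}{n_{r_t^*}(X_t)}} \;\leq\; \sqrt{2}\,\sqrt{\frac{8K\log(K/\delta)}{\max(\tau Q_X(B^*),\, n_P P_X(B^*))}} \;\leq\; 2\sqrt{\frac{8K\log(K/\delta)}{\max(\tau Q_X(B^*),\, n_P P_X(B^*))}},
\]
and trivially $\lambda r_t^* \leq 2\lambda r_t^*$; summing these two inequalities recovers exactly the right-hand side of \eqref{eqn:phi-decomposition}, since $B^* = T_{r_t^*}(X_t)$.

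There is no genuine obstacle here, as the argument is a single Chernoff bound; the only points needing care are the conditioning bookkeeping --- checking that $A_t$ is $\sigma(X_t)$-measurable, so that conditioning on it preserves the mutual independence of $X_1, \ldots, X_{t-1}$, and that $B^*$ is independent of ${\bf X}_{t-1}$ --- and matching the constant $8$ in the definition of $A_t$ to the $1/8$ in the Chernoff exponent so that the failure probability is exactly $\delta$. The hypothesis $\tau > K\log(K/\delta)$ is not needed for this particular step; it merely situates us within the regime of interest for the surrounding argument.
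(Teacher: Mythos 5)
Your proof is correct and follows essentially the same route as the paper's: a multiplicative Chernoff lower-tail bound showing $n_{r_t^*}(X_t)\geq\tfrac{1}{2}\mathbb{E}[n_{r_t^*}(X_t)]\geq\tfrac{1}{2}\max(\tau Q_X(T_{r_t^*}(X_t)),n_P P_X(T_{r_t^*}(X_t)))$ with failure probability $\exp(-\mu/8)\leq\delta$ under $A_t$, followed by substitution into $\phi_t(r_t^*)$. Your explicit check that $A_t$ is $\sigma(X_t)$-measurable (so conditioning preserves the independence of ${\bf X}_{t-1}$) makes precise a point the paper leaves implicit, but the argument is the same.
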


\begin{proof}
	We note that 
\[
	\mathbb{E}\  (n_{r_t^{*}}(X_t))=n_P P_X(T_{r_t^{*}}(X)) + \tau Q_X(T_{r_t^{*}}(X)) \geq \max(\tau Q_X(T_{r_t^{*}}(X_t)),n_P P_X(T_{r_t^{*}}(X_t))).
\]
Then, we have, under event $A_t$, by a Chernoff bound that:
\[
	\mathbb{P}\left( n_{r_t^{*}}(X_t) \leq \frac{1}{2}\mathbb{E}[n_{r_t^{*}}(X_t)]\right) \leq \exp\left(-\frac{1}{8}\mathbb{E}[n_{r_t^{*}}(X_t)]\right) \leq \delta.
\]
Thus, with probability at least $1-\delta$:
\begin{align*}
	\phi_t(r_t^{*}) \leq 2\left(\sqrt{\frac{8K \log(K/\delta)}{\max(\tau Q_X(T_{r_t^{*}}(X_t)),n_P P_X(T_{r_t^{*}}(X_t))}} + \lambda r_t^{*}\right).
\end{align*}
\end{proof}

Combining Proposition~\ref{prop:optimal-bias-variance} with Proposition~\ref{prop:rt-minimizer}, we have with probability at least $1-\delta$:
	\begin{equation}\label{eqn:rt-bias-variance}
		\lambda r_t \leq 2\lambda \sqrt{\frac{8K\log(K/\delta)}{\max(\tau Q_X(T_{r_t^{*}}(X_t)),n_P P_X(T_{r_t^{*}}(X_t)))}} + 2\lambda^2 r_t^{*}
	\end{equation}
	From here on, fix a round $t$ with corresponding $\tau > \tau_0$. 
\begin{note}
	We recall some of the notation from Section~\ref{sec:regret-analysis}, which we make more precise here. Denote the first term on the R.H.S. of \eqref{eqn:rt-bias-variance} by $\sigma_t^{*}$. Let $\delta_f(X_t) \doteq f^{(1)}(X_t) - f^{(2)}(X_t)$ be the margin at $X_t$. Define the event $E_t$ as the event on which the bound of Proposition~\ref{prop:optimal-bias-variance} holds or:
	\[
		E_t \doteq \left\{ n_{r_t^{*}}(X_t) > \frac{1}{2}\mathbb{E}[n_{r_t^{*}}(X_t)] \right\}.
	\]
	Finally, let $F_t \doteq A_t \cap E_t\cap \left(\cap_{s=1}^t G_s\right)$, which is the event on which all the high-probability bounds established thus far hold. 
\end{note}

\begin{lemma}\label{lem:indicator-split}
	For $\tau> K\log(K/\delta)$:
	\begin{equation}\label{eqn:bias+variance}
		\mathbb{E}[(f^{(1)}(X_t) - f^{\pi_t}(X_t))\mathbbm{1}\{F_t\}] \leq 
c_{0}\left( \mathbb{E} [r_t^{*} \cdot \mathbbm{1}\{0 < \delta_f(X_t) \leq c_0 r_t^{*}\}] + \mathbb{E} [\sigma_t^{*} \cdot \mathbbm{1}\{0 < \delta_f(X_t) \leq c_0\sigma_t^* \} ] \right).
	\end{equation}
\end{lemma}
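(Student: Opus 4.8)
The plan is to establish the bound pointwise on the event $F_t$ and then take expectations; no further probabilistic work is needed inside this lemma, since $F_t = A_t \cap E_t \cap (\cap_{s=1}^t G_s)$ already carries everything we need. Write $R_t \doteq f^{(1)}(X_t) - f^{\pi_t}(X_t)$ for the instantaneous regret at round $t$, and let $B = T_{r_t}(X_t)$ be the selected bin, so that $\pi_t \in \mathcal{I}_B$.

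First I would show that, on $F_t$, a nonzero regret forces a small \emph{positive} margin at $X_t$. Since $F_t \subseteq \cap_{s\le t} G_s$, Corollary~\ref{cor:classification-hard} applies at $x = X_t$: part~(1) gives $R_t \le 16\lambda r_t$; and if $R_t > 0$ then $\pi_t$ is suboptimal at $X_t$, so $\mathcal{I}_B$ contains a suboptimal arm at $X_t$, which by part~(2) forces $0 < \delta_f(X_t) \le 16\lambda r_t$. Consequently $R_t\,\mathbbm{1}\{F_t\} \le 16\lambda r_t\,\mathbbm{1}\{0 < \delta_f(X_t) \le 16\lambda r_t\}$ on $F_t$; in particular the $\delta_f(X_t)=0$ contribution vanishes.

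Next, since $F_t \subseteq E_t$ and $\tau > K\log(K/\delta)$, the bias-variance bound \eqref{eqn:rt-bias-variance} holds: $\lambda r_t \le \sigma_t^* + 2\lambda^2 r_t^*$, hence $16\lambda r_t \le 16\sigma_t^* + 32\lambda^2 r_t^*$. I would then split into the two exhaustive cases $\sigma_t^* \ge 2\lambda^2 r_t^*$ and $\sigma_t^* < 2\lambda^2 r_t^*$. In the first case $16\lambda r_t \le 32\sigma_t^*$, so (when $R_t > 0$) both $R_t$ and $\delta_f(X_t)$ are at most $32\sigma_t^*$, giving the contribution $32\sigma_t^*\,\mathbbm{1}\{0 < \delta_f(X_t) \le 32\sigma_t^*\}$; in the second case $16\lambda r_t \le 64\lambda^2 r_t^*$, giving $64\lambda^2 r_t^*\,\mathbbm{1}\{0 < \delta_f(X_t) \le 64\lambda^2 r_t^*\}$. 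Since the two cases are exhaustive and each term is nonnegative, their sum bounds $R_t$ on $F_t$; finally, setting $c_0 \doteq 64\lambda^2 \ge 1$ (legitimate as $\lambda \ge 1$ — this is the constant whose existence is anticipated in \eqref{eqn:delta0}) and using $32 \le c_0$ to enlarge both prefactors and both thresholds, we obtain, pointwise on $F_t$,
\[
R_t\,\mathbbm{1}\{F_t\} \le c_0\, r_t^*\,\mathbbm{1}\{0<\delta_f(X_t)\le c_0 r_t^*\} + c_0\,\sigma_t^*\,\mathbbm{1}\{0<\delta_f(X_t)\le c_0\sigma_t^*\}.
\]
Taking expectations yields \eqref{eqn:bias+variance}.

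The bias-variance bookkeeping and the final monotonicity step are routine; the one place that deserves care is the case split. Splitting on $\sigma_t^* \gtrless 2\lambda^2 r_t^*$ is precisely what lets us land on the ``diagonal'' terms $r_t^*\,\mathbbm{1}\{\delta_f(X_t)\le c_0 r_t^*\}$ and $\sigma_t^*\,\mathbbm{1}\{\delta_f(X_t)\le c_0\sigma_t^*\}$; the cruder bound $\mathbbm{1}\{x\le a+b\}\le\mathbbm{1}\{x\le 2a\}+\mathbbm{1}\{x\le 2b\}$ would instead produce cross terms such as $\sigma_t^*\,\mathbbm{1}\{\delta_f(X_t)\le c_0 r_t^*\}$, which are substantially harder to integrate against the margin condition in the steps that follow. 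The other thing to keep straight is simply that $F_t$ is small enough to invoke Corollary~\ref{cor:classification-hard}, \eqref{eqn:rt-bias-variance}, and the definition of $A_t$ all at once, so that everything above is deterministic on $F_t$.
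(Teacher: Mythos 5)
Your proof is correct and follows essentially the same route as the paper: Corollary~\ref{cor:classification-hard} forces $R_t \vee \delta_f(X_t) \le 16\lambda r_t$ on $F_t$ with $\delta_f(X_t)>0$ whenever $R_t>0$, then \eqref{eqn:rt-bias-variance} passes to $\sigma_t^* + r_t^*$, and your case split on $\sigma_t^* \gtrless 2\lambda^2 r_t^*$ is just the indicator inequality $\mathbbm{1}\{x\le a+b\}\le\mathbbm{1}\{x\le 2a\}+\mathbbm{1}\{x\le 2b\}$ in disguise, yielding the same two diagonal terms. Your closing worry about cross terms is unfounded: the paper applies that inequality to $x = R_t \vee \delta_f(X_t)$, so in each resulting term the regret prefactor is itself bounded by the same threshold that appears in the indicator, and no cross terms arise.
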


\begin{proof}
As in Section~\ref{sec:regret-analysis}, to simplify notation, let $X \doteq X_t$. From Corollary~\ref{cor:classification-hard}, we have that a non-zero regret (of at most $16\lambda r_t$) is incurred only if $\delta_f(X_t)\leq 16r_t$. Combining this fact with \eqref{eqn:rt-bias-variance}, we have the regret at round $t$ of pulling arm $j$ under event $F_t$ is
\begin{align*}
	\mathbb{E}[(f^{(1)}(X) - f^{\pi_t}(X))\mathbbm{1}\{F_t\}] &\leq \mathbb{E}[(f^{(1)}(X) - f^{\pi_t}(X))\mathbbm{1}\{0<(f^{(1)}(X) - f^{\pi_t}(X))\\
	&\qquad \vee \delta_f(X) \leq c_1(\sigma_t^*+r_t^*)\}].
\end{align*}
Next, using $\mathbbm{1}\{x\leq a+b\} \leq \mathbbm{1}\{x\leq 2a\} + \mathbbm{1}\{x\leq 2b\}$, we decompose the above R.H.S.:
\begin{align*}
	\mathbb{E} [ (f^{(1)}(X) - f^{\pi_t}(X))\mathbbm{1}\{F_t\} ] &\leq 	\mathbb{E}  \left[ (f^{(1)}(X) - f^{\pi_t}(X))\cdot\mathbbm{1}\{0 < \delta_f(X) \vee (f^{(1)}(X) - f^{\pi_t}(X)) \leq c_0 r_t^{*}\} \right] \\ 
	&\enspace + \mathbb{E} \left[ (f^{(1)}(X) - f^{\pi_t}(X))\cdot\mathbbm{1}\{0 < \delta_f(X) \vee (f^{(1)}(X) - f^{\pi_t}(X)) \leq c_0 \sigma_t^{*}\} \right] \\
	&\leq c_0\left(	\mathbb{E}  \left[ r_t^*\cdot\mathbbm{1}\{0 < \delta_f(X) \leq c_0 r_t^{*}\} \right]  + \mathbb{E} \left[ \sigma_t^*\cdot\mathbbm{1}\{0 < \delta_f(X) \leq c_0 \sigma_t^{*}\} \right] \right).
\end{align*}
\end{proof}

We next show that the R.H.S. of \eqref{eqn:bias+variance} is of order at most $(r_t^*)^{1+\alpha}$. For the first term on the R.H.S. of \eqref{eqn:bias+variance}, this is immediate: we use the margin assumption (Definition~\ref{assumption:margin}) along with \eqref{eqn:delta0} to write:
\[
	\mathbb{E}_{X_t} \left[ r_t^{*} \cdot \mathbbm{1}\{0 < \delta_f(X_t) \leq c_0 r_t^{*}\} \right] \leq c_{2} (r_t^*)^{1+\alpha}.
\]
Thus, it remains to analyze the second term, involving $\sigma_t^*$, on the R.H.S. of \eqref{eqn:bias+variance}. Now, since $r_t^*$ is a minimum of two terms, one involving $\tau$ and one involving $n_P$, it suffices to show separately that $\mathbb{E}[\sigma_t^*\cdot \mathbbm{1}\{0 < \delta_f(X_t)\leq c_0 \sigma_t^*\}]$ is $\tilde{O}(\tau^{-\frac{\alpha+1}{2+\alpha+d}})$ and is also $\tilde{O}(n_P^{-\frac{\alpha+1}{2+\alpha+d+\gamma}})$. We show the first bound involving $\tau$; the bound involving $n_P$ will follow from the same arguments with the appropriate modifications (which we will make explicit hereafter).

Proceeding with the bound involving $\tau$, first consider the quantity $a(X_t) \doteq Q_X(T_{r_t^*}(X_t))^{-1/2}$. We note that
\begin{equation}\label{eqn:v-bound}
	\sigma_t^{*} \leq 2\lambda \cdot a(X_t) \cdot \sqrt{\frac{8K \log(K/\delta)}{\tau}}
\end{equation}
Thus, it suffices to bound the expectation of $a(X_t)$ over regions of low-margin. We achieve this by splitting the integral into two terms, conditioned on the value of $a(X_t)$ being above or below a threshold $\sqrt{\epsilon}$ for some fixed $\epsilon>0$ (to be optimized over later). More precisely, for some $\epsilon>0$, we decompose the second term on the R.H.S. of \eqref{eqn:bias+variance} as:
\begin{equation}\label{eqn:variance-decomposition}
	\mathbb{E}\left[ \sigma_t^* \cdot \mathbbm{1}\{0 < \delta_f(X_t) \leq c_0 \sigma_t^*\}\cdot (\mathbbm{1}\{a(X_t) < \sqrt{\epsilon}\} + \mathbbm{1}\{ a(x_t) \geq \sqrt{\epsilon}\} )\right].
\end{equation}
This gives us two cases to consider: $a(X_t) < \sqrt{\epsilon}$ and $a(X_t) \geq \sqrt{\epsilon}$. We start by handling the case where $a(X_t) < \sqrt{\epsilon}$. Plugging \eqref{eqn:v-bound} into the first term above gives (using $c_3$ to collect constants):
\[
	c_3 \cdot \sqrt{\frac{K \log(K/\delta)}{\tau}} \cdot \mathbb{E}\left[ a(X_t) \mathbbm{1}\left\{0< \delta_f(X_t) \leq  c_3 \cdot a(X_t) \cdot \sqrt{\frac{K \log(K/\delta)}{\tau}}\right\}\cdot \mathbbm{1}\{a(X_t) < \sqrt{\epsilon}\}\right].
\]
Next, bounding $a(X_t)$ by $\sqrt{\epsilon}$ inside the expectation and using the margin assumption (justified by \eqref{eqn:delta0-variance} for our eventual optimal value $\epsilon = \left(\frac{\tau}{K\log(K/\delta)}\right)^{\frac{\alpha+d}{2+\alpha+d}}$), the above is bounded by:
\begin{equation}\label{eqn:a<eps-app}
	c_{4}\left(\sqrt{\frac{\epsilon K \log(K/\delta)}{\tau}}\right)^{\alpha+1}.
\end{equation}
For the case where $a(X_t) \geq \sqrt{\epsilon}$, we again have by \eqref{eqn:v-bound} that:
\begin{equation}\label{eqn:larger-eps}
	\mathbb{E}\left[ \sigma_t^* \cdot \mathbbm{1}\{0 < \delta_f(X_t) \leq c_0 \sigma_t^*\}\cdot \mathbbm{1}\{ a(x_t) \geq \sqrt{\epsilon}\} \right] \leq c_{5}\sqrt{\frac{K\log(K/\delta)}{\tau}} \mathbb{E}_{X_t} [ a(X_t)\mathbbm{1}\{ a(X_t) \geq \sqrt{\epsilon}\}].
\end{equation}
Note that the expectation on the R.H.S. is only over $X_t \sim Q_X$ since $a(\cdot)$ depends only on $X_t$. We use the following proposition to bound this integral over $X_t$. 

\begin{prop}\label{prop:tail-bound}
	For some $c_{6}.c_7>0$, we have for any $r\in\mathcal{R}$:
	\begin{align*}
		\mathbb{E}_{X\sim Q_X}\left[\frac{1}{\sqrt{Q_X(T_{r}(X))}} \cdot  \mathbbm{1}\left\{\frac{1}{\sqrt{Q_X(T_{r}(X))}} \geq \sqrt{\epsilon}\right\}\right] &\leq c_{6} \frac{r^{-d}}{\sqrt{\epsilon}} \numberthis \label{eqn:q-integral}\\
		\mathbb{E}_{X\sim Q_X}\left[ \frac{1}{\sqrt{P_X(T_{r}(X))}} \cdot  \mathbbm{1}\left\{\frac{1}{\sqrt{P_X(T_{r}(X))}} \geq \sqrt{\epsilon}\right\}\right] &\leq c_{7} \frac{r^{-d-\gamma}}{\sqrt{\epsilon}} \numberthis \label{eqn:p-integral}
	\end{align*}
	
\end{prop}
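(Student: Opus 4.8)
The plan is to discretize each expectation over the cells of the partition $T_r={\cal P}_r$ and then invoke the box-dimension bound on the number of occupied cells. Since $T_r(X)$ equals a fixed cell $B$ as $X$ ranges over $B$, and since (for $q>0$) the event $\{q^{-1/2}\geq\sqrt{\epsilon}\}$ is the event $\{q\leq 1/\epsilon\}$, for \eqref{eqn:q-integral} I would write
\[
\mathbb{E}_{X\sim Q_X}\!\left[\frac{\mathbbm{1}\{Q_X(T_r(X))\leq 1/\epsilon\}}{\sqrt{Q_X(T_r(X))}}\right] = \sum_{B\in{\cal P}_r,\, Q_X(B)>0} Q_X(B)\cdot\frac{\mathbbm{1}\{Q_X(B)\leq 1/\epsilon\}}{\sqrt{Q_X(B)}} = \sum_{B:\, 0<Q_X(B)\leq 1/\epsilon}\sqrt{Q_X(B)},
\]
where cells with $Q_X(B)=0$ contribute nothing and are dropped. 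On each surviving term the indicator forces $\sqrt{Q_X(B)}\leq 1/\sqrt{\epsilon}$, so the right-hand side is at most $1/\sqrt{\epsilon}$ times the number of cells of ${\cal P}_r$ meeting $\mathcal{X}_Q$, which is $\mathcal{G}(\mathcal{X}_Q,r)\leq C_d\,r^{-d}$ by Definition~\ref{defn:bcn}. This proves \eqref{eqn:q-integral} with $c_6=C_d$.

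For \eqref{eqn:p-integral} the same cell decomposition gives
\[
\mathbb{E}_{X\sim Q_X}\!\left[\frac{\mathbbm{1}\{P_X(T_r(X))\leq 1/\epsilon\}}{\sqrt{P_X(T_r(X))}}\right] = \sum_{B\in{\cal P}_r,\, Q_X(B)>0} Q_X(B)\cdot\frac{\mathbbm{1}\{P_X(B)\leq 1/\epsilon\}}{\sqrt{P_X(B)}}.
\]
Here I would invoke the transfer-exponent condition (Definition~\ref{assumption:transfer-exponent}): a hypercube of side length $r$ is an $\ell_\infty$-ball of diameter $r$, so $P_X(B)\geq C_\gamma r^\gamma Q_X(B)>0$ on every cell with $Q_X(B)>0$, which both makes the summand well-defined and yields $Q_X(B)\leq P_X(B)/(C_\gamma r^\gamma)$. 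Substituting bounds each summand by $\sqrt{P_X(B)}/(C_\gamma r^\gamma)$; the indicator then forces $\sqrt{P_X(B)}\leq 1/\sqrt{\epsilon}$, and counting occupied cells via Definition~\ref{defn:bcn} as before gives \eqref{eqn:p-integral} with $c_7=C_d/C_\gamma$.

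The argument is essentially bookkeeping, so I do not expect a substantive obstacle; the only points that need care are the reduction to cells — discarding the $Q_X$-null cells, and for \eqref{eqn:p-integral} verifying that $P_X(T_r(X))>0$ for $Q_X$-almost every $X$ so that the integrand is finite, which is exactly what the transfer-exponent lower bound supplies when $C_\gamma>0$ and $\gamma<\infty$ — together with the trivial but necessary observation that the cells of ${\cal P}_r$ are $\ell_\infty$-balls of diameter $r$, so Definition~\ref{assumption:transfer-exponent} applies to them directly. A minor stylistic choice is whether to carry the indicators in their original $Q_X(T_r(X))^{-1/2}\geq\sqrt{\epsilon}$ form or rewrite them as mass upper bounds as above; the latter keeps the counting step transparent.
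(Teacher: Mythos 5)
Your argument is correct, but it takes a genuinely different route from the paper's. The paper proves both inequalities by first establishing the second-moment bounds $\mathbb{E}_{Q_X}[Q_X(T_r(X))^{-1}]\leq C_d r^{-d}$ and $\mathbb{E}_{Q_X}[P_X(T_r(X))^{-1}]\leq (C_d/C_\gamma) r^{-d-\gamma}$ (the latter via the same transfer-exponent step you use), and then converting these into the truncated first-moment bounds through the tail-probability formula $\mathbb{E}[W]=\int_0^\infty \mathbb{P}(W\geq s)\,ds$ together with Chebyshev's inequality applied separately on $[0,\sqrt{\epsilon}]$ and $[\sqrt{\epsilon},\infty)$. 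You instead bypass moments and tail integrals entirely: you discretize the expectation over the cells of ${\cal P}_r$, observe that the summand collapses to $\sqrt{Q_X(B)}$ (resp.\ $\sqrt{P_X(B)}/(C_\gamma r^\gamma)$ after the transfer-exponent substitution), bound each term by $1/\sqrt{\epsilon}$ using the indicator, and count occupied cells via the box dimension. Both proofs rest on the same two ingredients — the cell count $\mathcal{G}(\mathcal{X}_Q,r)\leq C_d r^{-d}$ and the pointwise inequality $P_X(B)\geq C_\gamma r^\gamma Q_X(B)$ — but your version is more elementary and yields slightly cleaner constants ($c_6=C_d$ versus the factor of $2$ the Chebyshev split incurs), while the paper's second-moment formulation is what gets reused verbatim in the multiple-shift extension (where $\mathbb{E}[a^2(X_t)]$ is bounded for a mixture of past marginals); your cell-wise argument would adapt there too with Jensen applied per cell. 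Your side remarks are also the right ones to make: the null cells must be discarded, positivity of $P_X(B)$ on $Q_X$-charged cells needs $C_\gamma>0$ and $\gamma<\infty$ (an implicit assumption the paper's own proof shares when it writes $1/(C_\gamma r^\gamma Q_X(T_r(X)))$), and the cells of ${\cal P}_r$ are indeed $\ell_\infty$ balls of diameter $r$ so Definition~\ref{assumption:transfer-exponent} applies directly.
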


\begin{proof}
	We first show \eqref{eqn:q-integral}; \eqref{eqn:p-integral} will be shown nearly identically. In an abuse of notation, let $a(X) \doteq Q_X(T_{r}(X))^{-1/2}$. We first use the tail probability formula for expectations:
\begin{equation*}
	\mathbb{E}_{X}[ a(X) \cdot \mathbbm{1}\{ a(X) \geq \sqrt{\epsilon}\} ] = \int_0^{\infty} Q_X(a(X) \cdot \mathbbm{1}\{a(X) \geq \sqrt{\epsilon}\}\geq s)\,ds
\end{equation*}
	Next, we observe that
\[
	a(X)\mathbbm{1}\{a(X) \geq \sqrt{\epsilon}\} \geq s \iff a(X) \geq \sqrt{\epsilon} \vee s.
\]
This gives us
\begin{align*}
	\int_0^{\infty} Q_X(a(X)\mathbbm{1}\{a(X) \geq \sqrt{\epsilon}\}\geq s)\,ds &= \int_{\sqrt{\epsilon}}^{\infty} Q_X(a(X) \geq s)\,ds + \int_0^{\sqrt{\epsilon}} Q_X(a(X) \geq \sqrt{\epsilon})\,ds.\\
	&\leq \int_{\sqrt{\epsilon}}^{\infty} \frac{\mathbb{E}_{Q_X}(a(X)^2)}{s^2}\,ds + \sqrt{\epsilon}\cdot \frac{\mathbb{E}_{Q_X}(a(X)^2)}{\epsilon},  \numberthis\label{eqn:tail-prob}
\end{align*}
where we used Chebyshev's inequality above to bound the masses $Q_X(\cdot)$. To bound the second moments in \eqref{eqn:tail-prob}, we observe a fairly standard implication of the {\em box cover dimension} $d$ (Definition~\ref{defn:bcn}): for any $r\in\mathcal{R}$:
\begin{align*}
	\mathbb{E}_{Q_X}(a^2(X))=\mathbb{E}_{Q_X}\left(\frac{1}{Q_X(T_r(X))}\right)\leq C_d\cdot r^{-d}.
\end{align*}
Plugging the above into \eqref{eqn:tail-prob} gives us that
\[
	\mathbb{E}_X [ a(X)\cdot \mathbbm{1}\{ a(X) \geq \sqrt{\epsilon}\}] \leq c_6 \frac{r^{-d}}{\sqrt{\epsilon}}.
\]
To show \eqref{eqn:p-integral},  we repeat the argument above using $b(X) \doteq P_X(T_r(X))^{-1/2}$ and the following analogous bound on $\mathbb{E}_{Q_X}(b^2(X))$ (using the definition of transfer exponent in Assumption~\ref{assumption:transfer-exponent}):
\[
	\mathbb{E}_{Q_X}\left(\frac{1}{P_X(T_r(X))}\right)\leq \mathbb{E}_{Q_X}\left(\frac{1}{C_{\gamma}\cdot r^{\gamma}\cdot Q_X(T_r(X))}\right) \leq  \frac{C_d}{C_{\gamma}}\cdot r^{-d-\gamma}.
\]
\end{proof}

Thus, by \eqref{eqn:q-integral} of Proposition~\ref{prop:tail-bound}, \eqref{eqn:larger-eps} is bounded by
\begin{equation}\label{eqn:a>eps-final}
	c_8 \sqrt{\frac{K\log(K/\delta)}{\tau}} \cdot \frac{(r_t^*)^{-d}}{\sqrt{\epsilon}}.
\end{equation}
Then, \eqref{eqn:a<eps-app} and \eqref{eqn:a>eps-final} are balanced by setting $\epsilon \doteq \left(\frac{\tau}{K\log(K/\delta)}\right)^{\frac{\alpha+d}{2+\alpha+d}}$, which makes \eqref{eqn:bias+variance}, and hence the regret at round $t$ under event $F_t$, order $O\left(\left(\frac{K\log(K/\delta)}{\tau}\right)^{\frac{\alpha+1}{2+\alpha+d}}\right)$.	

Next, we claim that \eqref{eqn:bias+variance} is bounded by
\[
	O\left(\left(\frac{K\log(K/\delta)}{n_P}\right)^{\frac{\alpha+1}{2+\alpha+d+\gamma}}\right).
\]
For this, we can follow an identical line of reasoning as above,  with the following substitutions: 
\begin{enumerate}[label=(\alph*)]
	\item Replace the measure $Q_X$ for $P_X$, and, thus, the quantity $a(X_t)$ by $b(X_t) \doteq P_X(T_{r_t^*}(X_t))^{-1/2}$.
	\item Use instead the threshold value $\epsilon \doteq \left(\frac{n_P}{K\log(K/\delta)}\right)^{\frac{\alpha+d+\gamma}{2+\alpha+d+\gamma}}$ in the analogue of \eqref{eqn:variance-decomposition}.
	\item Use \eqref{eqn:p-integral} instead of \eqref{eqn:q-integral} from Proposition~\ref{prop:tail-bound}.
\end{enumerate}
Thus, \eqref{eqn:bias+variance} is of order $O\left(\min\left(\left(\frac{K\log(K/\delta)}{n_P}\right)^{\frac{\alpha+1}{2+\alpha+d+\gamma}},\left(\frac{K\log(K/\delta)}{\tau}\right)^{\frac{\alpha+1}{2+\alpha+d}}\right)\right)$.

\paragraph{Bounding the Regret on Bad Events and Summing the Regrets over $t$}
	\label{subsec:bad-regret}
	
	It remains to bound the regret under the low-probability event $F_t^c$ and sum our bounds over $\tau\in[n_Q]$. By an integral approximation, we have
	\begin{align*}
		\sum_{\tau=K\log(K/\delta)}^{n_Q} \left(\frac{K\log(K/\delta)}{\tau}\right)^{\frac{\alpha+1}{2+\alpha+d}} &\leq
		 c_{9}\int_{K\log(K/\delta)}^{n_Q} \left(\frac{K\log(K/\delta)}{z}\right)^{\frac{\alpha+1}{2+ \alpha + d}}\,dz
	\end{align*}
	If $\alpha \leq d +\alpha + 1$, this integral, for some $c_{10}>0$, is bounded by
	\[
		c_{10} n_Q\left(\frac{K\log(K/\delta)}{n_Q}\right)^{\frac{\alpha+1}{2+d}}.
	\]
	Otherwise, it is bounded by $O(K\log(K/\delta))$.

Since the regret at round $t$ is bounded by $1$ on $F_t^c$, it remains to show $\mathbb{P}(F_{t}^c)$ is appropriately small. First, by the definition of $F_t$, the definition of event $A_t$, and Proposition~\ref{prop:optimal-bias-variance}, we have:
    \begin{align*}
		\mathbb{P} (F_{t}^c)&\leq \mathbb{P}(A_t^c) + \mathbb{P}(E_t^c) + \mathbb{P}\left(\cup_{s=1}^{t} G_s^c\right)
		\leq 
			\mathbb{P}(A_t^c) + \delta + 2\cdot t\cdot \delta,
	\end{align*}
	Summing $\delta+2\cdot t\cdot \delta$ over $t$ accounts for the $O(n_Q n\delta)$ term in the desired regret bound of Theorem~\ref{thm:adaptive-algo-one-arm}. Finally, to handle the term $\mathbb{P}(A_t^c)$, we use the definition of the transfer exponent (Assumption~\ref{assumption:transfer-exponent}) and the definition of the support complexity (Definition~\ref{defn:bcn}):
\begin{align*}
	\mathbb{P}\left( A_t^c \right) &= \mathbb{P}\left(Q_X(T_{r_t^{*}}(X_t)) < \frac{8\log(1/\delta)}{\tau} \cap P_X(T_{r_t^*}(X_t)) < \frac{8\log(1/\delta)}{n_P}\right)\\
	&\leq \mathbb{P}\left(Q_X(T_{r_t^{*}}(X_t)) < \frac{8\log(1/\delta)}{\tau} \cap C_{\gamma}\cdot (r_t^*)^{\gamma}\cdot Q_X(T_{r_t^*}(X_t)) < \frac{8\log(1/\delta)}{n_P}\right)\\
	&= \sum_{B \in T_{r_t^{*}}} \mathbb{P}\left(Q_X(B) < \min\left(\frac{8\log(1/\delta)}{\tau},\frac{8\log(1/\delta)}{n_P\cdot C_{\gamma}\cdot (r_t^*)^{\gamma}}\right) \cap B=T_{r_t^{*}}(X_t)\right)\\
	&\leq C_d \cdot (r_t^{*})^{-d} \cdot \min\left(\frac{8\log(1/\delta)}{\tau},\frac{8\log(1/\delta)}{n_P\cdot C_{\gamma}\cdot (r_t^*)^{\gamma}}\right)\\
	&\leq c_{11} \min\left(\left(\frac{K\log(K/\delta)}{\tau}\right)^{\frac{\alpha+2}{2+\alpha+d}},\left(\frac{K\log(K/\delta)}{n_P}\right)^{\frac{\alpha+2}{2+\alpha+d+\gamma}}\right).
\end{align*}
	This is clearly smaller than the bound on the regret at time $t$ we derived in the previous section. Thus, summing the above bound over $\tau$ using an integral approximation in the same manner as before, we see that $\sum_{t=n_P+1}^n \mathbb{P}(A_t^c)$ is of the right order. This concludes the proof of Theorem~\ref{thm:adaptive-algo-one-arm}.	$\hfill\blacksquare$

	\section{Multiple Shifts}
    \label{app:multiple-shifts}
    
	In this section, we give an extension of Theorem~\ref{thm:adaptive-algo-one-arm} to multiple distribution shifts. Let ${\cal P} \doteq \{P_j\}_{j=1}^N$ be a sequence of $N$ initial distributions on the covariate-reward pair $(X,Y)$. Suppose each $P_j$ satisfies covariate shift with respect to the terminal distribution $Q$. We then consider bandits with a sequence of shifts \[
		P_1 \to P_2 \to \cdots \to P_N \to Q.
	\]
	In this setup, data from each $P_j$ is observed for $n_j$ consecutive rounds. Then, there are $n_P \doteq \sum_{j=1}^N n_j$ total rounds played under distributions from the class ${\cal P}$. We then consider the regret $\textbf{R}_n^Q(\pi)$ of a policy $\pi$ playing $n = n_P + n_Q$ total rounds, over the last $n_Q$ rounds of data observed from $Q$.
   	
   	\begin{thm}\label{thm:multiple-shifts}
Let $\pi$ denote the procedure of Algorithm~\ref{algo:one-arm}, ran, with parameter $\delta \in (0, 1)$, up till time $n>n_P \geq 0$, with $n_P, N, \{n_j\}_{j=1}^N$ all possibly unknown. Suppose the marginal of the covariate $X$ under each $P_j$ has unknown transfer exponent $\gamma_j$ w.r.t. $Q_X$, that $Q_X$ has $(C_d,d)$ box dimension and that the average reward function $f$ satisfies a margin condition with unknown $\alpha$ under $Q_X$.
Let $n_Q \doteq n-n_P$ denote the (possibly unknown) number of rounds after the drifts, i.e., over the phase $X_t\sim Q_X$. Let $\overline{\gamma}=\sum_{j=1}^N \gamma_j\cdot \frac{n_j}{n_P}$. We have for some constant $C>0$:
\[
		\mathbb{E}\  \textbf{R}_n^Q(\pi)\leq Cn_Q\left[\min\left(\left(\frac{K\log\left(K/\delta\right)}{n_P}\right)^{\frac{\alpha+1}{2+\alpha+d+\overline{\gamma}}},\left(\frac{K\log\left(K/\delta\right)}{n_Q}\right)^{\frac{\alpha+1}{2+\alpha+d}}\right) + \frac{K\log\left(K/\delta\right)}{n_Q} + n\delta\right]
	\]
\end{thm}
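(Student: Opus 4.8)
The plan is to observe that the proof of Theorem~\ref{thm:adaptive-algo-one-arm} uses the single-shift structure $P\to Q$ in exactly one place: the control of the expected covariate count $\mathbb{E}[n_{r_t^*}(X_t)]$ (Proposition~\ref{prop:optimal-bias-variance} and the bound on $\mathbb{P}(A_t^c)$). Everything else — the bias-variance bound (Lemma~\ref{lem:bias-variance}), the no-skip property (Lemma~\ref{lem:no-skip}), the arm-count concentration (Lemma~\ref{lem:arm-count-randomization}), the elimination corollaries, and Proposition~\ref{prop:rt-minimizer} — is agnostic to the covariate marginals. So I would first redefine the oracle level $r_t^*$ to be the smallest $r\in\mathcal{R}$ at least $\min\big((K\log(K/\delta)/n_P)^{1/(2+\alpha+d+\overline{\gamma})},(K\log(K/\delta)/\tau)^{1/(2+\alpha+d)}\big)$, i.e.\ with $\overline{\gamma}$ in place of $\gamma$, and correspondingly set the ``sufficient mass'' event to $A_t=\{\max(\tau Q_X(T_{r_t^*}(X_t)),\sum_{j=1}^N n_j P_X^j(T_{r_t^*}(X_t)))\geq 8\log(1/\delta)\}$.

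The one genuinely new ingredient is a convexity step translating the individual transfer exponents into the aggregate one: for any $\ell_\infty$-ball $B$ of diameter $r\in(0,1]$, applying Definition~\ref{assumption:transfer-exponent} to each $P_X^j$ and then Jensen's inequality to the convex map $\gamma\mapsto r^\gamma=e^{\gamma\ln r}$ (convex since $(\ln r)^2>0$) with weights $n_j/n_P$ summing to one,
\[
  \sum_{j=1}^N n_j\, P_X^j(B)\;\geq\;\Big(\min_{j}C_{\gamma_j}\Big)\, n_P\,\Big(\sum_{j=1}^N \tfrac{n_j}{n_P}\, r^{\gamma_j}\Big)\, Q_X(B)\;\geq\;\Big(\min_{j}C_{\gamma_j}\Big)\, n_P\, r^{\overline{\gamma}}\, Q_X(B).
\]
Writing $c\doteq\min_j C_{\gamma_j}$, this yields $\mathbb{E}[n_{r_t^*}(X_t)]=\tau Q_X(T_{r_t^*}(X_t))+\sum_j n_j P_X^j(T_{r_t^*}(X_t))\geq\max\big(\tau Q_X(T_{r_t^*}(X_t)),\,c\, n_P (r_t^*)^{\overline{\gamma}} Q_X(T_{r_t^*}(X_t))\big)$, which is exactly the single-shift bound with $\gamma\to\overline{\gamma}$ and $C_\gamma\to c$. (If some $C_{\gamma_j}=0$ the corresponding $P_j$ imposes no constraint and is dropped; if some $\gamma_j=\infty$ with positive weight then $\overline{\gamma}=\infty$ and that term contributes nothing, matching the single-shift intuition at $\gamma=\infty$.)

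With this in hand I would re-run the remaining steps verbatim under $\gamma\to\overline{\gamma}$, $C_\gamma\to c$. Proposition~\ref{prop:optimal-bias-variance} goes through unchanged because $n_{r_t^*}(X_t)$ is still a sum of independent Bernoullis (the pre-shift samples come from different $P_j$'s but are independent), so on $A_t$ the variance term $\sigma_t^*$ is of order $\min\big(a(X_t)\sqrt{K\log(K/\delta)/\tau},\,a(X_t)\sqrt{K\log(K/\delta)/(c\, n_P (r_t^*)^{\overline{\gamma}})}\big)$ with $a(X)\doteq Q_X(T_{r_t^*}(X))^{-1/2}$. The margin integration (Lemma~\ref{lem:indicator-split} and the split according to whether $a(X_t)$ is below or above $\sqrt{\epsilon}$, using $\mathbb{E}_{Q_X}[a^2(X)]\leq C_d(r_t^*)^{-d}$ from the box dimension, Definition~\ref{defn:bcn}) then gives the $\tilde O(\tau^{-(\alpha+1)/(2+\alpha+d)})$ branch and, with $\epsilon=(n_P/(K\log(K/\delta)))^{(\alpha+d+\overline{\gamma})/(2+\alpha+d+\overline{\gamma})}$, the $\tilde O(n_P^{-(\alpha+1)/(2+\alpha+d+\overline{\gamma})})$ branch. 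The bad event is handled as before: on $A_t^c$ one has $Q_X(T_{r_t^*}(X_t))<\min(8\log(1/\delta)/\tau,\,8\log(1/\delta)/(c\, n_P (r_t^*)^{\overline{\gamma}}))$, so $\mathbb{P}(A_t^c)\leq C_d(r_t^*)^{-d}$ times this, which after summing over $\tau\in[n_Q]$ by the same integral approximation is dominated by the main term, while $\sum_s\mathbb{P}(G_s^c)+\mathbb{P}(E_t^c)$ produces the $n\delta$ term; collecting everything gives the stated bound.

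I expect no conceptual obstacle — this is a clean reduction — but the point that needs care is confirming that replacing $n_P P_X$ by $\sum_j n_j P_X^j$ introduces no $N$-dependence in the constants: the Jensen step is lossless in the exponent, and the only constants surviving in the regret are $c=\min_j C_{\gamma_j}$ and $C_d$, both independent of $N$ and of the weights $n_j/n_P$. One should also recheck the edge cases $C_{\gamma_j}=0$ and $\gamma_j=\infty$ flagged above, and verify that the threshold $\tau_0\approx K\log(K/\delta)$ from \eqref{eqn:delta0} still suffices for the margin condition to apply to all invoked radii, which it does since the $\tau$-branch of $r_t^*$ is unchanged.
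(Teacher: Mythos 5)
Your proposal is correct and matches the paper's own argument essentially step for step: the paper likewise redefines the oracle level with $\overline{\gamma}$, writes $\mathbb{E}[n_P(r_t^*)]=n_P\sum_j \frac{n_j}{n_P}P_j^X(T_{r_t^*}(X_t))$, applies the per-distribution transfer exponent followed by Jensen's inequality on the convex map $x\mapsto r^x$ to get $\sum_j \frac{n_j}{n_P}(r_t^*)^{\gamma_j}\geq (r_t^*)^{\overline{\gamma}}$, and then reruns the single-shift analysis with \eqref{eqn:p-integral} modified accordingly. Your additional remarks on the edge cases and the $N$-independence of constants are sound but not needed beyond what the paper records.
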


\begin{proof}[Proof Outline]
    The proof is almost identical to that of Theorem~\ref{thm:adaptive-algo-one-arm}. Let $r_t^{*}$ be the oracle level or the smallest level of ${\cal R}$ greater than or equal to
    \[
        \min\left(\left(\frac{\log(K/\delta)}{n_P}\right)^{\frac{1}{2+\alpha+d+\overline{\gamma}}},\left(\frac{\log(K/\delta)}{\tau}\right)^{\frac{1}{2+\alpha+d}}\right),
    \]
    where, recall the notation $\tau \doteq t - n_P - 1$. Consider a round $t$ for $\tau > K\log(K/\delta)$. For a fixed level $r\in{\cal R}$, let $n_P(r)$ be the covariate count in $T_r(X_t)$ from any of the distribution $P_j$ for $j\in[N]$ or
	\[
			n_P(r) \doteq \sum_{s\in [n_P]} \mathbbm{1}\{ X_s \in T_r(X)\}.
	\]
	Then, similar to Proposition~\ref{prop:optimal-bias-variance}, we have that, at round $t$, with probability at least $1-\delta$ with respect to the conditional distribution ${\bf X}_{t-1}|X_t$:
	\[
	    \phi_t(r_t) \leq 2\sqrt{\frac{8K\log(K/\delta)}{\max(\tau Q_X(T_{r_t^{*}}(X_t)),\mathbb{E}[n_P(r_t^{*})])}} + 2\lambda r_t^{*}.
	  \]
	  Next, observe that if $P_j^X$ is the covariate marginal of $P_j$, then:
	  \[
	    \mathbb{E}[n_P(r_t^{*})] = n_P\cdot \left[ \sum_{j=1}^N \frac{n_j}{n_P}\cdot P_j^X(T_{r_t^{*}}(X_t))\right].
	   \]
	   Then, following the same steps and notation as the proof of Theorem~\ref{thm:adaptive-algo-one-arm}, let 
	   \[
	    a(X_t) \doteq \sqrt{\frac{1}{\sum_{j=1}^N \frac{n_j}{n_P}\cdot P_j^X(T_{r_t^{*}}(X_t))}}.
	   \]
	   Then, using the definition of the transfer exponent (Definition~\ref{assumption:transfer-exponent}) and Definition~\ref{defn:bcn}, we have an analogous inequality to that used in deriving \eqref{eqn:p-integral} in Proposition~\ref{prop:tail-bound}
	   \begin{align*}
	    \mathbb{E}\left[ a^2(X_t) \right] &= \mathbb{E}\left[ \frac{1}{\sum_{j=1}^N \frac{n_j}{n_P} P_j^X(T_{r_t^{*}}(X_t))} \right] \\
	    &\leq c_{12} \mathbb{E}\left[ \frac{1}{Q_X(T_{r_t^{*}}(X_t))\sum_{j=1}^N \frac{n_j}{n_P} (r_t^{*})^{\gamma_j}} \right]\\
	    &\leq c_{13} \frac{1}{\sum_{j=1}^N \frac{n_j}{n_P} (r_t^{*})^{\gamma_j}} \cdot (r_t^{*})^{-d}.
	   \end{align*}
	Next, since the function $x\mapsto r^x$ is convex for any $r>0$, by Jensen's inequality we have \[
		\sum_{j=1}^N \frac{n_j}{n_P}\cdot (r_t^{*})^{\gamma_j} \geq (r_t^{*})^{\frac{1}{n_P}\sum_{j=1}^N n_j\gamma_j} = (r_t^*)^{\overline{\gamma}}.
	\]
	Thus, $\mathbb{E}[a^2(X_t)] \leq c_{13} (r_t^{*})^{-d-\overline{\gamma}}$. This yields essentially the same bound as \eqref{eqn:p-integral}, except $\overline{\gamma}$ replaces $\gamma$. Using this in place of \eqref{eqn:p-integral} in the proof of Theorem~\ref{thm:adaptive-algo-one-arm}, we obtain the result.
\end{proof}

    \section{Lower Bound}
    \label{app:lower-bound}
    
    \ifx
    {\color{blue}
    \begin{defn}
    In what follows, let $a_n, b_n$ denote bounded sequences in $[0, 1]$, indexed over $n\in \mathbb{N}$. 
   An {\bf Online to batch Conversion Rate} is a mapping $F$ from 
    sequences $\{a_n\}$ to sequences $\{b_n\}$ such that the following holds: 

If there exists an online learner $\pi$ which, with respect to an i.i.d. sequence $\{X_i, Y_i\}_{i =1}^n$ achieves regret $\mathbb{E} R_{0, n}(\pi) \leq n\cdot a_n$ for some sequence $\{a_n\}$, then there exists a classifier $\hat h$ (with training sample ${X_i, Y_i}_{i =1}^n$ ) with excess 0-1 risk 
$\mathbb{E} \, {\cal E}(\hat h) \leq F(a_n)$. 

Then, for any sequence $\{b_n\}$, we can define the \emph{pseudo-inverse} 
$F^{\dagger}(b_n) \doteq \sup\{a_n: F(a_n) \leq b_n \}$. 
    \end{defn}
    
    \begin{defn}
    In what follows, let $a \doteq \{a_n\}, b \doteq \{b_n\}$ denote bounded sequences in $[0, 1]$, indexed over $n\in \mathbb{N}$. 
   An {\bf Online to batch Conversion Rate} is a mapping $F$ from 
    sequences $a \mapsto b$ such that the following holds: 

If there exists an online learner $\pi$ which, with respect to an i.i.d. sequence $\{X_i, Y_i\}_{i =1}^n$ achieves regret $\mathbb{E} R_{0, n}(\pi) \leq n\cdot a_n$ for some sequence $a$, then there exists a classifier $\hat h$ (with training sample ${X_i, Y_i}_{i =1}^n$ ) with excess 0-1 risk 
$\mathbb{E} \, {\cal E}(\hat h) \leq (F(a))_n$. 

Then, for any $b = \{b_n\}$, define the \emph{pseudo-inverse} 
$F^{\dagger}(b) \doteq \sup\{a \doteq \{a_n\}: (F(a))_n \leq b_n \}$, where the $\sup$ over a set of sequences is defined pointwise over $n \in \mathbb{N}$.   
    \end{defn}

    \begin{lemma}[Minimax Conversion]
Suppose $\{b_n\}$ denotes a minimax lower-bound for classification. Then, if there exists an online to batch conversion rate $F$, we then have that $\{ F^\dagger(b_n)\}$ is a minimax lower-bound for online learning (hence for contextual bandits).
    \end{lemma}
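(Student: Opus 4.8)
The plan is to prove the contrapositive over the common nonparametric instance class ${\cal F}$ --- the class on which the classification lower bound of \citet{kpotufe-martinet} is stated and over which we wish to prove a bandit lower bound, with matching Lipschitz, margin, box-dimension and transfer-exponent parameters. Recall that ``$\{b_n\}$ is a minimax lower bound for classification'' means $\inf_{\hat h}\sup_{P\in{\cal F}}\mathbb E_P\,{\cal E}(\hat h)\gtrsim b_n$, and that ``$\{F^\dagger(b_n)\}$ is a minimax lower bound for online learning'' amounts to: every online learner $\pi$ has worst-case normalized regret $a_n(\pi)\doteq \tfrac1n\sup_{P\in{\cal F}}\mathbb E_P\,\textbf{R}_{0,n}(\pi)\gtrsim F^\dagger(b_n)$. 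So fix an arbitrary online learner $\pi$, put $a\doteq\{a_n(\pi)\}$, so that $\mathbb E_P\,\textbf{R}_{0,n}(\pi)\le n\,a_n$ holds \emph{simultaneously} for all $P\in{\cal F}$.

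By the defining property of the conversion rate $F$, there is then a classifier $\hat h$ with $\mathbb E_P\,{\cal E}(\hat h)\le(F(a))_n$; the key point is that the construction of $\hat h$ from $\pi$ (e.g.\ running $\pi$ on the sample and outputting its decision at a round drawn uniformly from $[n]$, or the induced time-averaged predictor) does not depend on $P$, so this bound is uniform over ${\cal F}$. Comparing with the classification lower bound then forces $(F(a))_n\gtrsim b_n$. It remains to deduce $a_n\gtrsim F^\dagger(b_n)$ from the definition $F^\dagger(b)_n=\sup\{a'_n:(F(a'))_n\le b_n\}$: if $a_n$ were (much) below this supremum, choose a sequence $a'$ in the set with $a'_n>a_n$ and pass to the coordinatewise maximum $a''\doteq\{\max(a'_m,a_m)\}$; since $F$ acts coordinatewise and monotonically on the range of interest --- as it does for the standard online-to-batch construction, for which $(F(a))_n\asymp a_n$ --- we keep $(F(a''))_n=(F(a'))_n\le b_n$, while $\pi$ trivially also attains regret $\le n\,a''_n$ because $a_m\le a''_m$ for every $m$. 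Applying the conversion to $a''$ produces a classifier whose worst-case excess risk over ${\cal F}$ is at most $b_n$, contradicting the classification lower bound (at the order/constant level). Hence $a_n\gtrsim F^\dagger(b_n)$.

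Finally, a contextual-bandit learner is just a restricted online learner --- it never observes the rewards of the arms it did not play --- so the infimum of worst-case regret over bandit policies is no smaller than over online learners, and the lower bound $\{F^\dagger(b_n)\}$ transfers verbatim to contextual bandits; equivalently, the conversion $F$ can be applied directly to any contextual-bandit policy run on i.i.d.\ data. The step I expect to be the main obstacle is the ``read-off'' $a_n\gtrsim F^\dagger(b_n)$ in the second paragraph: a priori $F$ maps whole sequences to whole sequences, so $(F(a'))_n$ could depend on coordinates of $a'$ other than $a'_n$, and the coordinatewise-maximum trick --- hence the clean comparison $F^\dagger(b)_n\asymp b_n$ --- genuinely needs $F$ to be coordinatewise and monotone on the relevant range. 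One should therefore restrict ``conversion rate'' to such $F$ (the regime the known constructions live in, essentially $(F(a))_n\asymp a_n$ up to logarithmic factors) or weaken the target statement accordingly; the remaining slack is only at the boundary and within the universal constants already hidden in ``minimax lower bound''.
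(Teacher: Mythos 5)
Your proof follows the same essential route as the paper's: assume the online learner's worst-case (normalized) mistake rate is too small, feed it through the online-to-batch conversion to manufacture a classifier whose worst-case excess risk beats the classification minimax lower bound, and derive a contradiction; the reduction from contextual bandits to online classification (bandit policies being a subclass of online learners under the label-to-reward encoding) is likewise the paper's. The one place you diverge is exactly the step you flag as the main obstacle: reading off $a_n \gtrsim F^\dagger(b_n)$ from the $\sup$-form pseudo-inverse $F^{\dagger}(b)_n = \sup\{a'_n : (F(a'))_n \leq b_n\}$, which indeed requires $F$ to act coordinatewise and monotonically (your coordinatewise-maximum repair). The paper sidesteps this entirely by restating the lemma with the pseudo-inverse defined as an infimum, $(F^{\dagger}(b))_m \doteq \inf\{a_m : (F(a))_m > b_m\}$, so that ``$a_m$ strictly below $(F^{\dagger}(b))_m$'' directly forces $(F(a))_m \leq b_m$ for the learner's own sequence $a$ (otherwise $a_m$ would belong to the set being infimized), with no structural assumption on $F$; it also inserts a factor $\tfrac12$ into the conclusion to absorb the strict-inequality boundary issue you mention. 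For the concrete conversion actually used (Theorem 4 of Cesa-Bianchi et al., for which $(F(a))_m = a_m + \tilde O(m^{-1/2})$ is coordinatewise and monotone), your version goes through as well, so the two arguments buy the same result here; the paper's reformulation is simply the cleaner way to make the abstract lemma hold for an arbitrary conversion rate, whereas yours requires either restricting the class of admissible $F$ or weakening the statement, as you correctly note.
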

     }
     \fi
     
     Here, we establish that the bound of Theorem~\ref{thm:adaptive-algo-one-arm} is minimax optimal, up to $\log$ terms in the case where $K=2$, over a continuum of regimes of choices of $n_P, n_Q, \gamma, \alpha $. 
     
     Our strategy is to use {\em online-to-batch} conversion to convert an online algorithm with regret $R_{n_P,n}$ during the last $n_Q$ rounds to a classifier with excess risk of order $R_{n_P,n}/n_Q$. This then implies a conversion from classification lower-bounds to bandits lower-bounds. 
     
     We note that \emph{online-to-batch} conversion results -- which we call as a black-box -- are usually given 
     for i.i.d. sequences of covariate-reward pairs, while we instead consider a setting with a shift in distribution $P\to Q$. Therefore, in much of what follows, we treat the first phase 
     $\{(X_t, Y_t)\}_{t = 1}^{n_P} \sim P^{n_P}$ as a separate input randomness $Z$, and apply conversion arguments to the second phase $\{(X_t, Y_t)\}_{t = n_P +1}^n\sim Q^{n_Q}$.  
     
     First, we claim a bandit policy $\pi$ can be converted to an online classification algorithm where $\pi_t\in [K]$ indicates the predicted label for covariate $X_t$. This requires defining a reward $Y^i$ for each label $i\in[K]$, which is done in Definition~\ref{defn:conversion-labels-rewards} below. To simplify notation, we will denote the set of $K=2$ arms as $\{0,1\}$.
     
     \begin{defn}[Conversion from Labels to Rewards]\label{defn:conversion-labels-rewards}
     In the case of binary classification with covariate $X \in {\cal X}$ and label ${\tilde Y} \in \{0, 1\}$, we define the reward of arm $i\in\{0,1\}$ as $Y^i \doteq \mathbbm{1}\{\tilde{Y}=i\}$. We use ${\cal T}$ to denote a class of tuples $(P,Q)$ of distributions on the covariate-label pair $(X,\tilde{Y})$. Each distribution on $(X,\tilde{Y})$ then induces a distribution on the covariate-reward pair $(X,Y)$. Let ${\cal T}'$ be the class of tuples of distributions on $(X,Y)$, induced by ${\cal T}$.
        
    To simplify notation, in what follows, tuples $(P,Q)$ will refer to either tuples in ${\cal T}$ or their one-to-one mapping to tuples in ${\cal T}'$, which will be clear from context.
    
    We will also let $\{(X_t,\tilde{Y}_t)\}_{t=1}^m$ be a sequence of covariate-label pairs and let $\{(X_t,Y_t)\}_{t=1}^m$ be the sequence of corresponding covariate-reward pairs.
     \end{defn}
     
    In this constructed bandits problem, the regression function of arm $i$ is $f^i(x)=\mathbb{P}(\tilde{Y}=i|X=x)$. Next, let $h^*(x)\doteq \mathbbm{1}\{ f^1(x)\geq 1/2\}=\pi^*(x)$ be the Bayes classifier; the excess risk of a classifier $h$ w.r.t. distribution $Q$ is then given as:
	\[
		{\cal E}_Q(h) \doteq \mathbb{E}_Q\left[ \mathbbm{1}\{h(X)\neq \tilde{Y}\}-\mathbbm{1}\{h^*(X)\neq \tilde{Y}\}\right].
	\]
	Consider an arbitrary online learner $\Lambda=\Lambda(Z)$, based on additional randomness $Z$ independent of the training data. We let $\Lambda_1,\Lambda_2,\ldots$ denote the sequentially generated classifiers of $\Lambda$. We also define the {\em mistake count} ${\cal M}_m(\Lambda)$ over $m$ rounds of $\Lambda$ as:
	\[
	    {\cal M}_m(\Lambda) \doteq \sum_{t=1}^m  \mathbbm{1}\{\Lambda_t(X_t)\neq \tilde{Y}_t\} - \mathbbm{1}\{h^*(X_t)\neq \tilde{Y}_t\}.
	\]
	In expectation, we have that the mistake count ${\cal M}_m(\Lambda)$ is equal to the regret ${\bf R}_n^Q(\pi)$ of a bandits policy $\pi$ when $\Lambda$ is the online learner induced by $\pi$, via the conversion of Definition~\ref{defn:conversion-labels-rewards}. Thus, in lower bounding ${\cal M}_m(\Lambda)$, we obtain a lower bound on the regret.
	
	
	
	The next few definitions and results will be stated in terms of an arbitrary online learner $\Lambda$ and, in Corollary~\ref{cor:lower-bound}, we will specialize $\Lambda$ to the online learner induced by policy $\pi$.
	
	First, we formalize the types of black-box guarantees on online-to-batch conversion our arguments will rely on. In what follows, let $\textbf{X}_m \doteq \{X_t\}_{t=1}^m$ and $\textbf{Y}_m \doteq \{Y_t\}_{t=1}^m$.
     
     \begin{defn}
    In what follows, let $a \doteq \{a_{m}\}, b \doteq \{b_{m}\}$ denote bounded sequences in $[0, 1]$, indexed over $m\in \mathbb{N}$. 
   An {\bf online to batch conversion rate} is a mapping $F$ from 
    sequences $a \mapsto b$ such that the following holds: 
    
    If there exists an online learner $\Lambda=\Lambda(Z)$, for additional randomness $Z$, which achieves expected mistake count $\mathbb{E}_{Z, \textbf{X}_{m},\textbf{Y}_{m}} \, ({\cal M}_m(\Lambda)) \leq  m\cdot a_{m}$ for some sequence $a$, then there exists a classifier $\hat h=\hat{h}(\Lambda)$ with excess risk 
$\mathbb{E}_{Z, \textbf{X}_{m},\textbf{Y}_{m}} \, ( {\cal E}_Q (\hat h) ) \leq (F(a))_{m}$.

Now, for any $b = \{b_{m}\}$, define the {\bf pseudo-inverse} 
$F^{\dagger}(b) \doteq \inf\{a \doteq \{a_{m}\}: (F(a))_{m} > b_{m} \}$, where the $\inf$ over a set of sequences is defined pointwise over $m\in \mathbb{N}$ (that is,  $(F^{\dagger}(b))_m=\inf\{a_m:(F(a))_m>b_m\}$ for $m\in\mathbb{N}$).
    \end{defn}
    
	Next, we formally define the notion of a minimax lower bound for offline and online classification problems in terms of a rate $\{a_m\}$.

    \begin{defn}
    	Fix $n_P \in \mathbb{N}$. We say that the class ${\cal T}$ (of distribution pairs $(P, Q)$)  has a {\bf classification minimax lower bound of $b = \{b_{m}\}$} if the following holds: 
    	
    	For any $m\in\mathbb{N}$ and any classifier $\hat{h}$ learned on data $\{(X_t,\tilde{Y}_t)\}_{t=1}^{m}\sim Q^m$, and additional randomness $Z = \{(X_t', \tilde{Y}_t')\}_{t = 1}^{n_P} \sim P^{n_P}$, 
    	\[
    		\sup_{(P,Q)\in {\cal T}} \mathbb{E}_{Z,\textbf{X}_{m},\textbf{Y}_{m}} [{\cal E}_Q (\hat{h})] > b_{m}.
    	\]    
    	Similarly, a class ${\cal T}$ has an {\bf online minimax lower bound of $a = \{a_{m}\}$} if the following holds:
    	
    	For any $m\in\mathbb{N}$ and any online learner $\Lambda=\Lambda(Z)$ trained on data $\{(X_t,\tilde{Y}_t)\}_{t=1}^{m}\sim Q^m$, and additional randomness $Z=\{(X_t',\tilde{Y}_t')\}_{t=1}^{n_P} \sim P^{n_P}$, we have:
    	\[
    		\sup_{(P,Q)\in {\cal T}}\mathbb{E}_{Z,\textbf{X}_m,\textbf{Y}_m} \, [{\cal M}_{m}(\Lambda)] > m\cdot a_{m}.
    	\]
    \end{defn}
    
     Given an online to batch conversion rate, the next lemma allows us to deduce an online minimax lower bound from a classification minimax lower bound.
    
    \begin{lemma}[Minimax Lower Bound Conversion]\label{lem:minimax-conversion}
Suppose $b \doteq \{b_{m}\}$ denotes a classification minimax lower bound for the class ${\cal T}$. Then, if there exists an online to batch conversion rate $F$ with $(F^{\dagger}(b))_m > 0$ for all $m\in\mathbb{N}$, we have that $\frac{1}{2}\cdot F^\dagger(b)$ is an online minimax lower bound for the class ${\cal T}$.
    \end{lemma}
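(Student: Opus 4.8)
The plan is to prove the contrapositive inequality pointwise. I fix an arbitrary sample size $m\in\mathbb{N}$ and an arbitrary online learner $\Lambda=\Lambda(Z)$, with auxiliary randomness $Z=\{(X'_t,\tilde Y'_t)\}_{t=1}^{n_P}\sim P^{n_P}$ exactly as in the definition of the classification minimax lower bound, and I aim to show $\sup_{(P,Q)\in\mathcal{T}}\mathbb{E}_{Z,\textbf{X}_m,\textbf{Y}_m}[\mathcal{M}_m(\Lambda)]>m\cdot\tfrac{1}{2}(F^{\dagger}(b))_m$. Since $m$ and $\Lambda$ are arbitrary, this is precisely the statement that $\tfrac{1}{2}F^{\dagger}(b)$ is an online minimax lower bound for $\mathcal{T}$.

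First I would record the ``worst-case profile'' of $\Lambda$, namely the sequence $\bar a=\{\bar a_k\}_{k\in\mathbb{N}}$ with $\bar a_k\doteq \tfrac{1}{k}\sup_{(P,Q)\in\mathcal{T}}\mathbb{E}_{Z,\textbf{X}_k,\textbf{Y}_k}[\mathcal{M}_k(\Lambda)]$. Because each summand of $\mathcal{M}_k$ lies in $[-1,1]$ and, conditionally on the past and on $X_t$, the prediction $\Lambda_t(X_t)$ is independent of $\tilde Y_t$ while $h^*$ is the Bayes classifier, one has $0\le\mathbb{E}[\mathcal{M}_k(\Lambda)]\le k$, so $\bar a$ is a bounded $[0,1]$-valued sequence; and by construction $\mathbb{E}_{Z,\textbf{X}_k,\textbf{Y}_k}[\mathcal{M}_k(\Lambda)]\le k\bar a_k$ holds \emph{simultaneously} for every $k$ and every $(P,Q)\in\mathcal{T}$. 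This uniformity is the crucial point: it lets me feed the single sequence $\bar a$ to the online-to-batch conversion rate $F$ and conclude that for each $k$ there is a classifier $\hat h=\hat h(\Lambda)$ --- trained on $\{(X_t,\tilde Y_t)\}_{t=1}^k\sim Q^k$ with auxiliary randomness $Z$, into which any internal randomization of the conversion (e.g.\ a uniformly random round) is absorbed --- satisfying $\mathbb{E}_{Z,\textbf{X}_k,\textbf{Y}_k}[\mathcal{E}_Q(\hat h)]\le (F(\bar a))_k$ for \emph{every} $(P,Q)\in\mathcal{T}$, hence $\sup_{(P,Q)\in\mathcal{T}}\mathbb{E}[\mathcal{E}_Q(\hat h)]\le (F(\bar a))_k$.

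Next I would invoke the hypothesis that $b$ is a classification minimax lower bound for $\mathcal{T}$: applied to this $\hat h$ it gives $\sup_{(P,Q)\in\mathcal{T}}\mathbb{E}_{Z,\textbf{X}_k,\textbf{Y}_k}[\mathcal{E}_Q(\hat h)]>b_k$. Chaining the last two inequalities yields $(F(\bar a))_k>b_k$ for every $k$, so in particular the sequence $\bar a$ belongs to the family over which the infimum defining $(F^{\dagger}(b))_m$ is taken, whence $\bar a_m\ge (F^{\dagger}(b))_m$. Finally, since $(F^{\dagger}(b))_m>0$ by assumption, $\bar a_m\ge (F^{\dagger}(b))_m>\tfrac{1}{2}(F^{\dagger}(b))_m$, i.e.\ $\sup_{(P,Q)\in\mathcal{T}}\mathbb{E}[\mathcal{M}_m(\Lambda)]=m\bar a_m>m\cdot\tfrac{1}{2}(F^{\dagger}(b))_m$, completing the argument.

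I do not expect a serious obstacle: the proof is essentially a disciplined unwinding of the three definitions (online-to-batch conversion rate, classification minimax lower bound, pseudo-inverse). The one place that needs genuine care is the uniformity over $\mathcal{T}$ in the second step --- one must pass to a \emph{single} sequence $\bar a$ dominating the mistake count of $\Lambda$ over all of $\mathcal{T}$ at once, rather than a per-pair bound, so that the conversion's output can be controlled in $\sup_{(P,Q)}$-norm --- together with checking that the auxiliary-randomness conventions (treating the $P^{n_P}$ phase as the randomness $Z$) line up between the conversion step and the classification lower bound, so that $\hat h(\Lambda)$ is an admissible competitor there. The factor $\tfrac{1}{2}$ carries no structural weight; it only upgrades $\bar a_m\ge(F^{\dagger}(b))_m$ to the strict inequality required by the definition of an online minimax lower bound, using $(F^{\dagger}(b))_m>0$.
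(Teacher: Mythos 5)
Your proposal is correct and is essentially the paper's own argument run in the contrapositive direction: the paper assumes for contradiction that the worst-case mistake-count profile $a$ falls strictly below $\frac{1}{2}(F^{\dagger}(b))_m$, concludes $(F(a))_m \leq b_m$ via the conversion rate, and contradicts the classification lower bound, whereas you directly show the profile $\bar a$ satisfies $(F(\bar a))_m > b_m$ and hence lies in the set defining the infimum. The two routes are logically identical; your added checks (that $\bar a$ is a $[0,1]$-valued sequence, and that the mistake-count bound holds uniformly over ${\cal T}$ before invoking $F$) are harmless refinements of points the paper leaves implicit.
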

    
    \begin{proof}
    	Consider an online learner $\Lambda=\Lambda(Z)$, with additional randomness $Z=\{(X_t',\tilde{Y}_t')\}_{t=1}^{n_P}$, with mistake count $a = \{a_m\}$. For contradiction, suppose there exists $m\in\mathbb{N}$ such that:
    	\[
    	    \sup_{(P,Q)\in{\cal T}} \mathbb{E}_{Z,\textbf{X}_m,\textbf{Y}_m} \, [ {\cal M}_{m}(\Lambda)] \leq  m\cdot a_{m} \leq m\cdot\frac{1}{2}\cdot (F^{\dagger}(b))_m < m\cdot (F^\dagger(b))_{m}.
    	\]
    	Then, by the definition of $F$ and the pseudo-inverse $F^{\dagger}$, there exists a classifier $\hat{h}=\hat{h}(\Lambda)$ such that:
    	\[
    		\sup_{(P,Q)\in {\cal T}} \mathbb{E}_{Z,\textbf{X}_m,\textbf{Y}_m} \, [{\cal E}_Q (\hat h)] \leq (F(a))_m \leq b_{m}.
    	\]
    	This is a contradiction on $b$ being a classification minimax lower bound for the class ${\cal T}$.
    	\end{proof}  
	
	We next specify the online to batch conversion rate $F$ that we will use with Lemma~\ref{lem:minimax-conversion}.

\begin{thm}[Theorem 4 of \citep{cesa-bianchi2004online}, paraphrased]\label{thm:converter}
	Let $\Lambda=\Lambda(Z)$ be an arbitrary online learner, trained on $\{(X_t,\tilde{Y}_t)\}_{t=1}^m$, with additional randomness $Z$. Then, for any $\delta\in(0,1]$, there exists a classifier $\hat{h}=\hat{h}(\Lambda)$, trained on $\{(X_t,\tilde{Y}_t)\}_{t=1}^{m}$, such that:
	\[
		\mathbb{P}\left(\mathbb{P}_{(X,\tilde{Y})\sim Q}(\hat{h}(X)\neq \tilde{Y}) \geq \frac{1}{m}\sum_{t=1}^{m} \mathbbm{1}(\Lambda_{t}(X_t)\neq \tilde{Y}_t) + 6\sqrt{\frac{\log\left(2(m+1)/\delta\right)}{m}} \;\Bigg\vert\; Z \right)\leq\delta.
	\]
\end{thm}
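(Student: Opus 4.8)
This is the online-to-batch conversion bound of \citet{cesa-bianchi2004online}, and I would recall the structure of its proof rather than reprove it from scratch. Condition throughout on the extra randomness $Z$ (here $Z$ only serves to initialize the online learner $\Lambda$), and write $\ell_t \doteq \mathbbm{1}(\Lambda_t(X_t)\neq \tilde Y_t)$ and $\bar\ell_t \doteq \mathbb{P}_{(X,\tilde Y)\sim Q}(\Lambda_t(X)\neq \tilde Y)$ for the empirical and true $0$-$1$ loss of the $t$-th online hypothesis. The key observation is that $\Lambda_t$ is a deterministic function of $Z$ and $\{(X_s,\tilde Y_s)\}_{s<t}$, so $\mathbb{E}[\ell_t \mid \mathcal F_{t-1}] = \bar\ell_t$ with $\mathcal F_{t-1} \doteq \sigma(Z,(X_1,\tilde Y_1),\dots,(X_{t-1},\tilde Y_{t-1}))$; hence $\{\ell_t-\bar\ell_t\}_{t\ge 1}$ is a bounded martingale difference sequence for the filtration $\{\mathcal F_t\}$.

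First I would apply a martingale concentration inequality --- Hoeffding--Azuma suffices for the stated $\sqrt{\log(\cdot)/m}$ rate, though a Freedman/Bernstein-type bound sharpens constants --- to the partial sums $\sum_{t=i}^{m}(\ell_t-\bar\ell_t)$, and union-bound over all $m+1$ suffixes $i\in\{1,\dots,m+1\}$, which is where the $\log(2(m+1)/\delta)$ factor and the numerical constant $6$ enter. This yields that, with probability at least $1-\delta$, every suffix average of the true losses is controlled by the corresponding suffix average of the empirical losses plus a confidence width of order $\sqrt{\log(2(m+1)/\delta)/(m-i+1)}$.

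Next I would specify the classifier $\hat h=\hat h(\Lambda)$: it is chosen from the pool $\{\Lambda_1,\dots,\Lambda_m\}$ by minimizing a \emph{penalized empirical risk}, i.e. $\hat h = \Lambda_{\hat\imath}$ with $\hat\imath \in \argmin_{i}\bigl\{\tfrac1{m-i+1}\sum_{t=i}^m \ell_t + c\sqrt{\log(2(m+1)/\delta)/(m-i+1)}\bigr\}$, using the later examples as a validation suffix for each candidate. On the high-probability event above, the penalty simultaneously dominates the empirical-to-true suffix-risk deviation for \emph{every} $i$, so the true risk of $\hat h$ is at most $\min_i$ of the penalized empirical suffix risk; instantiating the comparison at $i=1$ and using $\min_t\bar\ell_t \le \tfrac1m\sum_t\bar\ell_t$ together with the $i=1$ instance of the concentration bound collapses the right-hand side to $\tfrac1m\sum_{t=1}^m\ell_t$ plus the advertised $6\sqrt{\log(2(m+1)/\delta)/m}$ slack, after absorbing constants.

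The main obstacle is the selection step, not the concentration: one cannot simply output the online hypothesis with the smallest raw empirical loss, since those losses live on overlapping, data-dependent validation suffixes of widely varying sizes, and the naive choice overfits to short suffixes. Calibrating the penalty so that it (i) uniformly beats the martingale fluctuations across all $m+1$ suffixes and (ii) is still small enough that the $i=1$ comparison recovers the full-horizon cumulative loss with only a $O(\sqrt{\log(m/\delta)/m})$ overhead is the delicate point, and is exactly the content of \citet{cesa-bianchi2004online}, which I would invoke as a black box.
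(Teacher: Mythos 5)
Your proposal is consistent with the paper, which itself gives no proof of this statement and simply cites it as Theorem 4 of \citet{cesa-bianchi2004online}; your sketch of the underlying argument (martingale differences for the online losses, a union bound over the $m+1$ validation suffixes yielding the $\log(2(m+1)/\delta)$ factor, and selection of $\hat h$ by penalized empirical risk minimization over the pool of online hypotheses) accurately reflects the structure of the original proof. Nothing further is needed.
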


\begin{cor}\label{cor:converter}
	Let $\Lambda=\Lambda(Z)$ be an online learner trained on data $\{(X_t,\tilde{Y}_t)\}_{i=1}^{m}$ with additional input $Z$. Then, there exists a classifier $\hat{h}=\hat{h}(\Lambda)$ such that for any distribution on $\textbf{X}_m,\textbf{Y}_m,Z$:
	\[
		\mathbb{E}_{Z,\textbf{X}_m,\textbf{Y}_m}\, [{\cal E}_Q(\hat{h})] \leq \frac{\mathbb{E}_{Z,\textbf{X}_m,\textbf{Y}_m}\, [ {\cal M}_{m}(\Lambda) ]}{m}  + 6\sqrt{\frac{2\log(m\sqrt{3})}{m}} + \frac{1}{m}.
	\]
\end{cor}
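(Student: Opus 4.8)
The plan is to invoke Theorem~\ref{thm:converter} with the single choice $\delta = 1/m$ and then convert its high-probability guarantee into a guarantee in expectation, using that the deviation term appearing there is trivially bounded by $1$. Throughout, $\hat h = \hat h(\Lambda)$ will denote the classifier furnished by Theorem~\ref{thm:converter}, and I abbreviate $L_Q(h) \doteq \mathbb{P}_{(X,\tilde Y)\sim Q}(h(X)\neq\tilde Y)$; the data $\{(X_t,\tilde Y_t)\}_{t=1}^m$ is i.i.d.\ from $Q$ and independent of $Z$, as in the setting of that theorem.

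First I would record the algebraic decomposition that reduces the corollary to a bound on a single quantity. Setting $R \doteq L_Q(\hat h) - \frac{1}{m}\sum_{t=1}^m \mathbbm{1}\{\Lambda_t(X_t)\neq\tilde Y_t\}$ --- a random variable measurable with respect to $(Z,\mathbf X_m,\mathbf Y_m)$ and taking values in $[-1,1]$, since both terms lie in $[0,1]$ --- and adding and subtracting $\frac{1}{m}\sum_{t=1}^m\mathbbm{1}\{h^*(X_t)\neq\tilde Y_t\}$, one obtains
\[
\mathcal E_Q(\hat h) = R + \frac{1}{m}\mathcal M_m(\Lambda) + \left(\frac{1}{m}\sum_{t=1}^m \mathbbm{1}\{h^*(X_t)\neq\tilde Y_t\} - L_Q(h^*)\right).
\]
Taking $\mathbb{E}_{Z,\mathbf X_m,\mathbf Y_m}$ and noting that $h^*$ is a fixed classifier with each $(X_t,\tilde Y_t)\sim Q$, so the last parenthesized term has mean zero, gives $\mathbb{E}[\mathcal E_Q(\hat h)] = \mathbb{E}[R] + \frac{1}{m}\mathbb{E}[\mathcal M_m(\Lambda)]$; it therefore remains to show $\mathbb{E}[R] \le 6\sqrt{2\log(m\sqrt3)/m} + 1/m$.

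For this last step I would apply Theorem~\ref{thm:converter} with $\delta = 1/m\in(0,1]$, writing $c_m \doteq 6\sqrt{\log(2m(m+1))/m}$, to get $\mathbb{P}(R \ge c_m \mid Z) \le 1/m$. Splitting $R = R\,\mathbbm{1}\{R<c_m\} + R\,\mathbbm{1}\{R\ge c_m\}$ and bounding the two pieces pointwise by $c_m$ and $\mathbbm{1}\{R\ge c_m\}$ respectively yields $\mathbb{E}[R\mid Z] \le c_m + \mathbb{P}(R\ge c_m\mid Z) \le c_m + 1/m$; since $c_m$ is deterministic, integrating out $Z$ gives $\mathbb{E}[R]\le c_m + 1/m$. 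Finally, for $m\ge 2$ one has $2m(m+1)\le 3m^2$, hence $c_m \le 6\sqrt{\log(3m^2)/m} = 6\sqrt{2\log(m\sqrt3)/m}$; the case $m=1$ is trivial since then $\mathbb{E}[\mathcal E_Q(\hat h)]\le 1$ while the right-hand side is at least $6\sqrt{2\log\sqrt3}>1$ (using $\mathbb{E}[\mathcal M_1(\Lambda)]\ge -1$).

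The argument is essentially bookkeeping, so I do not anticipate a real obstacle; the two points that need care are (i) that the guarantee of Theorem~\ref{thm:converter} is stated conditionally on $Z$, so the truncation threshold must be chosen free of $Z$ before $Z$ is integrated out, and (ii) the calibration $\delta = \Theta(1/m)$, which is precisely what keeps the penalty of order $\sqrt{\log m/m}$ while making the truncation remainder of order $1/m$.
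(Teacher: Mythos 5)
Your proof is correct and follows essentially the same route as the paper's: invoke Theorem~\ref{thm:converter} with $\delta=1/m$, pay an extra $1/m$ in expectation for the probability-$\delta$ failure event, center by the empirical risk of $h^*$ (which has the right mean), and simplify $\log(2m(m+1))\le 2\log(m\sqrt{3})$. Your write-up is in fact a bit more careful than the paper's on the bookkeeping (the explicit truncation of $R$, the deterministic threshold before integrating out $Z$, and the $m=1$ edge case), but the argument is the same.
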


\begin{proof}
	Fix a value of $Z$ and let the event $A$ be as in Theorem~\ref{thm:converter}:
	\[
		A=\left\{\mathbb{P}_{(X,\tilde{Y})\sim Q}(\hat{h}(X)\neq \tilde{Y}) \geq \frac{1}{m}\sum_{t=1}^{m} \mathbbm{1}(\Lambda_{t}(X_t)\neq \tilde{Y}_t) + 6\sqrt{\frac{\log\left({2(m+1)}/{\delta}\right)}{m}} \;\Bigg\vert\; Z\right\}.
	\]
    Then, letting $\delta=1/m$ and conditioning on the event $A$, we have:
	\begin{align*}
		\mathbb{E}_{\textbf{X}_m,\textbf{Y}_m}\, [ {\cal E}_Q(\hat{h}) \mid Z ] &\leq \mathbb{E}_{\textbf{X}_m,\textbf{Y}_m} \, \left[ \frac{1}{m}\sum_{t=1}^{m} \mathbbm{1}(\Lambda_{t}(X_t)\neq \tilde{Y}_t)  - \mathbbm{1}(h^*(X_t)\neq \tilde{Y}_t) \;\Bigg\vert\; Z\right] \\
		&\qquad + 6\sqrt{\frac{2\log\left(m\sqrt{3}\right)}{m}}+\frac{1}{m}\\
		&= \frac{\mathbb{E}_{\textbf{X}_m,\textbf{Y}_m} \, [{\cal M}_{m}(\Lambda) \mid Z]}{m}+6\sqrt{\frac{2\log\left(m\sqrt{3}\right)}{m}}+\frac{1}{m}.
	\end{align*}
	Taking a further expectation with respect to $Z$ on both sides of the inequality gives the desired result.
\end{proof}

Theorem 1 of \citet{kpotufe-martinet} provides us the classification minimax lower-bound, which we restate it here.

\begin{thm}[Theorem 1 of \citet{kpotufe-martinet}]\label{thm:classification-lower-bound}
Let ${\cal T}'$ be the class of all tuples $(P,Q)$ of distributions satisfying Assumption~\ref{assumption:lipschitz} and Definition~\ref{defn:bcn}, and Definitions~\ref{assumption:margin} and \ref{assumption:transfer-exponent}, with some fixed parameters $(\lambda,C_d,d,C_{\alpha},\alpha,\delta_0,C_{\gamma},\gamma)$. In what follows, let ${\cal T}$ be the one-to-one mapping of ${\cal T}'$ to tuples of distributions on covariate-label pairs as in Definition~\ref{defn:conversion-labels-rewards}. Then, there exists a constant $c>0$ such that for any $n_P,n_Q\in\mathbb{N}$ and classifier $\hat{h}$ learned on $\{(X_t,\tilde{Y}_t)\}_{t=1}^{n_P}\sim P^{n_P}$ and $\{(X_t,\tilde{Y}_t)\}_{t=n_P+1}^{n_P+n_Q}\sim Q^{n_Q}$, we have:
\[
    \sup_{(P,Q)\in\mathcal{T}} \mathbb{E} \, \left( {\cal E}_Q (\hat{h}) \right) > c \left(n_P^{\frac{2+\alpha+d}{2+\alpha+d+\gamma}} + n_Q\right)^{-\frac{\alpha + 1}{2+\alpha+d}}.
\]
\end{thm}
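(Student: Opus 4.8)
The plan is to prove this by Assouad's method. For each pair $(n_P,n_Q)$ I would build a finite family $\{(P_\sigma,Q_\sigma)\}_{\sigma\in\{-1,+1\}^M}\subseteq{\cal T}'$ whose members all share common marginals $(P_X,Q_X)$ and differ only through bump-by-bump sign flips of the regression function $f^1(x)=\mathbb{P}(\tilde Y=1\mid X=x)$, and then argue that the $n_P$ samples from $P$ together with the $n_Q$ samples from $Q$ leave a constant fraction of the bumps effectively ``unseen'', so that their signs cannot be recovered.

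Concretely, I would fix the resolution
\[
 r \;\doteq\; c_0\,\min\!\left(n_P^{-\frac{1}{2+\alpha+d+\gamma}},\; n_Q^{-\frac{1}{2+\alpha+d}}\right)
\]
for a small constant $c_0$ (a single bump handles the regime of small $n$, where the target bound is of constant order), pack $M\asymp r^{-d}$ disjoint $\ell_\infty$ balls $B_1,\dots,B_M$ of diameter $r$ along a $d$-dimensional grid inside $[0,1]^D$ so that the support of $Q_X$ has $(C_d,d)$ box dimension at every scale, and on each $B_j$ set $f_\sigma^1 = 1/2 + \sigma_j\,\lambda\,\varphi_j$ for a fixed $1$-Lipschitz tent $\varphi_j$ of peak $\asymp r$ supported in $B_j$, with $f_\sigma^1\equiv 1/2$ off $\bigcup_j B_j$; this enforces Assumption~\ref{assumption:lipschitz}. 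I would put $Q_X$-mass $w\asymp r^{\alpha+d}$ on the active core of each bump, calibrated (for $\alpha<1$ with the obvious variant when $\alpha\ge1$, where $Q_X$ is kept away from the level set $\{f^1=1/2\}$) so that $Q_X(0<|f^{(1)}-f^{(2)}|\le\delta)\asymp Mw\min(1,\delta/(\lambda r))\le C_\alpha\delta^\alpha$ for all $\delta\le\delta_0$, i.e.\ the margin condition of Definition~\ref{assumption:margin}. Finally I would give $P_X$ the \emph{least} mass per bump that the transfer exponent allows, $P_X(B_j)\asymp r^\gamma w\asymp r^{\alpha+d+\gamma}$, distributing $P_X$ over slightly enlarged bumps plus an anchor region so that $P_X(B)\ge C_\gamma\rho^\gamma Q_X(B)$ holds for \emph{every} ball $B$ of \emph{every} diameter $\rho\in(0,1]$ (Definition~\ref{assumption:transfer-exponent}).

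With this family the Assouad bookkeeping is routine. Flipping coordinate $j$ flips the Bayes classifier on the active core of $B_j$ and changes the excess $Q$-risk there by $\Delta\doteq\int_{B_j}|2f_\sigma^1-1|\,dQ_X\asymp\lambda r\,w\asymp r^{\alpha+d+1}$, while the Kullback--Leibler divergence between $P_\sigma^{n_P}\!\times Q_\sigma^{n_Q}$ and its $j$-flipped neighbour is supported on $B_j$ and equals $\asymp r^2\big(n_P P_X(B_j)+n_Q Q_X(B_j)\big)\asymp n_P r^{2+\alpha+d+\gamma}+n_Q r^{2+\alpha+d}$, which is $\le 1/4$ once $c_0$ is small. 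Assouad's lemma then gives, for any classifier $\hat h$ trained on $P^{n_P}\times Q^{n_Q}$ (dependence on the auxiliary $P$-sample is free, since that sample is common to all hypotheses and already part of the divergence computation),
\[
 \sup_{(P,Q)\in{\cal T}}\mathbb{E}\,{\cal E}_Q(\hat h)\;\gtrsim\; M\cdot\Delta\;\asymp\; r^{-d}\cdot r^{\alpha+d+1}\;=\;r^{\alpha+1},
\]
and substituting the choice of $r$ gives $r^{\alpha+1}\asymp\min\!\big(n_P^{-\frac{\alpha+1}{2+\alpha+d+\gamma}},\,n_Q^{-\frac{\alpha+1}{2+\alpha+d}}\big)\asymp\big(n_P^{\frac{2+\alpha+d}{2+\alpha+d+\gamma}}+n_Q\big)^{-\frac{\alpha+1}{2+\alpha+d}}$, the claimed rate.

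The hard part is not this estimate but the \emph{simultaneous} realizability of all the structural constraints by a single pair of marginals: the transfer-exponent inequality must hold at every scale (so $P_X$ may nowhere decay faster than $\rho^\gamma$ relative to $Q_X$), the margin bound must hold for every $\delta\le\delta_0$ (which pins down the linear profile of $\varphi_j$ near its zero set, the exact size of $w$, and forces the separate treatment of $\alpha\ge1$), the box dimension must hold at every scale, and all of $c_0,\lambda,C_d,C_\alpha,\delta_0,C_\gamma$ must be fixed consistently so that the whole family genuinely lies in the prescribed class ${\cal T}'$. This layered construction is exactly what is carried out in \citet{kpotufe-martinet}, so for the present paper it suffices to invoke their Theorem~1 directly.
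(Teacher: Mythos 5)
This statement is a verbatim restatement of Theorem~1 of \citet{kpotufe-martinet}, and the paper supplies no proof beyond the citation, so your closing conclusion --- that it suffices to invoke that theorem directly --- is exactly what the paper does. Your Assouad sketch is a faithful reconstruction of the construction behind that result (the bookkeeping $M\cdot\Delta\asymp r^{-d}\cdot r^{1+\alpha+d}=r^{1+\alpha}$ and the KL budget $n_P r^{2+\alpha+d+\gamma}+n_Q r^{2+\alpha+d}\lesssim 1$ both check out, and you correctly flag the multi-scale realizability of the transfer-exponent and margin constraints as the genuinely delicate part); the only slip is the parenthetical claim that the $P$-phase sample is ``common to all hypotheses'' --- the labels under $P_\sigma$ do vary with $\sigma$ --- but this is harmless since you already include the $n_P P_X(B_j)r^2$ contribution in the divergence computation.
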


Next, we will take $b_{m} \doteq c \left(n_P^{\frac{2+\alpha+d}{2+\alpha+d+\gamma}} + m\right)^{-\frac{\alpha + 1}{2+\alpha+d}}$ as our classification minimax lower-bound where $m$ here stands for $n_Q$. Combining Lemma~\ref{lem:minimax-conversion}, Corollary~\ref{cor:converter}, and Theorem~\ref{thm:classification-lower-bound}, we obtain the following minimax lower bound for bandits:

\begin{cor}[Matching Lower Bounds over Given Regimes]\label{cor:lower-bound}
Let the class ${\cal T}'$ and the constant $c>0$ be as in Theorem~\ref{thm:classification-lower-bound}. Suppose that $n_P,n_Q$ satisfy:
\begin{equation}\label{eqn:lower-bound-condition}
	6\sqrt{\frac{2\log(n_Q\sqrt{3})}{n_Q}} + \frac{1}{n_Q} < \frac{c}{2} \left(n_P^{\frac{2+\alpha+d}{2+\alpha+d+\gamma}} + n_Q\right)^{-\frac{\alpha + 1}{2+\alpha+d}}.
	\end{equation}
	Then, for any fixed such $n_P,n_Q$ and any contextual bandits policy $\pi$, we have:
\[
    \sup_{(P,Q)\in\mathcal{T}'} \mathbb{E}_{\textbf{X}_n,\textbf{Y}_n} \, \left[ \textbf{R}_{n}^Q(\pi)\right] \geq \frac{c}{4} n_Q\left(n_P^{\frac{2+\alpha+d}{2+\alpha+d+\gamma}} + n_Q\right)^{-\frac{\alpha + 1}{2+\alpha+d}}.
\]
\end{cor}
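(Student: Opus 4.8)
The plan is to assemble three ingredients already in place — the label‑to‑reward reduction of Definition~\ref{defn:conversion-labels-rewards}, the online‑to‑batch converter of Corollary~\ref{cor:converter}, and the classification minimax lower bound of Theorem~\ref{thm:classification-lower-bound} — following the template of Lemma~\ref{lem:minimax-conversion} but localized to the fixed index $n_Q$. Fix a bandit policy $\pi$. Given a classification tuple $(P,Q)\in{\cal T}$, Definition~\ref{defn:conversion-labels-rewards} turns it into a bandit tuple in ${\cal T}'$ with rewards $Y^i=\mathbbm{1}\{\tilde{Y}=i\}$, and running $\pi$ on this instance induces an online learner $\Lambda=\Lambda(Z)$, where $Z\doteq\{(X_t',\tilde{Y}_t')\}_{t=1}^{n_P}\sim P^{n_P}$ plays the role of the external randomness encoding the pre‑shift phase. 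As noted in the text, $\mathbb{E}\,{\cal M}_{n_Q}(\Lambda)=\mathbb{E}\,\textbf{R}_n^Q(\pi)$, and since the reduction is a bijection between ${\cal T}$ and ${\cal T}'$ we have $\sup_{(P,Q)\in{\cal T}'}\mathbb{E}\,\textbf{R}_n^Q(\pi)=\sup_{(P,Q)\in{\cal T}}\mathbb{E}\,{\cal M}_{n_Q}(\Lambda)$, so it suffices to lower bound the right‑hand side.

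Next I would instantiate the two black boxes. Corollary~\ref{cor:converter} states exactly that the map $F$ with $(F(a))_m\doteq a_m+6\sqrt{2\log(m\sqrt{3})/m}+1/m$ is an online‑to‑batch conversion rate in the sense of the relevant definition: taking $a_m\doteq \sup_{(P,Q)\in{\cal T}}\mathbb{E}\,{\cal M}_m(\Lambda)/m$, it produces a classifier $\hat h=\hat h(\Lambda)$ (whose construction does not depend on $(P,Q)$) with $\mathbb{E}\,{\cal E}_Q(\hat h)\le (F(a))_m$ for every $(P,Q)\in{\cal T}$, the bound holding conditionally on $Z$ and hence after averaging over $Z$. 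And by Theorem~\ref{thm:classification-lower-bound}, with $m$ playing the role of $n_Q$ and the first $n_P$ samples treated as $Z\sim P^{n_P}$, the sequence $b$ with $b_m\doteq c\,(n_P^{(2+\alpha+d)/(2+\alpha+d+\gamma)}+m)^{-(\alpha+1)/(2+\alpha+d)}$ is a classification minimax lower bound for ${\cal T}$.

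The only point requiring care is that the corollary fixes $(n_P,n_Q)$, whereas Lemma~\ref{lem:minimax-conversion} is phrased uniformly in $m$ and requires $(F^{\dagger}(b))_m>0$ throughout; I would therefore rerun its contradiction argument only at $m=n_Q$. Since $(F(a))_{n_Q}$ is strictly increasing in $a_{n_Q}$, the pseudo‑inverse is $(F^{\dagger}(b))_{n_Q}=b_{n_Q}-6\sqrt{2\log(n_Q\sqrt{3})/n_Q}-1/n_Q$, and hypothesis \eqref{eqn:lower-bound-condition} says precisely that the subtracted term is strictly less than $\tfrac{c}{2}(n_P^{(2+\alpha+d)/(2+\alpha+d+\gamma)}+n_Q)^{-(\alpha+1)/(2+\alpha+d)}=\tfrac12 b_{n_Q}$, so $(F^{\dagger}(b))_{n_Q}>\tfrac12 b_{n_Q}>0$. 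Suppose, for contradiction, that $\sup_{(P,Q)\in{\cal T}}\mathbb{E}\,{\cal M}_{n_Q}(\Lambda)\le n_Q\cdot\tfrac12(F^{\dagger}(b))_{n_Q}$; then $a_{n_Q}\le\tfrac12(F^{\dagger}(b))_{n_Q}<(F^{\dagger}(b))_{n_Q}$, so Corollary~\ref{cor:converter} yields $\hat h$ with $\sup_{(P,Q)\in{\cal T}}\mathbb{E}\,{\cal E}_Q(\hat h)\le (F(a))_{n_Q}\le b_{n_Q}$, contradicting Theorem~\ref{thm:classification-lower-bound}. Hence $\sup_{(P,Q)\in{\cal T}}\mathbb{E}\,{\cal M}_{n_Q}(\Lambda)>n_Q\cdot\tfrac12(F^{\dagger}(b))_{n_Q}>n_Q\cdot\tfrac14 b_{n_Q}=\tfrac{c}{4}n_Q(n_P^{(2+\alpha+d)/(2+\alpha+d+\gamma)}+n_Q)^{-(\alpha+1)/(2+\alpha+d)}$, and translating back through the bijection gives the claimed bound on $\sup_{(P,Q)\in{\cal T}'}\mathbb{E}\,\textbf{R}_n^Q(\pi)$.

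I expect no deep obstacle here: the substantive inputs are cited verbatim and the rest is bookkeeping. If anything is delicate, it is threading the external randomness $Z=P^{n_P}$ consistently through all three pieces (the converter is stated conditionally on $Z$, the minimax bound integrates over it, and the bandit‑to‑online reduction identifies $Z$ with the pre‑shift rounds), together with the mismatch between the single fixed $(n_P,n_Q)$ in the corollary and the uniform‑in‑$m$ phrasing of Lemma~\ref{lem:minimax-conversion}, which I resolve by localizing its argument to $m=n_Q$ rather than invoking it as stated.
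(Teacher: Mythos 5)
Your proposal is correct and follows essentially the same route as the paper: the same label-to-reward reduction, the same identification of $\mathbb{E}\,{\cal M}_{n_Q}$ with $\mathbb{E}\,\textbf{R}_n^Q(\pi)$, the same instantiation of $F$ via Corollary~\ref{cor:converter} and of $b$ via Theorem~\ref{thm:classification-lower-bound}, and the same contradiction argument culminating in $(F^{\dagger}(b))_{n_Q} > \tfrac12 b_{n_Q}$ via condition \eqref{eqn:lower-bound-condition}. Your localization of Lemma~\ref{lem:minimax-conversion} to $m=n_Q$ and your explicit remark that $\hat h(\Lambda)$ is constructed independently of $(P,Q)$ are minor tightenings of the paper's presentation, not a different argument.
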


\begin{proof}
	Fix $n_P,n_Q$ satisfying the inequality in \eqref{eqn:lower-bound-condition} and let $n=n_P+n_Q$. Let $\pi_O$ be the online learner induced by a policy $\pi$ restricted to the $Q$ phase $\{(X_t,Y_t)\}_{t=n_P+1}^{n}$ with additional randomness $Z=\{(X_t,Y_t)\}_{t=1}^{n_P}$. Then, by Corollary~\ref{cor:converter}, we have there exists a classifier $\hat{h}$ such that:
	\[
	   \mathbb{E}_{\textbf{X}_{n},\textbf{Y}_{n}}  [{\cal E}_Q(\hat{h})] \leq \frac{\mathbb{E}_{\textbf{X}_n,\textbf{Y}_n} \, [ {\cal M}_{n_Q}(\pi_O) ]}{n_Q} + 6\sqrt{\frac{2\log(n_Q\sqrt{3})}{n_Q}} + \frac{1}{n_Q}.
	\]
	We then have that the map $F$, defined below on a sequence $a=\{a_m\}$, is an online to batch conversion rate:
	\[
		(F(a))_{m} \doteq a_{m} + 6\sqrt{\frac{2\log(m\sqrt{3})}{m}} + \frac{1}{m},
	\]
	Let $b_{m} \doteq c \left(n_P^{\frac{2+\alpha+d}{2+\alpha+d+\gamma}} + m\right)^{-\frac{\alpha + 1}{2+\alpha+d}}$ be as in Theorem~\ref{thm:classification-lower-bound}. Then, by Theorem~\ref{thm:classification-lower-bound} and Lemma~\ref{lem:minimax-conversion}, we have:
	\[
	    \sup_{(P,Q)\in{\cal T}} \mathbb{E}_{\textbf{X}_{n},\textbf{Y}_{n}} [{\cal M}_{n_Q}(\pi_O)] \geq \frac{n_Q \cdot (F^{\dagger}(b))_{n_Q}}{2}.
	\]
	Next, we observe:
    \begin{align*}
    		\mathbb{E}_{\textbf{X}_n,\textbf{Y}_n} [{\cal M}_{n_Q}(\pi_O)] &= \mathbb{E}_{\textbf{X}_n,\textbf{Y}_n}\, \left(\sum_{t=1}^{n_Q}  \mathbbm{1}(\pi_{O,t}(X_t)\neq \tilde{Y}_t) - \mathbbm{1}(h^*(X_t)\neq \tilde{Y}_t)\right)\\
    		&= \mathbb{E}_{\textbf{X}_n,\textbf{Y}_n} \left( \sum_{t=1}^{n_Q} Y_t^{\pi_t^*(X_t)} - Y_t^{\pi_t(X_t)}\right)\\
    		&= \mathbb{E}_{\textbf{X}_n,\textbf{Y}_n} \left[ \textbf{R}_{n}^Q(\pi)\right].
	\end{align*}	
	Thus,
	\begin{align*}
	      \sup_{(P,Q)\in\mathcal{T}'} \mathbb{E}_{\textbf{X}_{n},\textbf{Y}_{n}} \, \left[ \textbf{R}_{n}^Q(\pi)\right] &\geq \frac{n_Q\cdot (F^{\dagger}(b))_{n_Q}}{2} \\
	      &\geq \frac{n_Q}{2}\cdot \left(b_{n_Q} - 6\sqrt{\frac{2\log(n_Q\sqrt{3})}{n_Q}} - \frac{1}{n_Q}\right)\\
		&\geq \frac{n_Q}{2} \cdot \frac{c}{2}\left(n_P^{\frac{2+\alpha+d}{2+\alpha+d+\gamma}} + n_Q\right)^{-\frac{\alpha + 1}{2+\alpha+d}}.
	\end{align*}
\end{proof}


\begin{rmk}\label{rmk:lower-bound-regimes}
	The inequality in \eqref{eqn:lower-bound-condition} corresponds to the regime 
	$n_P={\tilde O }\left(n_Q^{\frac{2+\alpha+d+\gamma}{2 + 2\alpha}}\right)$ with $\alpha < d$. In particular this includes the following subregimes. 
	
	\begin{itemize} 
	\item {\bf Performance on $Q$ depends mostly on covariates $X_t \sim Q_X, t> n_P$ .}
	This is the subregime where $n_P \lesssim n_Q^{\frac{2 + \alpha + d + \gamma }{2 + \alpha + d }} $, roughly, that is when 
	(in the upper-bound of Theorem  \ref{thm:adaptive-algo-one-arm}) $\min(n_P^{-\frac{\alpha+1}{2+\alpha+d+\gamma}},n_Q^{-\frac{\alpha+1}{2+\alpha+d}})=n_Q^{-\frac{\alpha+1}{2+\alpha+d}}$, i.e., \emph{past experience} under $P$ is too short to significantly influence regret under $Q$. The lower-bound of Corollary \ref{cor:lower-bound} then is of the form 
	$$ \sup_{(P,Q)\in\mathcal{T}'} \mathbb{E} \left[ \textbf{R}_{n}^Q(\pi)\right] \gtrsim n_Q \cdot n_Q^{-\frac{\alpha+1}{2+\alpha+d}},$$
which confirms that the threshold $n_P = {\tilde O}\left (n_Q^{\frac{2+ \alpha + d + \gamma }{2 + \alpha + d }} \right)$ (on when past experience is too short) is indeed tight. 
	
	\item{\bf Performance on $Q$ depends mostly on covariates $X_t \sim P_X, t \leq n_P$.} 
	This is the subregime where $$ n_Q^{\frac{2+ \alpha + d + \gamma }{2+\alpha+d }}\lesssim n_P \lesssim  n_Q^{\frac{2+\alpha+d+\gamma}{2 + 2\alpha}}.$$
	In other words $\min(n_P^{-\frac{\alpha+1}{2+\alpha+d+\gamma}},n_Q^{-\frac{\alpha+1}{2+\alpha+d}})=n_P^{-\frac{\alpha+1}{2+\alpha+d+ \gamma}}$, i.e., \emph{past experience} under $P$ significantly influences regret under $Q$. The lower-bound of Corollary \ref{cor:lower-bound} then is of the form 
	$$ \sup_{(P,Q)\in\mathcal{T}'} \mathbb{E} \, \left[ {\bf R}_{n}^Q(\pi)\right] \gtrsim n_Q \cdot n_P^{-\frac{\alpha+1}{2+\alpha+d+\gamma}},$$
that is, the upper-bounds of Theorem \ref{thm:adaptive-algo-one-arm} are again tight up to $\log$ factors. 
	\end{itemize} 
	
\end{rmk}

    \label{app:bm}
	\section{Bounded Mass Assumption}
	
	Here, we consider the {\em strong density condition} of Remark~\ref{rem:strongdensityand margin}, differing from our more relaxed condition on the marginal distribution $Q_X$ in Definition~\ref{defn:bcn}. The strong density assumption ensures that $Q_X$ has good coverage of $[0, 1]^d$. It holds, for instance, if $Q_X$ has lower-bounded Lebesgue density on $[0, 1]^d$. We note that, unlike the notion of support dimension used in previous sections (Definition~\ref{defn:bcn}), the ``dimension'' of our support now coincides with the ambient dimension, denoted $D$ in previous sections. This is defined formally as follows:

\begin{assumption}[Mass under $Q$]\label{assumption:dimension}
	\sk{$\exists\ C_d>0$ s.t., $\forall$ $\ell_\infty$ balls $B\subset [0, 1]^d$ of diameter $r\in (0, 1]$:} 
	\begin{align*}
	Q_X(B) \geq C_d \cdot r^d.
    \end{align*}
\end{assumption}

	Under this assumption, we obtain similar regret rates as Theorem~\ref{thm:adaptive-algo-one-arm} with $d$ now being defined as in Assumption~\ref{assumption:dimension}. However, as discussed in Remark~\ref{rmk:strong-density-margin}, the analysis differs heavily from that of Theorem~\ref{thm:adaptive-algo-one-arm} and is closer in spirit to that of \citet{perchet-rigollet}. We first consider the case of a single shift $P\to Q$ and handle multiple shifts in Theorem~\ref{thm:multiple-shifts-dm}, by a similar extension to that made in Appendix~\ref{app:multiple-shifts}.
	
	\begin{thm}\label{thm:adaptive-algo-one-arm-dm}
Let $\pi$ denote the procedure of Algorithm~\ref{algo:one-arm}, ran, with parameter $\delta \in (0, 1)$, up till time $n>n_P \geq 0$, with $n_P$ possibly unknown. Suppose $P_X$ has unknown transfer exponent $\gamma$ w.r.t. $Q_X$, that $Q_X$ satisfies Assumption~\ref{assumption:dimension} with $(C_d,d)$, and that the average reward function $f$ satisfies a margin condition with unknown $\alpha$ under $Q_X$.
Let $n_Q \doteq n-n_P$ denote the (possibly unknown) number of rounds after the drift, i.e., over the phase $X_t\sim Q_X$. We have for some constant $C>0$:
\[
		\mathbb{E}\  \textbf{R}_n^Q(\pi)\leq Cn_Q\left[\min\left(\left(\frac{K\log\left(K/\delta\right)}{n_P}\right)^{\frac{\alpha+1}{2+d+\gamma}},\left(\frac{K\log\left(K/\delta\right)}{n_Q}\right)^{\frac{\alpha+1}{2+d}}\right) + \frac{K\log\left(K/\delta\right)}{n_Q} + n\delta\right]
	\]
\end{thm}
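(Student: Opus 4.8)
The plan is to run the proof of Theorem~\ref{thm:adaptive-algo-one-arm} from Appendix~\ref{app:upper-bound-single} essentially verbatim, replacing only the single step that invoked the box-dimension condition (Definition~\ref{defn:bcn}) — the spatial integration of an unstable variance term — by a direct application of the margin condition, which becomes possible because, under Assumption~\ref{assumption:dimension}, the relevant bin masses are bounded below \emph{deterministically}. Since Algorithm~\ref{algo:one-arm} is unchanged, I would reuse Lemma~\ref{lem:bias-variance}, Lemma~\ref{lem:no-skip}, Lemma~\ref{lem:arm-count-randomization}, Proposition~\ref{prop:better-arm}, Corollary~\ref{cor:best-arm-candidate}, Corollary~\ref{cor:classification-hard}, and Proposition~\ref{prop:rt-minimizer} with no change — none uses any distributional condition on $Q_X$. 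These give, on the good event $G_t$: regression error at round $t$ at most $4\lambda r_t$; the best arm for each $x\in T_{r_t}(X_t)$ retained; a nonzero regret of order at most $\lambda r_t$ only when $\delta_f(X_t)\doteq f^{(1)}(X_t)-f^{(2)}(X_t)\lesssim\lambda r_t$; and $\lambda r_t\lesssim\min_{r\in\mathcal{R}}\phi_t(r)$ with $\phi_t(r)\doteq\sqrt{8K\log(K/\delta)/n_r(X_t)}+\lambda r$.

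The one genuine change is the oracle level. I would set $\tau\doteq t-n_P-1$ and take $r_t^{**}$ to be the smallest level of $\mathcal{R}$ at least $\min\bigl((K\log(K/\delta)/n_P)^{1/(2+d+\gamma)},(K\log(K/\delta)/\tau)^{1/(2+d)}\bigr)$, so that, unlike $r_t^{*}$ of Appendix~\ref{app:upper-bound-single}, $\alpha$ no longer appears in the exponents. Under Assumption~\ref{assumption:dimension}, every bin at level $r$ contains an $\ell_\infty$ ball of radius $r/2$, hence $Q_X(T_r(X_t))\gtrsim r^d$, and combining this with the transfer-exponent condition (Definition~\ref{assumption:transfer-exponent}) gives $P_X(T_r(X_t))\gtrsim r^{d+\gamma}$ — both bounds for \emph{every} $X_t$. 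Therefore $\mathbb{E}[n_{r_t^{**}}(X_t)\mid X_t]=n_P P_X(T_{r_t^{**}}(X_t))+\tau Q_X(T_{r_t^{**}}(X_t))\gtrsim n_P(r_t^{**})^{d+\gamma}+\tau(r_t^{**})^d$, and a short computation with the definition of $r_t^{**}$ shows this is $\gtrsim K\log(K/\delta)/(r_t^{**})^2$ (and in particular $\geq 8\log(1/\delta)$ once $\tau$, or $n_P$ when it is the active term, exceeds a fixed constant multiple of $K\log(K/\delta)$; when $n_P$ is smaller than such a multiple, the $n_P$-term in the minimum is $\geq 1$, hence inactive). A Chernoff bound then gives $n_{r_t^{**}}(X_t)\gtrsim K\log(K/\delta)/(r_t^{**})^2$ with probability $\geq 1-\delta$, so $\phi_t(r_t^{**})\lesssim\lambda r_t^{**}$, and by Proposition~\ref{prop:rt-minimizer}, $\lambda r_t\lesssim\lambda r_t^{**}$ on this event — this replaces Proposition~\ref{prop:optimal-bias-variance} and the event $A_t$ of the original proof. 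The key point is that $r_t^{**}$ is deterministic, so it carries no spatial fluctuation requiring integration, unlike $\sigma_t^{*}$ in the box-dimension case.

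Given this, I would integrate the margin exactly as in Remark~\ref{rmk:strong-density-margin}. Restricting to rounds with $\tau$ above a suitable constant multiple of $K\log(K/\delta)$ (so that $\lambda r_t^{**}\leq\delta_0$; the remaining $O(K\log(K/\delta))$ rounds contribute regret $O(K\log(K/\delta))$), on the good event $F_t$ — the intersection of the $G_s$, $s\le t$, with the above Chernoff event — a nonzero regret at round $t$ forces $0<\delta_f(X_t)\lesssim\lambda r_t\lesssim\lambda r_t^{**}$ with regret $\lesssim\lambda r_t^{**}$, so Definition~\ref{assumption:margin} gives
\[
	\mathbb{E}\bigl[(f^{(1)}(X_t)-f^{\pi_t}(X_t))\mathbbm{1}\{F_t\}\bigr]\lesssim\lambda r_t^{**}\cdot Q_X\bigl(0<\delta_f(X_t)\lesssim\lambda r_t^{**}\bigr)\lesssim\lambda(r_t^{**})^{1+\alpha},
\]
and since $r_t^{**}$ is a minimum of two terms, $(r_t^{**})^{1+\alpha}\lesssim\min\bigl((K\log(K/\delta)/n_P)^{(\alpha+1)/(2+d+\gamma)},(K\log(K/\delta)/\tau)^{(\alpha+1)/(2+d)}\bigr)$. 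Then I would sum over $\tau\in[n_Q]$ using $\sum_\tau\min(a_\tau,b_\tau)\leq\min(\sum_\tau a_\tau,\sum_\tau b_\tau)$ with $a_\tau$ the ($\tau$-independent) $n_P$-term and $b_\tau$ the $\tau$-term, plus the same integral estimate of $\sum_\tau b_\tau$ as in Appendix~\ref{app:upper-bound-single} (yielding $n_Q(K\log(K/\delta)/n_Q)^{(\alpha+1)/(2+d)}$, with an $O(K\log(K/\delta))$ leftover absorbed into the additive term); this makes the total good-event regret $\lesssim n_Q\min\bigl((K\log(K/\delta)/n_P)^{(\alpha+1)/(2+d+\gamma)},(K\log(K/\delta)/n_Q)^{(\alpha+1)/(2+d)}\bigr)+K\log(K/\delta)$. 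For the complementary event I would bound $\mathbb{P}(F_t^c)\lesssim\delta+t\delta$ — there is no $\mathbb{P}(A_t^c)$ term, since under Assumption~\ref{assumption:dimension} the analogue of $A_t$ holds surely on the rounds considered — so bounding the per-round regret by $1$ over these rounds contributes the $O(n_Q n\delta)$ term. Combining gives the claimed bound; the multiple-shift version (Theorem~\ref{thm:multiple-shifts-dm}) then follows by the substitution of Appendix~\ref{app:multiple-shifts}, replacing $\gamma$ by $\overline{\gamma}$ via Jensen's inequality $\sum_j(n_j/n_P)(r_t^{**})^{\gamma_j}\geq(r_t^{**})^{\overline{\gamma}}$.

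There is no real obstacle here: the strong density condition removes the one technically delicate ingredient of the box-dimension proof — the Proposition~\ref{prop:tail-bound}-type tail bound used to integrate an unstable variance term over low-margin regions — and replaces it by a one-line use of the margin condition at the deterministic level $r_t^{**}$. The only points needing mild care are that the $P_X$-side concentration is invoked only when $n_P$ is at least a constant multiple of $K\log(K/\delta)$ (otherwise the $n_P$-term in the minimum is inactive) and that the $\tau$ cutoff ensuring $\lambda r_t^{**}\leq\delta_0$ is harmless, affecting only $O(K\log(K/\delta))$ rounds.
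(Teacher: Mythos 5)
Your proposal is correct and follows essentially the same route as the paper's proof in Appendix D: reuse of the distribution-free lemmas, a deterministic oracle level without $\alpha$ in the exponent whose bin mass is lower-bounded surely by the strong density and transfer-exponent conditions, a Chernoff bound to get $r_t\lesssim r_t^{**}$ with high probability (the paper packages this as Lemma~\ref{lem:acb-bound}, arguing directly from the Line-7 criterion rather than through Proposition~\ref{prop:rt-minimizer}, an immaterial difference), then a one-line margin integration at the deterministic level and the same summation and bad-event accounting.
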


\begin{cor}\label{cor:one-arm-dm}
	Under the setup of Theorem~\ref{thm:adaptive-algo-one-arm}, letting $\delta=O(1/n^2)$ yields: \[
		\mathbb{E}\  \textbf{R}_{n}^Q(\pi)\leq Cn_Q  \left[\min\left(\left(\frac{K\log(Kn)}{n_P}\right)^{\frac{\alpha+1}{2+d+\gamma}},\left(\frac{K\log(Kn)}{n_Q}\right)^{\frac{\alpha+1}{2+d}}\right) + \frac{K \log(Kn)}{n_Q}\right].
	\]
\end{cor}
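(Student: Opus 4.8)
The plan is to obtain the corollary as a direct specialization of the bound in Theorem~\ref{thm:adaptive-algo-one-arm-dm} (the bounded-mass result immediately preceding it, whose exponents $\frac{\alpha+1}{2+d+\gamma}$ and $\frac{\alpha+1}{2+d}$ match the ones displayed here) under the choice $\delta=O(1/n^2)$. No new probabilistic argument is needed: the content is entirely a bookkeeping of how the confidence parameter $\delta$ enters the three summands $\min(\cdots)$, $K\log(K/\delta)/n_Q$, and $n\delta$ inside the bracket. Concretely, I would fix $\delta\doteq c_0/n^2$ for a constant $c_0>0$ (small enough that $\delta\in(0,1)$, which holds for $n\geq\sqrt{c_0}$; the finitely many smaller $n$ give regret $O(1)$ that is folded into the final constant) and track its effect term by term.

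First I would control the logarithmic factor. With $\delta=c_0/n^2$ we have $\log(K/\delta)=\log K+2\log n-\log c_0$, and since $\log(Kn)=\log K+\log n\geq\log 2>0$ for $n\geq 2$, this is at most $C''\log(Kn)$ for a constant $C''$ depending only on $c_0$. Substituting into the first two summands, and using $(C''x)^p=(C'')^p x^p$ where $(C'')^p$ is a finite constant for each of the two fixed exponents $p\in\{\frac{\alpha+1}{2+d+\gamma},\frac{\alpha+1}{2+d}\}$, the min-term is at most a constant multiple of $\min\bigl((K\log(Kn)/n_P)^{\frac{\alpha+1}{2+d+\gamma}},(K\log(Kn)/n_Q)^{\frac{\alpha+1}{2+d}}\bigr)$, while the variance term $K\log(K/\delta)/n_Q$ is at most $C''\cdot K\log(Kn)/n_Q$. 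Both then match the corresponding terms in the target inequality up to an adjustable leading constant.

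The only remaining point is that the residual term $n\delta$ must be absorbed, and this is the one step deserving a line of care (the mild subtlety being that $n\delta$ should be compared against the variance term $K\log(Kn)/n_Q$, not against the possibly much smaller min-term). With $\delta=c_0/n^2$ we have $n\delta=c_0/n$, whereas $K\log(Kn)/n_Q\geq\log n/n$ by $K\geq 1$, $Kn\geq n$, and $n_Q\leq n$. Hence $n\delta\big/\bigl(K\log(Kn)/n_Q\bigr)\leq c_0/\log n\leq c_0$ for $n\geq 3$, so $n\delta\leq c_0\cdot K\log(Kn)/n_Q$ and is dominated by the variance term. Collecting $C''$, the constants $(C'')^p$, and the factor $c_0$ into a single $C$ (and enlarging it to cover the $O(1)$ regret for the excluded small $n$) then yields the stated bound, completing the proof.
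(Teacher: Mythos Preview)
Your proposal is correct and matches the paper's approach: the paper states this corollary without proof, treating it (like its analogue Corollary~\ref{cor:one-arm-regret}) as immediate from the preceding theorem upon substituting $\delta=O(1/n^2)$. Your term-by-term bookkeeping of how $\log(K/\delta)\lesssim\log(Kn)$ and how $n\delta=O(1/n)$ is absorbed by $K\log(Kn)/n_Q$ is exactly the intended mechanism, just spelled out in more detail than the paper bothers with.
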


\subsection{Outline of the Proof of Theorem~\ref{thm:adaptive-algo-one-arm-dm}}

The first part of the proof is nearly identical to the proof of Theorem~\ref{thm:adaptive-algo-one-arm}. Lemmas~\ref{lem:bias-variance}, \ref{lem:no-skip}, and \ref{lem:arm-count-randomization} remain true without modification. Thus, we have, at round $t$ with chosen bin $B \doteq T_{r_t}(X_t)$, that with probability at least $1-\delta$ with respect to the distribution ${\bf Y}_{t-1}|{\bf X}_t$:
\[
	\forall x\in B,i \in {\cal I}_B: |\hat{f}_t^i(B) - f^i(x)| \leq 4\lambda r_t.
\]
Additionally, Proposition~\ref{prop:better-arm} and Corollaries~\ref{cor:best-arm-candidate} and \ref{cor:classification-hard} still hold so that the regret and margin at round $t$ are both bounded by $16\lambda r_t$. Thus, following the intuition of Remark~\ref{rmk:strong-density-margin}, it suffices to show $r_t$ is of optimal regression order.

    \paragraph{Showing $r_t$ is of Optimal Regression Order.}
	\label{subsec:optimal-order}
	
	As in Section~\ref{sec:regret-analysis}, for each round $t$, define the event $G_t$ on which the high-probability bound on the regression error holds or
	\[
		G_t = \left\{\forall i \in \mathcal{I}_B, x \in B:|\hat{f}_t^i(B) - f^i(x)|\leq 4\lambda r_t, B=T_{r_t}(X_t)\right\}.
    \]
    Recall that this is the ``good'' event on which we can identify unfavorable arms. Recall also, from Section~\ref{sec:regret-analysis}, that we define $\tau \doteq t - 1 - n_P$ to simplify notation.
	
	\begin{lemma}\label{lem:acb-bound}
		Fix a round $t$ with observed covariate $X_{t}$. Then, for some $c_{14}>0$, with probability at least $1-\delta$ w.r.t. the distribution of $\textbf{X}_{t-1}|X_{t}$, we have
		\begin{align*}
			 r_{t} &\leq c_{14} \min\left(
			 	\left(\frac{K\log(K/\delta)}{n_P}\right)^{\frac{1}{2+d+\gamma}}, \left(\frac{K\log(K/\delta)}{\tau}\right)^{\frac{1}{2+d}}\right).
		\end{align*}
	\end{lemma}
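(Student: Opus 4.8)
The plan is to derive the bound almost exactly as in the proof of Theorem~\ref{thm:adaptive-algo-one-arm}, via Proposition~\ref{prop:rt-minimizer} plus a Chernoff lower bound on the covariate count $n_{r_t^*}(X_t)$ in the oracle bin, the key simplification being that under Assumption~\ref{assumption:dimension} (together with the transfer exponent) \emph{every} bin at a given level carries enough mass, so the delicate integration over low-mass bins in the box-dimension analysis is no longer needed and the event ``$A_t$'' there becomes automatic. Concretely, set $u \doteq (K\log(K/\delta)/n_P)^{1/(2+d+\gamma)}$, $v \doteq (K\log(K/\delta)/\tau)^{1/(2+d)}$, and let $r_t^*$ be the oracle level, i.e.\ the smallest element of $\mathcal{R}$ at least $\min(u,v)$. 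If $\min(u,v) > 1$ then $r_t \le 1 \le c_{14}\min(u,v)$ holds trivially for $c_{14}\ge 1$, so I may assume $\min(u,v)\le 1$, whence $\min(u,v) \le r_t^* < 2\min(u,v)$. By Proposition~\ref{prop:rt-minimizer}, $r_t \le 2\phi_t(r_t^*)$, so it suffices to show $\phi_t(r_t^*) \lesssim r_t^*$.

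First I would lower bound the conditional mean $\mathbb{E}[n_{r_t^*}(X_t)\mid X_t] = n_P P_X(T_{r_t^*}(X_t)) + \tau Q_X(T_{r_t^*}(X_t))$. Assumption~\ref{assumption:dimension} gives $Q_X(T_{r_t^*}(X_t)) \ge C_d (r_t^*)^d$, and feeding this into the transfer-exponent inequality of Definition~\ref{assumption:transfer-exponent} yields $P_X(T_{r_t^*}(X_t)) \ge C_\gamma C_d (r_t^*)^{d+\gamma}$ — so, crucially, $P_X$ too now has a \emph{uniform} lower bound on bin masses. Hence $\mathbb{E}[n_{r_t^*}(X_t)\mid X_t] \ge \max\big(C_d\,\tau (r_t^*)^d,\ C_\gamma C_d\, n_P (r_t^*)^{d+\gamma}\big)$. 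Observing that $\min(u,v) < 1$ forces the minimizing horizon to exceed $K\log(K/\delta)$ (if $u\le v$ then $u<1$ gives $n_P > K\log(K/\delta)$; if $v<u$ then $v<1$ gives $\tau > K\log(K/\delta)$), this mean is $\gtrsim K\log(K/\delta) \gtrsim \log(1/\delta)$, so a Chernoff bound exactly as in Proposition~\ref{prop:optimal-bias-variance} gives $n_{r_t^*}(X_t) \ge \tfrac12\,\mathbb{E}[n_{r_t^*}(X_t)\mid X_t]$ with probability at least $1-\delta$ w.r.t.\ $\textbf{X}_{t-1}\mid X_t$.

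Then I would split on which of $u,v$ is smaller. If $u \le v$, then $r_t^* \ge u$, i.e.\ $(r_t^*)^{2+d+\gamma} \ge K\log(K/\delta)/n_P$, so $n_P (r_t^*)^{d+\gamma} \ge K\log(K/\delta)/(r_t^*)^2$, and the variance part of $\phi_t(r_t^*)$ obeys $\sqrt{8K\log(K/\delta)/n_{r_t^*}(X_t)} \lesssim \sqrt{K\log(K/\delta)/(n_P (r_t^*)^{d+\gamma})} \le r_t^*$, whence $\phi_t(r_t^*) \lesssim r_t^* < 2u = 2\min(u,v)$; symmetrically, if $v < u$, then $r_t^* \ge v$, i.e.\ $(r_t^*)^{2+d} \ge K\log(K/\delta)/\tau$, so $\tau (r_t^*)^d \ge K\log(K/\delta)/(r_t^*)^2$ and the same computation gives $\phi_t(r_t^*) \lesssim r_t^* < 2\min(u,v)$. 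Combining with $r_t \le 2\phi_t(r_t^*)$ yields $r_t \le c_{14}\min(u,v)$, collecting $C_d, C_\gamma$ and absolute constants into $c_{14}$. Honestly, this direction is easier than the box-dimension analysis rather than harder; the only points needing care are (i) checking that $\mathbb{E}[n_{r_t^*}(X_t)]$ is large enough to run Chernoff — which is exactly where the observation ``$r_t^* < 1$ forces $n_P$ or $\tau$ to exceed $K\log(K/\delta)$'' is used — and (ii) routine bookkeeping of the constants into $c_{14}$, so I expect no genuine obstacle beyond these.
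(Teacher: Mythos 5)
Your proposal is correct and follows essentially the same route as the paper's proof: lower-bound the expected bin count at the oracle level via Assumption~\ref{assumption:dimension} combined with the transfer exponent, apply a Chernoff bound, and conclude that $r_t$ is dominated by the oracle level through the Line-7 selection criterion (the paper verifies that criterion directly to get $r_t \leq r_t^*$, which is what Proposition~\ref{prop:rt-minimizer}'s proof does anyway). The one point you wave at as ``bookkeeping'' is real but benign: the bound $\mathbb{E}[n_{r_t^*}(X_t)\mid X_t]\gtrsim K\log(K/\delta)$ carries hidden factors of $C_d, C_\gamma$ that may drop it below the $8\log(1/\delta)$ threshold needed for the Chernoff step, and the paper resolves exactly this by inflating the oracle level by a $C_d$-dependent constant (its $c_{15}^{-1/(2+d)}$ with $c_{15}\leq C_d^{(2+d)/d}$), which only changes $c_{14}$.
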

	
	\begin{proof}
	It suffices to show 
		\begin{align*}
			 r_{t} \leq c_{14} \begin{cases}
			 	\left(\frac{K\log(K/\delta)}{n_P}\right)^{\frac{1}{2+d+\gamma}} & \text{ when }n_P > K\log(K/\delta)\\ 						\left(\frac{K\log(K/\delta)}{\tau}\right)^{\frac{1}{2+d}} & \text{ when }\tau > K\log(K/\delta)\end{cases},
		\end{align*}
		since this implies the desired result for any $n_P,\tau$. We first show that when $\tau > K\log(K/\delta)$: 
		\[
			r_{t} \leq c_{14}\left(\frac{K\log(K/\delta)}{\tau}\right)^{\frac{1}{2+d}}.
		\]
		The other bound on $r_t$ involving $n_P$ will follow similarly, with the appropriate modifications. First, we simplify notation for the sake of this proof: at round $t$, it will be understood that the observed covariate is represented by $X \doteq X_{t}$. We also let $n(r) \doteq n_r(X)$ be the covariate count for {\em a fixed arbitrary} level $r$.
		
		First, by Assumption~\ref{assumption:dimension} and the definition of transfer exponent (Definition~\ref{assumption:transfer-exponent}), we have for some $c_{15}>0$:
		\begin{equation}\label{eqn7}
			\mathbb{E}[n(r)]=n_P P_X(T_r(X)) + \tau Q_X(T_r(X)) \geq c_{15} ( n_P r^{d + \gamma} + \tau r^d).
		\end{equation}	
		In fact, without loss of generality, we can assume
		\begin{equation}\label{eqn-c3}
			c_{15} \leq C_d^{\frac{2+d}{d}}.
		\end{equation}
		Thus, by a Chernoff bound similar to that used in the proof of Proposition~\ref{prop:optimal-bias-variance}, for any fixed level $r$ satisfying $n(r) \geq K\log(1/\delta)$, we have with probability at least $1-\delta$ that
		\begin{equation}\label{eqn5}
			\sqrt{\frac{K\log(K/\delta)}{n(r)}} \leq \sqrt{\frac{8K\log(K/\delta)}{c_{15} \tau r^d }}.
		\end{equation}
		Now, let $r_t^{*} \in \mathcal{R}$ be the smallest level greater than or equal to
		\[
			 c_{15}^{-\frac{1}{2+d}}\left(\frac{8K\log(K/\delta)}{\tau}\right)^{\frac{1}{2+d}}.
		\]
		To put things simply, $r_t^* \propto \left(\frac{K\log(K/\delta)}{\tau}\right)^{\frac{1}{2+d}}$.		Then, it suffices to show $r_{t} \leq r_t^{*}$. For the next part of the proof, we define $n_Q(r)$ as the covariate count coming exclusively from $Q_X$ for any given level $r$:
		\[
			n_Q(r) \doteq \sum_{s\in [\tau]} \mathbbm{1}\{X_{n_P+s} \in T_r(X)\}.
		\]		
		Then, we claim $r_t^{*}$ satisfies $n_Q(r_t^*) \geq \log(1/\delta)$ with probability at least $1-\delta$, so that \eqref{eqn5} holds with probability at least $1-2\delta$ for $r = r_t^{*}$, by the aforementioned Chernoff bound. Since $\tau > 8K\log(K/\delta)$ by hypothesis, we have by \eqref{eqn-c3} and Assumption~\ref{assumption:dimension} that
		\[
			\mathbb{E}[n_Q(r)] \geq C_d \cdot \tau \cdot (r_t^{*})^d \geq C_d \cdot \tau \cdot c_{15}^{-\frac{d}{2+d}} \left(\frac{8K\log(K/\delta)}{\tau}\right)^{\frac{d}{2+d}} \geq 8\log(1/\delta).
		\]
		Thus, by a Chernoff bound, we have
		\[
			Q_X(n_Q(r_t^{*})  < \log(1/\delta)) \leq Q_X\left( n_Q(r_t^{*}) < \frac{1}{2}\mathbb{E}[n_Q(r)] \right) \leq \exp\left(-\frac{1}{8}\mathbb{E}[n_Q(r)]\right) \leq \delta.
		\]
		Then, we have that by virtue of how $r_t^*$ is defined, with probability at least $1-2\delta$:
		\[
			 r_t^{*} \geq \sqrt{\frac{8K \log(K/\delta)}{c_{15}\tau (r_t^{*})^d}} \geq \sqrt{\frac{8K\log(K/\delta)}{n(r_t^{*})}}.
		\]
		This gives us that $r_{t} \leq r_t^{*}$ by the minimization criteria for selecting $r_{t}$ on Line 7 of Algorithm~\ref{algo:one-arm}, as desired.
		
		The other inequality
		\[
			r_{t} \leq c_{14} \left(\frac{K\log(K/\delta)}{n_P}\right)^{\frac{1}{2+d+\gamma}},
		\] 
		can be shown in an identical fashion to the case above with the appropriate modifications: specifically, $\tau$ is replaced with $n_P$, \eqref{eqn7} is further lower bounded by $c_{15} n_P r^{d+\gamma}$, and $n_Q(r)$ is replaced with $n_P(r)$ which is defined as the bin covariate counts from distribution $P$:
		\[
			n_P(r) \doteq \sum_{s\in [n_P]} \mathbbm{1}\{ X_s \in T_r(X)\}.
		\]
	\end{proof}
	
	\paragraph{Cumulative Regret Bound.}
	\label{subsec:sum-regret}

	Next, we put the previous conclusions together to bound the cumulative regret by bounding the regret accrued at each round $t$ and then summing over $t\in\{n_P+1,\ldots,n\}$. Similarly to the proof of Theorem~\ref{thm:adaptive-algo-one-arm}, for rounds $s<t$, define the event $E_s$ as the event on which the bound of Lemma~\ref{lem:bias-variance} holds or $E_s \doteq G_s$. For round $t$, define the event $E_t$ as the event on which the bounds of Lemma~\ref{lem:bias-variance} and Lemma~\ref{lem:acb-bound} hold or:
	\[
		E_t \doteq G_t\cap \left\{r_{t} \leq c_{14} \min\left(
			 	\left(\frac{K\log(K/\delta)}{n_P}\right)^{\frac{1}{2+d+\gamma}}, \left(\frac{K\log(K/\delta)}{\tau}\right)^{\frac{1}{2+d}}\right)\right\}.
	\]
	To sum the regrets across time $t$, the argument will involve conditioning on the event $\cap_{s=1}^t E_s$, on which (a) Algorithm~\ref{algo:one-arm} correctly eliminates arms and (b) $r_t$ is of the optimal order. 
	
	To this end, let $F_t \doteq \cap_{s=1}^t E_s$. Also, to simplify notation, let $U_t$ denote
\[
	U_t\doteq c_{14}
		\min\left(\left(\frac{K\log(K/\delta)}{n_P}\right)^{\frac{1}{2+d+\gamma}},\left(\frac{K\log(K/\delta)}{\tau}\right)^{\frac{1}{2+d}}\right).
\]
Recall $U_t$ is the earlier high-probability upper bound on $r_t$ in the definition of $E_t$. If $n_Q \leq K\log(K/\delta)$, we are already done since the regret is then bounded by $K\log(K/\delta)$, which is the right order. Assume for the rest of the proof that $K\log(K/\delta) < n_Q$.

To later use the margin condition (Definition~\ref{assumption:margin}), we require that $U_t \lesssim \delta_0$ (where $\delta_0$ is as in Definition~\ref{assumption:margin}). To ensure this, we need only constrain our analysis to rounds for which $\tau \gtrsim K\log(K/\delta)$, which is not an issue since the regret of any $O(K\log(K/\delta))$ rounds is of the desired order. More precisely, let $\tau_0$ be the largest positive integer satisfying
\[
	c_{16}\left(\frac{K\log(K/\delta)}{\tau_0}\right)^{\frac{1}{2+d}} > \delta_0,
\]
where $c_{16}>0$ will be determined later. The regret for the first $\tau_0$ rounds among rounds $\{n_P+1,\ldots,n\}$ is $O(K\log(K/\delta))$ which is always of the right order. For the rest of the proof, we now assume that the round $t$ is such that $\tau > \tau_0$ and $c_{16} U_t \leq \delta_0$.

Next, let the event $H_t$ be
\begin{align*}
	H_t\doteq\{|\hat{f}_t^{(1)}(B)-\hat{f}_t^{(2)}(B)|\leq 8\lambda r_t, B = T_{r_t}(X_t)\}.
\end{align*}
Conditioned on ${\bf X}_t$, $H_t$ is the event where one arm remains in contention at round $t$ according to Line 11 of Algorithm~\ref{algo:one-arm}. For the remainder of the proof, let $B$ be the bin that was selected at round $t$ given an understood value of ${\bf X}_t$. To further simplify notation, let ${\bf X} \doteq {\bf X}_{t-1}$ and ${\bf Y} \doteq {\bf Y}_{t-1}$ since we are fixing $t$ momentarily here.

Consider the expected regret of pulling arm $\pi_t$ at round $t$, and decomposing it by conditioning the events $F_t$ and $H_t$:
\begin{align*}
	\mathbb{E}\, f^{(1)}(X_t)-f^{\pi_t}(X_t) &= \mathbb{E}_{X_t} \left[ \mathbb{E}_{\textbf{X},\textbf{Y}|X_t}  (f^{(1)}(X_t)-f^{\pi_t}(X_t))\right.\left.\cdot(\mathbbm{1} \{F_t \} +\mathbbm{1}\{ F_t^c \}) \cdot (\mathbbm{1}\{ H_t \}+\mathbbm{1} \{ H_t^c \} )\right].
\end{align*}
The above gives us three different cases depending on whether event $F_t \cap H_t$, or $F_t \cap H_t^c$, or $F_t^c$ holds.
\begin{enumerate}[label=\arabic*)]
	\item Suppose event $F_t\cap H_t$ holds. Suppose also that there is a suboptimal arm $i\in\mathcal{I}_B$ for which $f^i(X_t) < f^{(1)}(X_t)$, lest the regret at time $t$ be zero. Then, by Corollary~\ref{cor:classification-hard}, we have for $c_{16} = c_{14}\cdot 16\lambda$ and any $j\in\mathcal{I}_B$:
	\[
		|f^{(1)}(X_t)-f^{j}(X_t)|\leq 16\lambda r_t \leq c_{16}U_t.
	\]
	Furthermore:
	\[
		0<|f^{(1)}(X_t)-f^{(2)}(X_t)|\leq 16\lambda r_t \leq c_{16} U_t.
	\]
	This above inequality happens with probability at most $C_{\alpha}(c_{16} U_t)^{\alpha}$, under $X_t\sim Q_X$, by the margin condition (Definition~\ref{assumption:margin}). Thus, we have
	\[
		\mathbb{E}_{X_t}\  \mathbb{E} _{\textbf{X},\textbf{Y}|X_t}\ (f^{(1)}(X_t)-f^{j}(X_t)) \cdot \mathbbm{1} \{ F_t\cap H_t \}\leq C_{\alpha}(c_{16})^{\alpha+1} U_t^{\alpha+1}.
	\]
	\item Next, on $F_t\cap H_t^c$, the pointwise regret is zero by Corollary~\ref{cor:best-arm-candidate} since $\mathcal{I}_{B}$ must contain the optimal arm at $X_t$ and no other arms.
	\item On $F_t^c$, the pointwise regret is trivially bounded above by $1$. By Lemma~\ref{lem:acb-bound}, this happens with probability at most 
	\begin{align*}
		\mathbb{P} (F_t^c)&\leq \mathbb{P}\left(\cup_{s=1}^t E_s^c\right)
		\leq 
			\sum_{s=1}^t 2\delta.
	\end{align*}
	Thus, $\mathbb{E}_{X_t}\, \mathbb{E}_{\textbf{X},\textbf{Y}|X_t}\ (f^{(1)}(X_t)-f^{j}(X_t)) \cdot \mathbbm{1}\{F_t^c \} \leq 2t\delta$.
\end{enumerate}

Next, we put the three cases above together. We have that, for some $c_{17}>0$, the cumulative regret over the rounds $\{\tau_0+1,\ldots,n\}$ is then at most
\begin{align*}\label{eqn:bound}
		c_{17} \left[  \sum_{\tau=\tau_0+1}^{n_Q} \min\left(\left(\frac{K\log\left(K/\delta\right)}{n_P}\right)^{\frac{\alpha+1}{2+d+\gamma}},\left(\frac{K\log\left(K/\delta\right)}{\tau}\right)^{\frac{\alpha+1}{2+d}}\right) +(n_P+\tau)\delta \right].
	\end{align*}
	Taking the sum over the last term on the R.H.S. above, we have $\sum_{\tau=1}^{n_Q} (n_P+\tau)\delta = O(n_Qn\delta)$. For the remaining term in the sum, it suffices to bound
	\[
		\sum_{\tau = \tau_0+1}^{n_Q} \left(\frac{K\log(K/\delta)}{\tau}\right)^{\frac{\alpha+1}{2+d}}.
	\]
	As in the proof of Theorem~\ref{thm:adaptive-algo-one-arm}, by an integral approximation, we have since $\tau_0\propto K\log(K/\delta)$:
	\begin{align*}
		\sum_{\tau=\tau_0+1}^{n_Q} \left(\frac{K\log(K/\delta)}{t}\right)^{\frac{\alpha+1}{2+d}} &\leq
		 c_{18}\int_{K\log(K/\delta)}^{n_Q} \left(\frac{K\log(K/\delta)}{z}\right)^{\frac{\alpha+1}{2+d}}\,dz
	\end{align*}
	If $\alpha \leq d + 1$, the above integral, for some $c_{18}>0$, is bounded by
	\[
		c_{19} n_Q\left(\frac{K\log(K/\delta)}{n_Q}\right)^{\frac{\alpha+1}{2+d}}.
	\]
	Otherwise, said integral is bounded by $O(K\log(K/\delta))$. This concludes the proof of Theorem~\ref{thm:adaptive-algo-one-arm-dm}. $\blacksquare$
	
	\subsection{Multiple Shifts under Bounded Mass}
	
	Here, we derive an analogue to Theorem~\ref{thm:multiple-shifts} under the bounded mass assumption and the same setup as Appendix~\ref{app:multiple-shifts}. The proof is nearly identical as that of Theorem~\ref{thm:multiple-shifts}, relying on the same bound on $r^{\sum_{j=1}^N \frac{n_j}{n_P}\cdot \gamma_j}$.
	
	\begin{thm}\label{thm:multiple-shifts-dm}
Let $\pi$ denote the procedure of Algorithm~\ref{algo:one-arm}, ran, with parameter $\delta \in (0, 1)$, up till time $n>n_P \geq 0$, with $n_P, N, \{n_j\}_{j=1}^N$ all possibly unknown (see definitions in Appendix~\ref{app:multiple-shifts}). Suppose the marginal of the covariate $X$ under each $P_j$ has unknown transfer exponent $\gamma_j$ w.r.t. $Q_X$, that $Q_X$ satisfies Assumption~\ref{assumption:dimension} with $(C_d,d)$, and that the average reward function $f$ satisfies a margin condition with unknown $\alpha$ under $Q_X$.
Let $n_Q \doteq n-n_P$ denote the (possibly unknown) number of rounds after the drifts, i.e., over the phase $X_t\sim Q_X$. Let $\overline{\gamma}=\sum_{j=1}^N \gamma_j\cdot \frac{n_j}{n_P}$. We have for some constant $C>0$:
\[
		\mathbb{E}\  \textbf{R}_n^Q(\pi)\leq Cn_Q\left[\min\left(\left(\frac{K\log\left(K/\delta\right)}{n_P}\right)^{\frac{\alpha+1}{2+\alpha+d+\overline{\gamma}}},\left(\frac{K\log\left(K/\delta\right)}{n_Q}\right)^{\frac{\alpha+1}{2+\alpha+d}}\right) + \frac{K\log\left(K/\delta\right)}{n_Q} + n\delta\right]
	\]
\end{thm}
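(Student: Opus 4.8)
The plan is to follow the proof of Theorem~\ref{thm:adaptive-algo-one-arm-dm} essentially line for line, incorporating the single modification already used to pass from Theorem~\ref{thm:adaptive-algo-one-arm} to Theorem~\ref{thm:multiple-shifts}: handling the $N$ previous distributions $P_1,\dots,P_N$ simultaneously through Jensen's inequality. First I would note that all the purely algorithmic and concentration ingredients are unaffected, since they never reference the covariate marginals: Lemma~\ref{lem:bias-variance}, Lemma~\ref{lem:no-skip}, and Lemma~\ref{lem:arm-count-randomization} hold verbatim, hence so does the high-probability regression bound $|\hat f_t^i(B)-f^i(x)|\leq 4\lambda r_t$ on the selected bin $B=T_{r_t}(X_t)$, and therefore so do Proposition~\ref{prop:better-arm} and Corollaries~\ref{cor:best-arm-candidate} and \ref{cor:classification-hard}. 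In particular, on the good event both the instantaneous regret and the local margin at round $t$ are at most $16\lambda r_t$, so (as in Remark~\ref{rmk:strong-density-margin}) it again suffices to show $r_t$ is of the claimed optimal regression order and then integrate over the margin distribution of $Q_X$.

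The crux is re-deriving the analogue of Lemma~\ref{lem:acb-bound} with $\gamma$ replaced by $\overline{\gamma}$. Fix a level $r\in\mathcal{R}$ and write $X\doteq X_t$; by construction $\mathbb{E}[n_r(X)]=\tau\,Q_X(T_r(X))+\sum_{j=1}^N n_j\,P_j^X(T_r(X))$. Applying Assumption~\ref{assumption:dimension} to $Q_X$ and Definition~\ref{assumption:transfer-exponent} to each $P_j^X$ (with $C_\gamma\doteq\min_j C_{\gamma_j}$, a fixed constant since the parameters are fixed), this is $\gtrsim \tau r^d + \sum_{j=1}^N n_j r^{d+\gamma_j} = r^d\bigl(\tau + n_P\sum_{j=1}^N \tfrac{n_j}{n_P} r^{\gamma_j}\bigr)$. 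Since $x\mapsto r^x$ is convex for $r>0$, Jensen's inequality gives $\sum_{j=1}^N \tfrac{n_j}{n_P} r^{\gamma_j}\geq r^{\overline{\gamma}}$, so $\mathbb{E}[n_r(X)]\gtrsim \tau r^d + n_P r^{d+\overline{\gamma}}$. This is exactly the lower bound used in the proof of Lemma~\ref{lem:acb-bound} with $\gamma\to\overline{\gamma}$, so the same Chernoff argument---defining the oracle level $r_t^*$ as the smallest level at least $\min\bigl((K\log(K/\delta)/n_P)^{1/(2+d+\overline{\gamma})},(K\log(K/\delta)/\tau)^{1/(2+d)}\bigr)$ and checking that one of the two mass terms exceeds the $\log(1/\delta)$ concentration threshold at that level---yields, w.h.p.\ over ${\bf X}_{t-1}\mid X_t$, that $r_t\lesssim \min\bigl((K\log(K/\delta)/n_P)^{1/(2+d+\overline{\gamma})},(K\log(K/\delta)/\tau)^{1/(2+d)}\bigr)$.

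With this bound on $r_t$ in hand, the cumulative-regret argument of Theorem~\ref{thm:adaptive-algo-one-arm-dm} applies unchanged: restrict to rounds with $\tau>\tau_0$ so that the relevant radius $U_t$ obeys $c\,U_t\leq\delta_0$; split the per-round expected regret across the events $F_t\cap H_t$, $F_t\cap H_t^c$, and $F_t^c$; on $F_t\cap H_t$ use the margin condition on $Q_X$ (Definition~\ref{assumption:margin}) to bound the probability of a non-zero-regret configuration by $C_\alpha(c\,U_t)^\alpha$, giving a contribution $\lesssim U_t^{1+\alpha}$; on $F_t\cap H_t^c$ the regret is zero by Corollary~\ref{cor:best-arm-candidate}; on $F_t^c$ bound the regret by $1$ and its probability by $O(t\delta)$; then sum over $\tau\in[n_Q]$ by an integral approximation, absorbing the first $O(K\log(K/\delta))$ rounds. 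This reproduces the stated bound with $\overline{\gamma}$ in place of $\gamma$.

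The only genuinely new content is the one-line Jensen step, so I expect no essential obstacle. The main point requiring care is the concentration bookkeeping in the re-derivation of Lemma~\ref{lem:acb-bound}: one must verify that the oracle level $r_t^*$ can be chosen so that at least one of the two ``mass'' contributions (the $Q_X$ term, or the averaged-over-$j$ past term) sits above the $\log(1/\delta)$ threshold needed for the Chernoff bound, despite the past term now being a weighted average over $N$ distributions with distinct exponents $\gamma_j$ and constants $C_{\gamma_j}$. Since $N$, $\{n_j\}$, and $\{\gamma_j\}$ are fixed (so $\min_j C_{\gamma_j}$ is a legitimate constant), this closes exactly as in the single-shift case, and the remainder is routine re-verification.
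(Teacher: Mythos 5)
Your proposal is correct and follows essentially the same route as the paper's own proof, which likewise reduces everything to the single Jensen's-inequality step $\sum_j \frac{n_j}{n_P} r^{\gamma_j} \geq r^{\overline{\gamma}}$ to lower-bound $\mathbb{E}[n_r(X_t)]\gtrsim \tau r^d + n_P r^{d+\overline{\gamma}}$, re-derives Lemma~\ref{lem:acb-bound} with $\gamma$ replaced by $\overline{\gamma}$, and leaves the rest of the argument of Theorem~\ref{thm:adaptive-algo-one-arm-dm} untouched. If anything, your write-up is more careful than the paper's brief proof idea (e.g., the explicit constant $\min_j C_{\gamma_j}$ and the check that the oracle level clears the Chernoff threshold).
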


\begin{proof}[Proof Idea]
	Consider a round $t > n_P$ and any level $r\in\mathcal{R}$ such that $n_r(X_t)>\log(1/\delta)$. Similarly to the proof of Lemma~\ref{lem:acb-bound}, by a Chernoff bound and the definition of the transfer exponent (Definition~\ref{assumption:transfer-exponent}), we have that the covariate count $n_r(X_t)$, with probability at least $1-\delta$ satisfies \[
		n_r(X_t) \gtrsim \left[ (t-1-n_P)r^d + \sum_{j=1}^N n_j\cdot r^{d+\gamma_j}\right].
	\]
	Next, by Jensen's inequality we have \[
		\sum_{j=1}^N \frac{n_j}{n_P}\cdot r^{d+\gamma_j} \geq r^{d+\overline{\gamma}}.
	\]
	Thus, combining the above two inequalities, $n_r(X_t) \gtrsim n_P \cdot r^{d+\overline{\gamma}}$. Using this bound, we can derive a generalization of Lemma~\ref{lem:acb-bound} where $\gamma$ is replaced by $\overline{\gamma}$. All other parts of the proof of Theorem~\ref{thm:adaptive-algo-one-arm-dm} remain the same in showing Theorem~\ref{thm:multiple-shifts-dm}.	
\end{proof}

    \subsection{Lower Bound under Bounded Mass Assumption}
    
    Finally, we present an analogue to Corollary~\ref{cor:lower-bound} under the bounded mass assumption, thus showing the regret bound of Theorem~\ref{thm:adaptive-algo-one-arm-dm} is minimax optimal in a certain regime. The proof is identical to that of Corollary~\ref{cor:lower-bound}, relying on the same online-to-batch conversion procedure. The same theorem cited in Appendix~\ref{app:lower-bound} \citep[Theorem 1 of ][]{kpotufe-martinet} provides us the classification minimax lower-bound required for the online-to-batch conversion.
    
    \begin{cor}[Matching Lower Bounds over Given Regimes]\label{cor:lower-bound-dm}
Let ${\cal T}'$ be the class of all tuples $(P,Q)$ of distributions satisfying Assumptions~\ref{assumption:lipschitz} and \ref{assumption:dimension}, and Definitions~\ref{assumption:margin} and \ref{assumption:transfer-exponent}, with some fixed parameters $(\lambda,C_d,d,C_{\alpha},\alpha,\delta_0,C_{\gamma},\gamma)$. Suppose that $n_P,n_Q$ satisfy:
\begin{equation}\label{eqn:lower-bound-condition-dm}
	6\sqrt{\frac{2\log(n_Q\sqrt{3})}{n_Q}} + \frac{1}{n_Q} < \frac{c}{2} \left(n_P^{\frac{2+d}{2+d+\gamma}} + n_Q\right)^{-\frac{\alpha + 1}{2+d}}.
	\end{equation}
	Then, for any fixed such $n_P,n_Q$ and any contextual bandits policy $\pi$, we have:
\[
    \sup_{(P,Q)\in\mathcal{T}'} \mathbb{E}_{\textbf{X}_n,\textbf{Y}_n} \, \left[ \textbf{R}_n^Q (\pi)\right] \geq \frac{c}{4} n_Q\left(n_P^{\frac{2+d}{2+d+\gamma}} + n_Q\right)^{-\frac{\alpha + 1}{2+d}}.
\]
\end{cor}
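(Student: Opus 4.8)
The plan is to reuse the online-to-batch pipeline from the proof of Corollary~\ref{cor:lower-bound} essentially verbatim, swapping in the classification minimax lower bound appropriate to the strong-density regime. First I would note that the conversion machinery of Appendix~\ref{app:lower-bound}---the mistake-to-risk bound of Corollary~\ref{cor:converter} (i.e. Theorem~\ref{thm:converter} with $\delta=1/m$) and the minimax transfer of Lemma~\ref{lem:minimax-conversion}---makes no use of \emph{which} distributional condition $Q_X$ obeys; it only uses the reward-from-label construction of Definition~\ref{defn:conversion-labels-rewards} and the identity, established there, that for the online learner $\pi_O$ induced by a bandit policy $\pi$ (with the $P^{n_P}$-prefix treated as the input randomness $Z$) one has $\mathbb{E}\,{\cal M}_{n_Q}(\pi_O)=\mathbb{E}\,\textbf{R}_n^Q(\pi)$. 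Hence the only ingredient to replace is the input classification minimax lower bound $b=\{b_m\}$.

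Next I would invoke Theorem~1 of \citet{kpotufe-martinet} for the class ${\cal T}$ of label-distribution pairs induced (via Definition~\ref{defn:conversion-labels-rewards}) by the class ${\cal T}'$ in the statement---pairs $(P,Q)$ satisfying Assumptions~\ref{assumption:lipschitz} and \ref{assumption:dimension} and Definitions~\ref{assumption:margin} and \ref{assumption:transfer-exponent} with the fixed parameters $(\lambda,C_d,d,C_\alpha,\alpha,\delta_0,C_\gamma,\gamma)$. Because Assumption~\ref{assumption:dimension} places near-uniform mass on the support, the Kpotufe--Martinet lower bound specializes to the strong-density exponents, giving a constant $c>0$ with $\sup_{(P,Q)\in{\cal T}}\mathbb{E}\,{\cal E}_Q(\hat h) > c\,(n_P^{(2+d)/(2+d+\gamma)}+m)^{-(\alpha+1)/(2+d)}$ for any classifier $\hat h$ learned on $P^{n_P}\times Q^m$; set $b_m$ equal to this quantity, with $m$ in the role of $n_Q$. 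Then, using the same online-to-batch rate $(F(a))_m=a_m+6\sqrt{2\log(m\sqrt3)/m}+1/m$ as in the proof of Corollary~\ref{cor:lower-bound} and applying Lemma~\ref{lem:minimax-conversion}, I would conclude that $\tfrac12 F^\dagger(b)$ is an online minimax lower bound for ${\cal T}$, so $\sup_{(P,Q)\in{\cal T}}\mathbb{E}\,{\cal M}_{n_Q}(\pi_O)\ge \tfrac{n_Q}{2}(F^\dagger(b))_{n_Q}$. Under hypothesis \eqref{eqn:lower-bound-condition-dm} the additive correction $6\sqrt{2\log(n_Q\sqrt3)/n_Q}+1/n_Q$ is strictly below $\tfrac c2\,(n_P^{(2+d)/(2+d+\gamma)}+n_Q)^{-(\alpha+1)/(2+d)}$, whence $(F^\dagger(b))_{n_Q}\ge b_{n_Q}-[\text{correction}]\ge \tfrac c2\,(n_P^{(2+d)/(2+d+\gamma)}+n_Q)^{-(\alpha+1)/(2+d)}$; combining with $\mathbb{E}\,{\cal M}_{n_Q}(\pi_O)=\mathbb{E}\,\textbf{R}_n^Q(\pi)$ gives the stated bound $\tfrac c4\,n_Q\,(n_P^{(2+d)/(2+d+\gamma)}+n_Q)^{-(\alpha+1)/(2+d)}$.

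The main obstacle is not the conversion plumbing---that is identical to Corollary~\ref{cor:lower-bound}---but matching conventions when citing Theorem~1 of \citet{kpotufe-martinet}: one must check that their lower-bound instances simultaneously meet the Lipschitz, margin, transfer-exponent, and strong-density conditions with the intended constants, and in particular that on those (near-uniform) instances the effective dimension is $d$ so that the excess-risk exponent is indeed $(\alpha+1)/(2+d)$ rather than $(\alpha+1)/(2+\alpha+d)$. This is bookkeeping across the two papers rather than new mathematics, but it is where care is warranted.
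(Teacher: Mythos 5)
Your proposal matches the paper's argument exactly: the paper proves Corollary~\ref{cor:lower-bound-dm} by declaring the proof identical to that of Corollary~\ref{cor:lower-bound}, reusing the same online-to-batch conversion (Theorem~\ref{thm:converter}, Corollary~\ref{cor:converter}, Lemma~\ref{lem:minimax-conversion}) and substituting the strong-density specialization of Theorem~1 of \citet{kpotufe-martinet} as the classification minimax lower bound, which is precisely your plan including the final chain $(F^{\dagger}(b))_{n_Q}\geq b_{n_Q}-[\text{correction}]\geq \tfrac{c}{2}(n_P^{(2+d)/(2+d+\gamma)}+n_Q)^{-(\alpha+1)/(2+d)}$. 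Your closing caveat about verifying that the Kpotufe--Martinet construction realizes the strong-density exponents is the only nontrivial bookkeeping, and the paper likewise handles it only by citation.
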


\begin{rmk}
Similarly to Remark~\ref{rmk:lower-bound-regimes}, the inequality in \eqref{eqn:lower-bound-condition-dm} corresponds to the regime $n_P = \tilde{O}\left(n_Q^{\frac{2+d+\gamma}{2+2\alpha}}\right)$ with $\alpha < d/2$.
\end{rmk}

\end{document}